\documentclass{article}

\usepackage[eandd,final]{neurips_2026}

\usepackage[utf8]{inputenc} 
\usepackage[T1]{fontenc}    
\usepackage{hyperref}       
\usepackage{url}            
\usepackage{booktabs}       
\usepackage{amsmath}
\usepackage{amssymb}
\usepackage{amsfonts}
\usepackage{amsthm}
\usepackage{bm}      
\usepackage{nicefrac}       
\usepackage{microtype}      
\usepackage{xcolor}      
\usepackage[table]{xcolor}
\usepackage{graphicx}
\usepackage{fontawesome5} 
\newtheorem{theorem}{Theorem}[section]

\usepackage{enumitem}
\usepackage{algorithm}
\usepackage{algpseudocode}
\usepackage{adjustbox}
\usepackage{tabularx}
\usepackage{placeins}
\usepackage{multirow}
\definecolor{highlightcell}{RGB}{255,248,220}
\hypersetup{hidelinks}
\usepackage[most]{tcolorbox}

\newcolumntype{C}[1]{>{\centering\arraybackslash}p{#1}}

\newcommand{\highddm}[1]{\cellcolor{red!15}#1}
\title{Your Benchmark Is an Empirical Measure Over Difficulty
}

\author{%
  Yifan Sun\textsuperscript{\dag*}
  \quad~
  Naicheng Yu\textsuperscript{\ddag*}
  \quad~
  Jiawen Gong\textsuperscript{\dag*}
  \quad~
  Jingyan Shen\textsuperscript{\S}
  \quad~
  Huan Zhang\textsuperscript{\dag}
  \\[4pt]
  \textsuperscript{\dag}University of Illinois Urbana-Champaign
  \quad
  \textsuperscript{\ddag}University of California San Diego
  \\
  \textsuperscript{\S}New York University
}

\begin{document}

\maketitle

\vspace{-1.9em}
\begin{center}
  \small
  \href{https://benchmark-difficulty-measure.github.io/}{\faGlobe\enspace\textbf{Project Website:}\enspace\texttt{benchmark-difficulty-measure.github.io}}\\[4pt]
  \href{https://github.com/ASTRAL-Group/benchmark-difficulty-measure}{\faGithub\enspace\textbf{GitHub Code:}\enspace\texttt{https://github.com/ASTRAL-Group/benchmark-difficulty-measure}}\\[4pt]
  \href{https://huggingface.co/datasets/Yifan99/BenchFill}{\raisebox{-0.2em}{\includegraphics[height=1.15em]{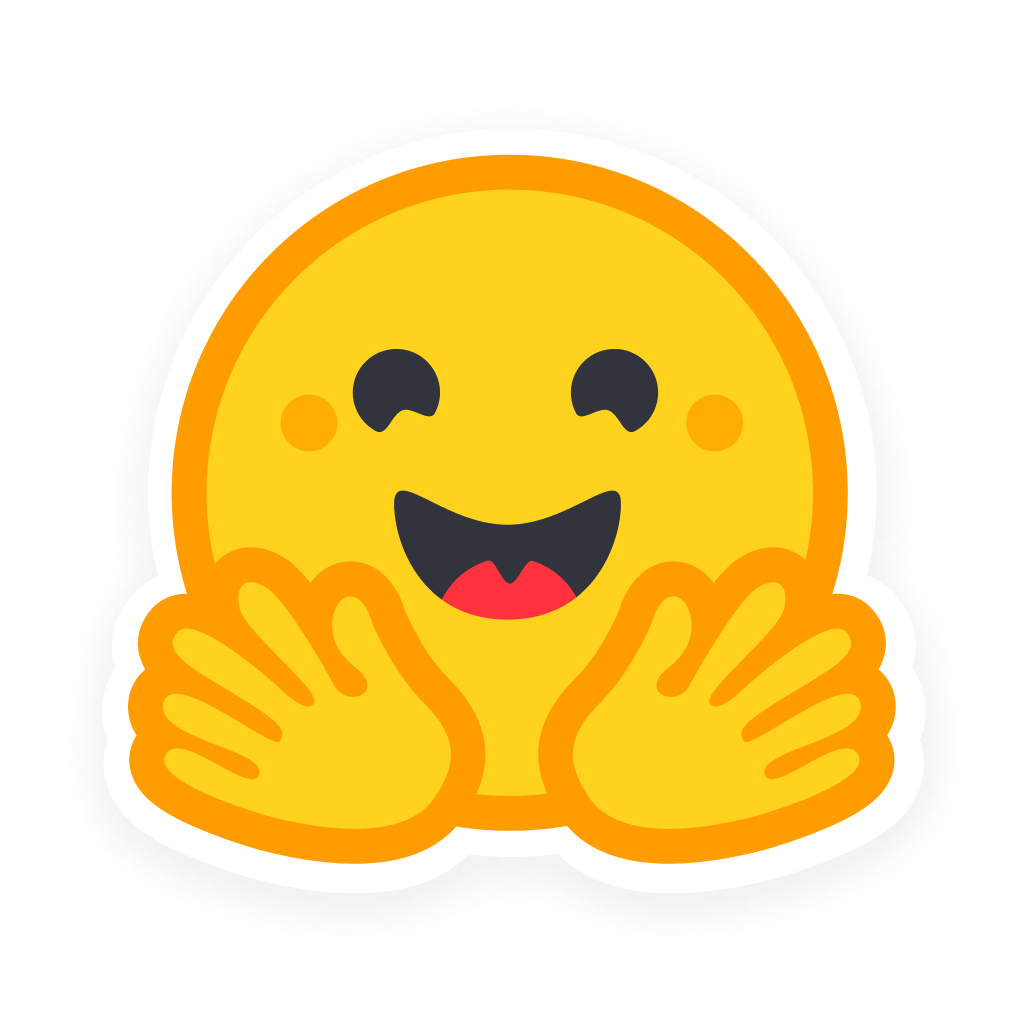}}\enspace\textbf{BenchFill Datasets:}\enspace\texttt{https://huggingface.co/datasets/Yifan99/BenchFill}}
\end{center}
\vspace{0.5em}
\begingroup
\renewcommand{\thefootnote}{\fnsymbol{footnote}}
\footnotetext[1]{%
\begin{tabular}[t]{@{}l@{}}
Equal contribution.\\
Correspondence to Yifan Sun
<\texttt{yifan50@illinois.edu}> and Huan Zhang
<\texttt{huan@huan-zhang.com}>.
\end{tabular}%
}
\endgroup

\begin{abstract}
  A benchmark is not merely a collection of test questions; it naturally induces an empirical measure over difficulty. In this work, we develop this measure-based view of benchmark evaluation. This perspective allows us to identify and formalize two central failure modes: \emph{measure mismatch}, where the benchmark difficulty distribution differs from the intended difficulty distribution, and \emph{coverage gaps}, where uncovered regions along the difficulty axis prevent benchmarks from detecting meaningful localized training progress. We propose two diagnostic metrics that capture these two failure modes and provide rigorous theoretical guarantees. Empirically, we analyze \textbf{25} popular benchmarks spanning \textbf{7} domains using carefully estimated item difficulties. Our results show that many benchmarks implicitly overweight certain difficulty regions, so their leaderboard conclusions should be interpreted with caution; others leave substantial portions of the difficulty axis uncovered, limiting their ability to monitor the model improvement process. Finally, we turn diagnosis into action by introducing \textsc{BenchFill}, a simple difficulty-targeted item rewriting algorithm that effectively reduces local coverage gaps.
\end{abstract}
\section{Introduction}

Benchmark evaluation looks simple: we run models on a set of questions and average the results. The resulting score is then used in two main ways: to compare models on leaderboards~\citep{alzahrani2024benchmarks,wang2018glue,liang2022holistic,myrzakhan2024open}, and to monitor progress during training~\citep{liu2025prorl,hu2025open,he2025skywork,bercovich2025llama,luo2025deepscaler}. But this simplicity hides a basic issue: the conclusion depends not only on how well models perform, but also on how benchmark questions are distributed~\cite{siska2024examining,zhang2026rankllm}.

Consider first the leaderboard setting in Fig.~\ref{fig:teaser}(a), where we want to compare two models. Suppose they perform similarly on easy questions but differ substantially on hard ones. If the benchmark places most of its weight on easy questions, while the difficulty distribution of interest places more weight on harder questions, the two evaluations can lead to different conclusions for the same pair of models: the benchmark may suggest that the two models are nearly tied, even though under the desired difficulty distribution they are clearly separated.  Now consider the training setting in Fig.~\ref{fig:teaser}(b), where we want to track whether a model is improving. Suppose the model improves \textit{substantially}  over a local difficulty range. If the benchmark has few or no questions in that region, the benchmark score may barely change, even though the model has made real progress.

These two failure cases suggest that benchmark conclusions depend critically on how questions are distributed. To study this structure systematically, we organize benchmark questions along a \textbf{difficulty} axis, which provides a natural progression from easier to harder problems. The key conceptual step is to use difficulty as a way to define the distributional structure of the benchmark itself. This view recasts a benchmark from a collection of test questions into an \textbf{empirical measure} over difficulty, with each question contributing equal evaluation mass at its difficulty level.

This measure-theoretic formalization is useful because it turns the two failure cases above into precise and analyzable problems about the benchmark-induced difficulty measure. The first is a \textbf{measure mismatch} problem: the benchmark may distribute its mass differently from the target difficulty distribution that represents the intended evaluation objective, so benchmark conclusions can differ from the conclusions under the target difficulty distribution. The second is a \textbf{coverage gap} problem. Because the benchmark is an empirical measure supported only on finitely many difficulty points, there may be intervals of difficulty with little or no benchmark mass. A substantial performance gain concentrated in such an interval may therefore produce little or no change in the benchmark score. To our knowledge, this is the first work to identify and formalize measure mismatch and coverage gaps as two fundamental failure modes of benchmark evaluation.

These two problems naturally lead to two simple diagnostic metrics: the Wasserstein distance~\cite{villani2009optimal} between the benchmark measure and a target difficulty distribution, and the maximum gap between adjacent benchmark difficulty points. We provide theoretical results showing that these quantities directly characterize distributional mismatch and the detectability of localized improvements. Empirically, we analyze \textbf{25} popular benchmarks spanning \textbf{7} domains. We estimate question difficulty using Item Response Theory and compute the proposed metrics. Our results show that both issues are prevalent across benchmarks.

\textbf{Our key contributions are summarized as follows:}
\vspace{0.2cm}
\begin{itemize}[wide, labelwidth=!, noitemsep, labelindent=0pt,topsep=-\parskip]
    \item We introduce a new measure-based perspective on benchmark evaluation, representing a benchmark as an empirical measure over difficulty. This perspective formalizes measure mismatch and coverage gaps, motivates two diagnostic metrics, and yields rigorous theoretical guarantees.

    \item We conduct an empirical study across \textbf{25} benchmarks spanning \textbf{7} domains, showing that both difficulty-distribution mismatch and large coverage gaps are widespread in current benchmarks.

    \item We propose \textsc{BenchFill}, a difficulty-targeted gap-filling algorithm. With only three inserted items per benchmark, \textsc{BenchFill} reduces the sum of the top three difficulty gaps by $21.09\%$--$38.88\%$.
\end{itemize}

\begin{figure}[t]
    \centering
    \resizebox{\textwidth}{!}{\includegraphics{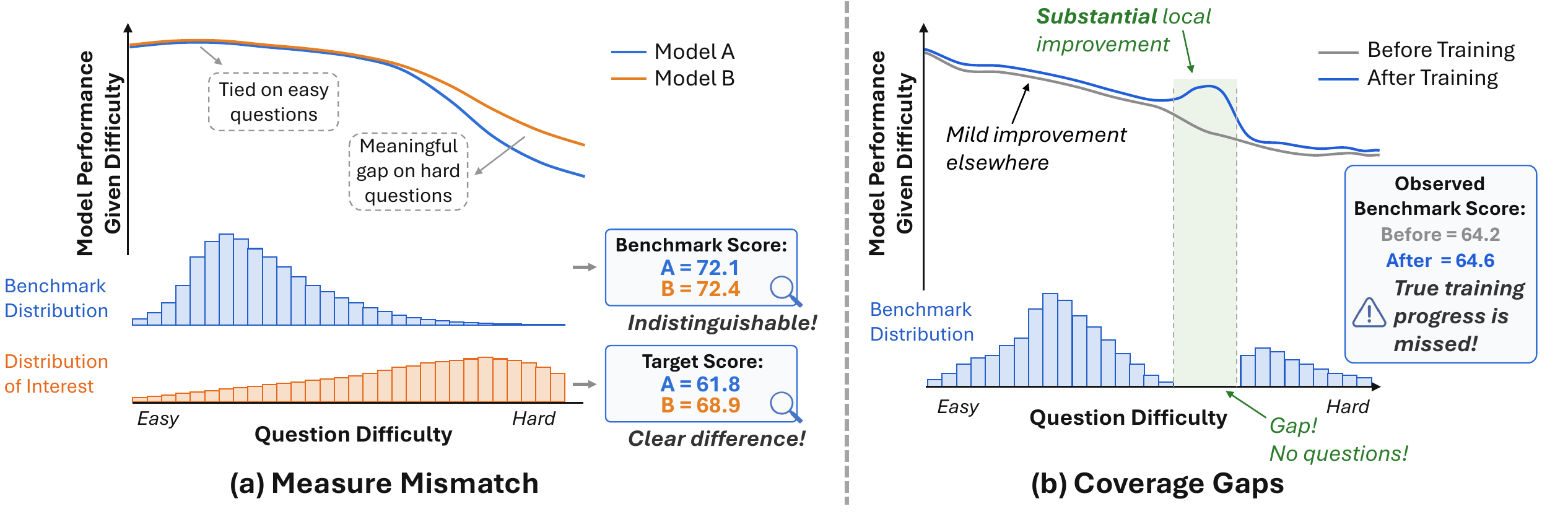}}
      \vspace{-4mm}
    \caption{\textbf{Two benchmark failure cases.} In both panels, the curves show true model \textit{conditional} performance \textit{as a function of question difficulty}.
    (a) \textbf{Measure Mismatch.}  
    The benchmark emphasizes easy questions, where the models perform similarly, while the difficulty distribution we care about places more weight on harder questions, where they differ substantially.
    Because the benchmark and the target distribution weight the difficulty axis differently, they lead to different conclusions for the same two models.
    (b) \textbf{Coverage Gaps.} A model actually improves \textit{substantially} over a localized difficulty region. Because the benchmark has no questions in that region, this improvement is not reflected in the observed benchmark score, causing meaningful training progress to be missed.}
    \label{fig:teaser}
  \vspace{-6mm}
\end{figure}
\section{Formalization and Theory}

\paragraph{Difficulty space and model performance function.} 
We represent the difficulty of a benchmark item, e.g., a question, by a continuous variable $z\in[0,1]$, where larger values correspond to harder items. For a model $\theta$, let $s_\theta(z)\in[0,1]$ denote its conditional expected performance on items with difficulty $z$, e.g., pass@1.
  \vspace{-2mm}
\paragraph{Benchmark distribution and score.}
Let a benchmark $B=\{x_i\}_{i=1}^n$ consist of $n$ items, and let
$z_i\in[0,1]$ denote the difficulty of item $x_i$. The benchmark score is $S_B(\theta):=\frac{1}{n}\sum_{i=1}^n s_\theta(z_i)$. This average implicitly induces an empirical measure over the difficulty axis,
\[
\nu_B=\frac{1}{n}\sum_{i=1}^n \delta_{z_i},
\]
where $\delta_z$ denotes a point mass at difficulty $z$. Under this representation, the benchmark score is exactly an integral with respect to the benchmark-induced difficulty measure, $S_B(\theta)=\int s_\theta(z)\,d\nu_B(z)$.

\paragraph{Performance difference between two models.}
For two models $\theta$ and $\theta'$, define the \emph{difficulty-conditioned performance difference} as $\Delta(z):=s_{\theta'}(z)-s_\theta(z)$, where the dependence of $\Delta$ on $(\theta',\theta)$ is omitted for notational simplicity. The performance difference \textit{observed on benchmark $B$} is
\[
D_B(\theta',\theta)
:= S_B(\theta')-S_B(\theta)
= \frac{1}{n}\sum_{i=1}^n \Delta(z_i)
= \int \Delta(z)\,d\nu_B(z).
\]

\subsection{Measure Mismatch}
\paragraph{Benchmark-induced coupling.}
The quantity $D_B(\theta',\theta)$ implicitly couples two distinct objects: the \textit{model-dependent function} $\Delta(z)$ and the \textit{benchmark-dependent measure} $\nu_B$. Thus, a leaderboard comparison is not merely determined by the difficulty-conditioned performance difference between the models; it also depends on how the benchmark allocates mass over the difficulty axis. If this allocation differs from the difficulty distribution of interest, the benchmark aggregates the same model difference under a mismatched measure. This mismatch can distort the observed model gap and, in severe cases, change the induced ordering.

\paragraph{Target distribution.}
Formally, we introduce a target distribution
$
\mu \in \mathcal{P}([0,1]),
$
which represents the intended evaluation distribution over the difficulty space. This target distribution may encode the desired application mix. Under $\mu$, the target score of model $\theta$ and the target performance difference between $\theta'$ and $\theta$ are defined as
\[
S_\mu(\theta)
:=
\int s_\theta(z)\, d\mu(z),
\qquad
D_\mu(\theta',\theta)
:=
S_\mu(\theta') - S_\mu(\theta)
=
\int \Delta(z)\, d\mu(z).
\]

The discrepancy between $D_B(\theta',\theta)$ and $D_\mu(\theta',\theta)$ arises from the mismatch between the benchmark measure $\nu_B$ and the target distribution $\mu$. This viewpoint allows us to study distortions in leaderboard comparisons as a distribution mismatch problem.

\paragraph{A metric for distribution mismatch.}
We measure how well the difficulty distribution of a benchmark matches a target difficulty distribution using the 1-Wasserstein distance. Specifically, for a benchmark \(B\) with empirical difficulty measure \(\nu_B\) and a target distribution \(\mu\) on \([0,1]\), we define its \emph{Difficulty Distribution Mismatch (DDM)} as
\[
\mathrm{DDM}(B;\mu) := W_1(\nu_B,\mu)
=
\inf_{\pi \in \Pi(\nu_B,\mu)}
\int_{[0,1]^2} |u-v|\, d\pi(u,v),
\]
where $\Pi(\nu_B,\mu)$ denotes the set of all couplings of $\nu_B$ and $\mu$. Intuitively, $W_1(\nu_B,\mu)$ measures how much probability mass must be transported, and how far, to transform the benchmark difficulty distribution into the target one. The following theorem shows that this mismatch directly controls the distortion between benchmark-based and target-based performance differences.

\begin{theorem}[Distribution mismatch controls score distortion and rank stability]
\label{thm:w1_rank_stability}
Suppose the performance difference function $\Delta$ is $L$-Lipschitz. Then
\[
\bigl|D_B(\theta',\theta) - D_\mu(\theta',\theta)\bigr|
\le
L\, W_1(\nu_B,\mu).
\]
Consequently, if $|D_\mu(\theta',\theta)| > L\, W_1(\nu_B,\mu)$, then
\[
\operatorname{sign}\bigl(D_B(\theta',\theta)\bigr)
=
\operatorname{sign}\bigl(D_\mu(\theta',\theta)\bigr),
\]
so the benchmark-induced ordering agrees with the target ordering, i.e., no rank flip can occur.
\end{theorem}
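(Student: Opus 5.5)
The plan is to write the difference of the two performance gaps as a single integral against the difference of measures, and then control it through a coupling. By the definitions in this section, $D_B(\theta',\theta)-D_\mu(\theta',\theta)=\int \Delta\,d\nu_B-\int\Delta\,d\mu$. Fix any coupling $\pi\in\Pi(\nu_B,\mu)$. Because the marginals of $\pi$ are $\nu_B$ and $\mu$, we have
\[
\int \Delta\,d\nu_B-\int\Delta\,d\mu=\int_{[0,1]^2}\bigl(\Delta(u)-\Delta(v)\bigr)\,d\pi(u,v).
\]
The $L$-Lipschitz assumption gives $|\Delta(u)-\Delta(v)|\le L|u-v|$. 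Hence the absolute difference is at most $L\int|u-v|\,d\pi$.

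Next, take the infimum over all couplings $\pi$, which gives the bound $L\,W_1(\nu_B,\mu)$. Integrability is not an issue: $\Delta$ is Lipschitz, hence continuous and bounded on the compact set $[0,1]$, so every integral is finite. Alternatively, one could quote Kantorovich--Rubinstein duality, $W_1=\sup_{\mathrm{Lip}(f)\le1}\int f\,d(\nu_B-\mu)$, and apply it to $f=\Delta/L$. The direct coupling argument above is elementary and avoids needing duality at all. The degenerate case $L=0$ means $\Delta$ is constant, and then both gaps are equal, so the bound holds trivially.

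For the rank-stability consequence, write $\varepsilon:=D_B-D_\mu$, so that $|\varepsilon|\le L W_1<|D_\mu|$. If $D_\mu>0$, then $D_B=D_\mu+\varepsilon>D_\mu-|D_\mu|=0$. The case $D_\mu<0$ is symmetric. Note that the strict inequality $|D_\mu|>LW_1\ge 0$ forces $D_\mu\neq0$, so the signs agree and no rank flip occurs.

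No step is a serious obstacle. The only point needing care is justifying the rewrite as an integral against a coupling, which relies solely on the marginal property of $\pi$. I would state this explicitly instead of invoking duality.
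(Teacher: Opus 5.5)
Your proposal is correct and follows the paper's proof essentially line for line. Like the paper, you fix an arbitrary coupling $\pi\in\Pi(\nu_B,\mu)$, rewrite $D_B-D_\mu$ as $\int(\Delta(u)-\Delta(v))\,d\pi(u,v)$ via the marginal property, apply the Lipschitz bound, take the infimum over couplings, and then get sign agreement from the strict inequality (your remarks on integrability and Kantorovich--Rubinstein duality are fine but not needed).
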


Theorem~\ref{thm:w1_rank_stability} shows that $W_1(\nu_B,\mu)$ bounds how much the benchmark-induced score difference can differ from the target score difference. A small $W_1(\nu_B,\mu)$ means that the benchmark provides a faithful approximation to the target evaluation, while a large $W_1(\nu_B,\mu)$ indicates that the benchmark may overemphasize some difficulty regions and underrepresent others, potentially distorting model comparisons and even changing their induced ranking. App.~\ref{sec:app_proof_mismatch} provides the full proof and extends this result to the finite-sample setting.

\paragraph{A principled target distribution without prior information.}
When prior information is available, it is natural to choose $\mu$ to reflect the intended evaluation objective over the difficulty space. In many settings, however, such prior knowledge is unavailable. In that case, choosing a non-uniform target would implicitly privilege some difficulty regions over others without evidence. To characterize a principled default choice, we consider an agnostic setting in which the location of the performance difference along the difficulty axis is completely unknown. We consider the class
\[
\mathcal{F}_{\rho}
:=
\left\{
\Delta:[0,1]\to[-1,1]\ \text{measurable} :
\int_0^1 \Delta(z)\,dz = \rho
\right\},
\qquad \rho\in(-1,1).
\]
This class encodes an uncertainty set: the signed performance difference has a fixed average level $\rho$ over the difficulty axis, but its location is otherwise unrestricted. 

For any candidate target distribution $\mu \in \mathcal P([0,1])$, define its worst-case retained difference by
\[
V_{\rho}(\mu)
:=
\inf_{\Delta \in \mathcal{F}_{\rho}}
\int \Delta(z)\, d\mu(z).
\]
The following theorem shows that, under complete uncertainty about where along the difficulty axis the performance difference occurs, the uniform distribution is the \textit{unique} maximin target, and is the only choice whose worst-case retained difference remains equal to the average difference $\rho$.

\begin{theorem}[Uniform target as the unique maximin choice]
\label{thm:uniform_maximin}
For $\rho \in (-1,1)$, we have
\[
\sup_{\mu \in \mathcal P([0,1])} V_{\rho}(\mu) = \rho,
\]
and the unique maximizer is the uniform distribution $\lambda=\mathrm{Unif}[0,1]$.
\end{theorem}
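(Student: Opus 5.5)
We prove the statement in three steps: the uniform target attains $\rho$, no target can exceed $\rho$, and any non-uniform target falls strictly below $\rho$.

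\emph{Step 1: the uniform target attains $\rho$.} For $\mu=\lambda$, every $\Delta\in\mathcal F_\rho$ satisfies $\int\Delta\,d\lambda=\rho$ by definition of the class. Hence $V_\rho(\lambda)=\rho$, and in particular $\sup_\mu V_\rho(\mu)\ge\rho$.

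\emph{Step 2: no target exceeds $\rho$.} We exhibit, for each $\mu$, a $\Delta\in\mathcal F_\rho$ with $\int\Delta\,d\mu\le\rho$. Averaging over translations (mod 1) of a fixed admissible function would do it. A simpler route is to split by Lebesgue decomposition, $\mu=\mu_{ac}+\mu_s$ with $\mu_{ac}=f\,d\lambda$.

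\begin{itemize}[noitemsep]
\item \textbf{Case $\mu_s\ne0$.} Choose a Lebesgue-null set $N$ carrying $\mu_s$. Set $\Delta=-1$ on $N$ and $\Delta\equiv c$ elsewhere. Taking $c=\rho$ keeps $\Delta\in\mathcal F_\rho$, since $N$ is null. Then $\int\Delta\,d\mu=\rho\,\mu_{ac}([0,1])-\mu_s([0,1])<\rho$, because $\rho>-1$.
\item \textbf{Case $\mu=f\,d\lambda$, $f$ not a.e.\ equal to $1$.} This is the main step. Since $\int f=1$ and $f\not\equiv1$, there exist $t$ and positive-measure sets $A=\{f>t\}$, $C=\{f<t\}$; one can take $t=1$. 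Start from $\Delta_0\equiv\rho$ and perturb to $\Delta=\rho-\varepsilon\mathbf 1_{A'}+\varepsilon'\mathbf 1_{C'}$, where $A'\subseteq A$ and $C'\subseteq C$ have positive measure and $\varepsilon|A'|=\varepsilon'|C'|$. Since $\rho\in(-1,1)$ lies strictly inside the range, small enough $\varepsilon,\varepsilon'$ keep $\Delta\in[-1,1]$, and the Lebesgue integral stays $\rho$. The $\mu$-integral changes by $-\varepsilon\int_{A'}f+\varepsilon'\int_{C'}f<-\varepsilon|A'|+\varepsilon'|C'|=0$. So $\int\Delta\,d\mu<\rho$.
\end{itemize}

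This case is where strictness of $\rho\in(-1,1)$ is essential. With $\rho=\pm1$ the class $\mathcal F_\rho$ collapses to the constant $\pm1$ (a.e.), every $\mu\ll\lambda$ would tie, and uniqueness would fail.

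\emph{Step 3: conclusion.} Steps 1 and 2 give $\sup_\mu V_\rho(\mu)=\rho$, attained at $\lambda$. For any $\mu\ne\lambda$, Step 2 produces a $\Delta\in\mathcal F_\rho$ with $\int\Delta\,d\mu<\rho$, so $V_\rho(\mu)<\rho$. Therefore $\lambda$ is the unique maximizer.

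The main obstacle is the absolutely continuous case of Step 2, specifically ensuring the perturbation stays inside $[-1,1]$ while staying in $\mathcal F_\rho$. Starting from the interior constant $\Delta_0\equiv\rho$ resolves this cleanly.
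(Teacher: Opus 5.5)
Your proof is correct, and it reaches the key inequality by a different route than the paper. The two arguments share a frame. Both split $\mu=f\,d\lambda+\mu_s$ and neutralize the singular part by pushing $\Delta$ to its minimum on a Lebesgue-null carrier $N$ of $\mu_s$. The paper first rescales to $G=(\Delta+1)/2\in[0,1]$ with $\int_0^1 G=\beta=(1+\rho)/2$ and then sets $G=0$ on $N$, which is exactly your $\Delta=-1$ on $N$.

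The difference is in the absolutely continuous part. The paper solves that subproblem exactly with the bathtub principle, $\inf\{\int Gf\,dz: 0\le G\le1,\ \int_0^1 G=\beta\}=\int_0^\beta f^\uparrow(u)\,du$, where $f^\uparrow$ is the nondecreasing rearrangement. It then uses monotonicity of $f^\uparrow$ to get $\int_0^\beta f^\uparrow\le\beta\int_0^1 f\le\beta$. Uniqueness comes from a separate equality-case analysis: equality forces $\mu_s=0$ and $f^\uparrow\equiv1$ a.e. You never compute the infimum. For each $\mu\neq\lambda$ you exhibit one competitor that beats $\rho$: a mass-preserving perturbation of the interior constant $\rho$ that shifts a little mass from $\{f>1\}$ to $\{f<1\}$. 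Strictness, and hence uniqueness, follows directly, using only the fact that both sets have positive measure.

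Your route is more elementary and self-contained. The paper's route gives more information. Combined with the trivial reverse inequality (the singular part cannot lower $\int G\,d\mu$ below $\int Gf\,dz$), it identifies the worst-case $\Delta$ and the exact value $V_\rho(\mu)=2\int_0^{(1+\rho)/2}f^\uparrow(u)\,du-1$. That quantifies how much worst-case difference a non-uniform target loses, rather than only showing the loss is positive.
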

Theorem~\ref{thm:uniform_maximin} does not claim that the uniform distribution is universally optimal. Rather, it justifies the uniform target as a principled default when no prior information over the difficulty space is available. In that case, adopting the uniform target avoids introducing an additional preference over difficulty regions and provides a neutrality criterion for benchmark evaluation: the evaluation does not privilege any difficulty region a priori. Full proof is provided in App.~\ref{sec:app_proof_mismatch}.

\subsection{Coverage Gaps}

\paragraph{Discrete benchmark measures may fail to monitor local training improvements.}
Benchmarks are often used not only for final model comparison, but also to monitor training progress~\citep{he2025skywork,yu2025dapo,luo2025deepscaler}. However, training improvements are often local rather than uniform: a model may improve substantially over a particular difficulty range while improving only modestly elsewhere.

Recall that a benchmark score is an integral with respect to the benchmark empirical measure $\nu_B$. Since $\nu_B$ is a \textit{discrete} measure supported only on the finitely many benchmark difficulty points, it assigns zero mass to every interval that contains no benchmark item. Consequently, even if the model improves substantially on a nontrivial contiguous difficulty region, the benchmark score may not reflect this improvement when the region contains no benchmark item.  In this sense, a benchmark can fail as a progress monitor when its local resolution is insufficient.



\paragraph{Maximum gap.}
We first characterize the local resolution of a benchmark through its \emph{maximum gap}. Let $0 \le z_{(1)} \le \cdots \le z_{(n)} \le 1$ be the ordered benchmark difficulty points, and define $z_{(0)}:=0$ and $z_{(n+1)}:=1$. The \emph{maximum gap} of benchmark $B$ is
\begin{equation}
G(B):=\max_{0\le j\le n}\bigl(z_{(j+1)}-z_{(j)}\bigr)
\label{eq:max_gap}
\end{equation}
i.e., the length of the largest interval on the difficulty axis that contains no benchmark point.

\paragraph{A hypothesis testing formulation.}
To formalize whether a benchmark can reliably detect localized training improvements, we cast the problem as a hypothesis test. Under the null hypothesis, there is no improvement. Under the alternative, the post-improvement performance exceeds the pre-improvement performance by at least $a$ over some contiguous difficulty interval of width $w$. The following theorem shows that the benchmark maximum gap $G(B)$ is the exact asymptotic minimax detection boundary.

\begin{theorem}[Maximum gap as an exact asymptotic minimax detection boundary]
\label{thm:max-gap-detection-boundary}
Fix a test level $\alpha\in(0,1)$ and $a\in(0,1]$. Let $s^-,s^+:[0,1]\to[0,1]$ denote the pre- and post-improvement performance curves, respectively. For a benchmark $B$, let $\Pi_\alpha^*(B;w,a,k)$ denote the minimax worst-case power of level-$\alpha$ tests based on $k$ repeated samples per benchmark item for testing
\[
H_0:\ s^-=s^+
\qquad\text{versus}\qquad
H_1:\ \exists\ \text{interval } I\subset[0,1],\ |I|=w,\  s^+(z)-s^-(z)\ge a\ \ \forall z\in I.
\]
\begin{enumerate}
    \item If $w \le G(B)$, then $\Pi_\alpha^*(B;w,a,k)=\alpha$ for every $k\in\mathbb N$.
    \item If $w > G(B)$, then $\lim_{k\to\infty}\Pi_\alpha^*(B;w,a,k)=1$.
\end{enumerate}
Therefore, $G(B)$ exactly characterizes the asymptotic minimax detection boundary for benchmark $B$.
\end{theorem}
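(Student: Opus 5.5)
The plan is to first fix the observation model, which the statement leaves implicit. For each item $i$ we observe $k$ i.i.d.\ Bernoulli outcomes with mean $s^-(z_i)$ and $k$ with mean $s^+(z_i)$. A test is any measurable map from these $2nk$ outcomes to $\{0,1\}$ with size at most $\alpha$ under every null configuration $s^-=s^+$. The worst-case power is the infimum of the rejection probability over all pairs $(s^-,s^+)$ in $H_1$, and $\Pi_\alpha^*$ is the supremum of this worst-case power over level-$\alpha$ tests. The key structural fact is that the data distribution depends on $(s^-,s^+)$ only through the $2n$ values $s^\pm(z_i)$. Both parts reduce to whether some interval of width $w$ can avoid every $z_i$.

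\textbf{Part 1 ($w\le G(B)$).} Choose $j$ with $z_{(j+1)}-z_{(j)}=G(B)\ge w$, and let $I$ be an interval of width $w$ inside $[z_{(j)},z_{(j+1)}]$. To keep $I$ free of benchmark points I will use the open gap $(z_{(j)},z_{(j+1)})$. When $w=G(B)$ exactly, the closed interval touches the endpoints, so I read $I$ as open or half-open there. This boundary convention is the one delicate point of the part and should be stated explicitly.

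Next I construct an alternative that the data cannot distinguish from a null. Take $s^-\equiv c$ for some constant $c\le 1-a$, for instance $c=0$. Take $s^+$ equal to $c$ off $I$ and equal to $c+a$ on $I$. Then $s^+(z_i)=s^-(z_i)$ for every $i$, so the joint law of the observations coincides with the law under the null pair $s^-=s^+\equiv c$. Any level-$\alpha$ test therefore rejects this alternative with probability at most $\alpha$, giving $\Pi_\alpha^*\le\alpha$.

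For the matching lower bound, the randomized test that rejects with probability $\alpha$ has power exactly $\alpha$. Hence $\Pi_\alpha^*=\alpha$ for every $k$.

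\textbf{Part 2 ($w>G(B)$).} The first step is a pigeonhole claim: every interval $I\subset[0,1]$ of width $w$ contains some $z_{(j)}$. Suppose not. Then $I$ lies inside some gap between consecutive points among $z_{(0)}=0,\dots,z_{(n+1)}=1$. That gap has length at least $w>G(B)$, a contradiction. Endpoints $0$ and $1$ are not items, but $I$ lies in $[0,1]$, so it must sit in a gap between consecutive order statistics including the sentinels, and the argument still holds. Consequently, under $H_1$ there is some $i$ with $s^+(z_i)-s^-(z_i)\ge a$.

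The test I will use is the following. For each $i$, form $T_i=\hat p_i^+-\hat p_i^-$ from the empirical means. Reject if $\max_i T_i> t_k$, where $t_k=\sqrt{\log(2n/\alpha)/k}$.

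For the level, Hoeffding's inequality gives $\Pr(T_i>t_k)\le \exp(-k t_k^2/1)$ for the difference of two averages of $[0,1]$ variables with equal means; I will check the constant. A union bound over the $n$ items then shows the size is at most $\alpha$, uniformly over the null.

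For the power, take the item with gap at least $a$. Hoeffding again gives $\Pr(T_i\le t_k)\le \exp(-k(a-t_k)^2/\cdot)$ once $t_k<a$. This bound is uniform over all alternatives, so the worst-case power tends to $1$.

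The main obstacle I expect is not analytic. It is pinning down the model conventions: the observation model, the open versus closed treatment of $I$ at the boundary case $w=G(B)$, and allowing randomized tests so that power exactly $\alpha$ is attainable. The Hoeffding steps are routine.
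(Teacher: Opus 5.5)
Your proposal is correct and matches the paper's proof essentially step for step. In Part 1 the paper likewise places a constant-plus-bump alternative on an interval inside the open maximal gap $(z_{(j)},z_{(j+1)})$, which implicitly makes $I$ open when $w=G(B)$, and uses the trivial randomized test for the matching lower bound. In Part 2 it uses the same max test, with a Hoeffding plus union-bound argument for the level and a Hoeffding lower-tail argument at an item inside $I$ for the power.

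The only difference is the constant. The paper applies Hoeffding to the paired differences $Y^+_{i,r}-Y^-_{i,r}\in[-1,1]$, which gives $\exp(-kt^2/2)$ and the threshold $\sqrt{2\log(n/\alpha)/k}$. Your bound $\exp(-kt^2)$ is also valid, since you can treat the $2k$ independent terms of range $1/k$ separately. With your threshold it gives size at most $\alpha/2$. It does, however, rely on the pre- and post-improvement samples being mutually independent, which the model assumes.
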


Theorem~\ref{thm:max-gap-detection-boundary} shows that the discrete geometry of the benchmark induces a sharp detectability phase transition. If the width of a local improvement does not exceed the maximum adjacent difficulty gap, then no level-$\alpha$ test can achieve nontrivial worst-case power, regardless of how many repeated evaluations are performed. In other words, such an improvement can be completely invisible to the benchmark in the worst case. Once the improvement width exceeds the maximum gap, repeated evaluation can eventually make the signal detectable. In this sense, $G(B)$ characterizes the intrinsic local resolution limit of benchmark $B$. 
The full proof is given in App.~\ref{sec:app_proof_missed}.
\section{Empirical Diagnosis of Current Benchmarks}

\begin{figure}[t]
    \centering
    \resizebox{\textwidth}{!}{\includegraphics{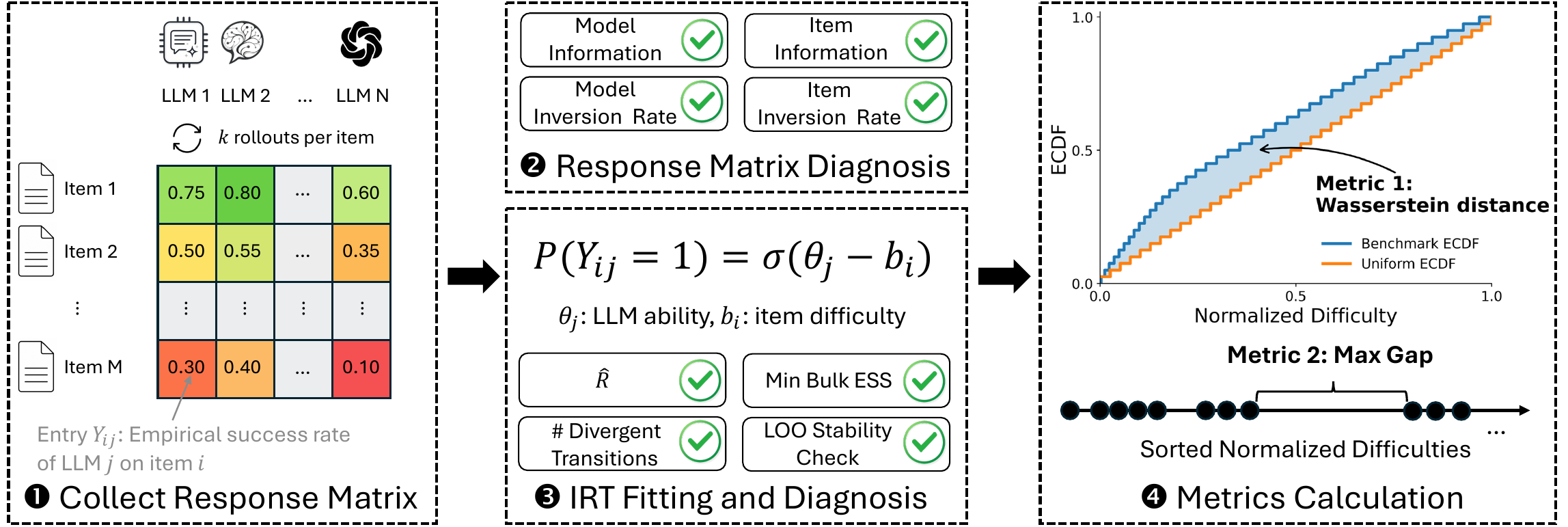}}
      \vspace{-4mm}
    \caption{\textbf{Overview of our benchmark difficulty estimation and metric calculation pipeline.} We collect empirical success rates from multiple LLMs on benchmark items, diagnose the response matrix, fit a 1PL IRT model to estimate item difficulties, perform post-fit IRT diagnostics, and compute two metrics over the normalized difficulty distribution.
    }
    \label{fig:pipeline}
  \vspace{-4mm}
\end{figure}
We analyze a collection of widely used benchmarks from diverse domains using the two metrics proposed earlier. Fig.~\ref{fig:pipeline} provides an overview of our difficulty estimation and metric calculation pipeline.  Specifically, we consider \textbf{25} popular benchmarks spanning math, coding, agentic tasks, logic, science, instruction-following, and medicine. Detailed information about these benchmarks is provided in App.~\ref{sec:app_benchmark_details}.

\subsection{Difficulty Estimation and Metrics Calculation}

\paragraph{Method: Item Response Theory.}
To estimate benchmark item difficulty, one may be tempted to use each item's average failure rate across models~\citep{yu2025dapo,sun2026improving,wen2025light}. However, item difficulty is inherently relative to model ability: it should depend not only on how many models fail an item, but also on how capable the models that fail or solve it are. 
We therefore use Item Response Theory (IRT), an established psychometric framework~\citep{lord1952theory, rasch1993probabilistic} that has been widely adopted for difficulty estimation in LLM benchmark analysis~\citep{ding2024easy2hard, rodriguez2021evaluation, hofmann2025fluid, kipnis2024metabench, polo2024tinybenchmarks, zhou2026lost}. IRT models the probability that a model solves an item as a function of both model ability and item difficulty. In the commonly used one-parameter logistic model (1PL), each model $j$ is assigned an ability parameter $\theta_j$, and each item $i$ is assigned a difficulty parameter $b_i$. The probability that model $j$ correctly solves item $i$ is modeled as
\[
\Pr(Y_{ij}=1)=\sigma(\theta_j-b_i),
\]
where $\sigma(\cdot)$ is the logistic function. Intuitively, items with larger $b_i$ require higher model ability to be solved correctly, and the estimated $b_i$ therefore provides a natural measure of item difficulty. For multiple-choice benchmarks, the model above can be naturally extended with a guessing parameter.

In practice, difficulty estimation begins by collecting the performance of multiple models on benchmark items, forming a response matrix $Y \in \mathbb{R}^{n \times m}$, where $Y_{ij}$ denotes the empirical success rate of model $j$ on item $i$. The latent abilities $\{\theta_j\}_{j=1}^m$ and item difficulties $\{b_i\}_{i=1}^n$ are then estimated by fitting the likelihood induced by the IRT model. 
To ensure reliable difficulty estimation, we take several practical measures, including \textbf{careful model selection} and \textbf{rigorous IRT fitting diagnostics}.

\paragraph{Model selection.}
Accurate difficulty estimation requires models with diverse capability levels. To ensure sufficient ability coverage, we deliberately select models spanning a wide range of capabilities. For non-agentic benchmarks, we select $12$ LLMs, ranging from $1.7$B to $397$B parameters, using the public leaderboard from Text Arena\footnote{\url{https://arena.ai/leaderboard}}. Models are first ordered by their overall capability aggregated across multiple domains, then divided into several approximately equal intervals along this ranking. We perform stratified sampling by selecting one model from each interval and additionally include a few models from both the top and bottom of the ranking to better capture the extremes of the capability spectrum. Each selected model is evaluated on each benchmark item with up to $32$ rollouts using the model-recommended temperature and top-$p$. For agentic benchmarks, tasks typically require strong proprietary models. We therefore rely on the evaluation results from HAL~\cite{hal}, which reports the performance of $15$--$66$ state-of-the-art closed-source LLMs on agentic task benchmarks. Detailed information about the models is provided in App.~\ref{sec:app_model_details}.

\paragraph{IRT fitting and diagnostics.}
We estimate the IRT parameters using Bayesian inference via MCMC. To ensure reliable difficulty estimation, we perform a series of diagnostics both \textit{before} and \textit{after} IRT fitting. Before fitting, we examine the informativeness and structural consistency of the response matrix using $4$ metrics. After fitting, we assess convergence and sampling quality using $3$ standard Bayesian diagnostics, including the Gelman--Rubin statistic ($\hat{R}$), effective sample size (ESS), and the number of divergent transitions. All benchmarks pass all $7$ diagnostic checks.  
For non-agentic benchmarks, we further conduct a leave-one-out (LOO) robustness analysis. Results suggest that our choice of $12$ models is sufficient for stable difficulty estimation. To support future work, we also publicly release the carefully curated difficulty estimates for the $25$ benchmarks\footnote{\url{https://github.com/ASTRAL-Group/benchmark-difficulty-measure/tree/main/data/benchmarks}}. Detailed definitions and diagnostic results are provided in App.~\ref{sec:app_irt}. We also provide additional model-comparison analyses against 2PL and multidimensional IRT models in App.~\ref{app:irt_model_comparison}. 



\paragraph{Proposed metric calculation.}
For each benchmark, after estimating item difficulties, we first normalize them to $[0,1]$ and compute their empirical cumulative distribution function (ECDF). Since in general no additional prior information over the difficulty space is available, we quantify distribution mismatch by computing the 1-Wasserstein distance between the benchmark empirical measure and a same-size empirical \textit{uniform reference} on $[0,1]$. In one dimension, this is equivalently the $L_1$ distance between their ECDFs. We compute the maximum gap according to Eq.~\eqref{eq:max_gap} and also report the sum of the five largest adjacent gaps as a reference metric. We provide posterior uncertainty analyses for both metrics in App.~\ref{app:metric_uncertainty}.

\subsection{Results and Analysis} 

\paragraph{Several benchmarks exhibit large deviations from the neutral uniform target.} Tab.~\ref{tab:main_result} shows that 10 out of 25 benchmarks have $\mathrm{DDM}\ge 0.10$, indicating that their empirical difficulty distributions must be transported by an average distance of at least 10\% of the full difficulty range to match the same-size uniform reference. These benchmarks therefore encode a strong implicit difficulty weighting: their aggregate scores are more sensitive to overrepresented difficulty regions and less sensitive to underrepresented ones. Non-uniformity is not inherently problematic when it matches the intended evaluation objective. However, when no such objective is specified, high-DDM leaderboard results should not be interpreted as difficulty-neutral comparisons. Instead, they reflect comparisons under the benchmark's empirical difficulty weighting, so users should check whether this weighting matches the difficulty distribution relevant to their use case. App.~\ref{sec:app_benchmark_cal} presents a simple benchmark calibration algorithm for moving a benchmark distribution closer to a chosen target.

\paragraph{Many benchmarks exhibit large local coverage gaps.}
Tab.~\ref{tab:main_result} shows that 8 out of 25 benchmarks have $\mathrm{Max\ Gap}\ge 0.10$. As a result, a training run may make meaningful local progress in an uncovered difficulty region, while the aggregate benchmark score remains nearly unchanged. This can cause useful training progress to be underestimated, promising runs to be prematurely discarded, and compute to be misallocated. Moreover, the problem is not limited to the single largest gap. We find that 9 out of 25 benchmarks have $\mathrm{Top\text{-}5\ Sum}\ge 0.30$. For example, Fig.~\ref{fig:main_result} shows that AIME24 and AIME25 have top-5 gap sums close to $0.5$, leaving large portions of the difficulty axis locally unresolved despite their widespread use for monitoring RLVR training~\cite{luo2025deepscaler,liu2025prorl,guo2025deepseek,yue2025vapo}. 

We note that although $\mathrm{Max\ Gap}$ intrinsically depends on benchmark size, this reflects a real coverage limitation rather than merely a nuisance effect: even if a large gap is expected for a small benchmark, it is still an uncovered difficulty interval, and by Theorem~\ref{thm:max-gap-detection-boundary}, localized improvements inside such an interval may be undetectable in the worst case. Importantly, we find that coverage gaps are not merely a small-benchmark artifact: benchmarks with hundreds of items, such as HumanEval and BBH-5 objects, also exhibit substantial gap metrics. This motivates gap filling as a practical and necessary way to improve local benchmark resolution.

\begin{figure*}[t]
    \centering
    \includegraphics[width=\textwidth]{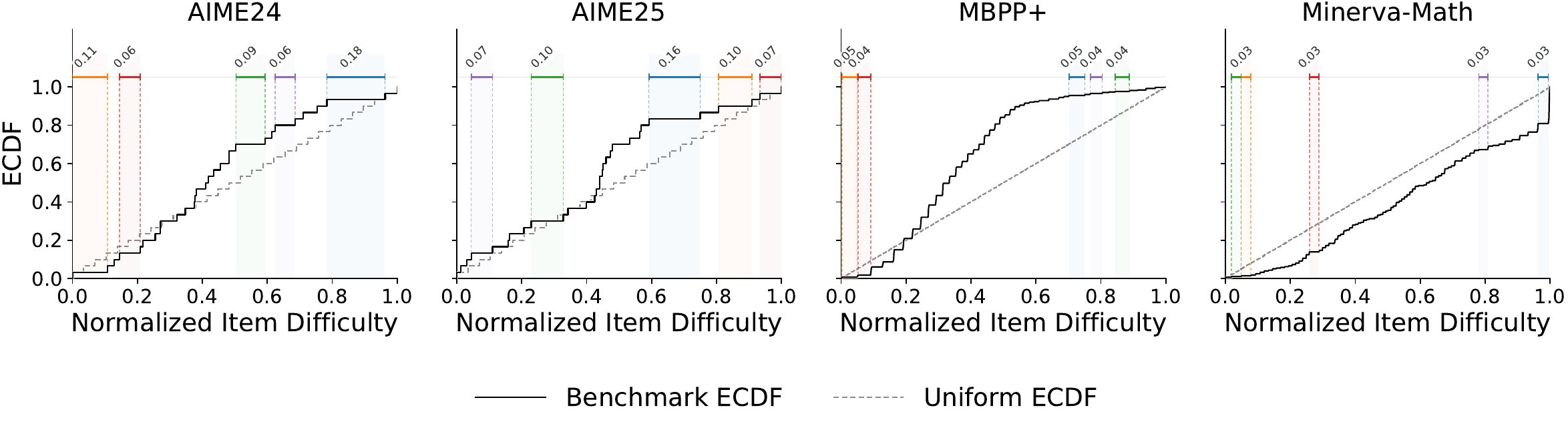}
  \vspace{-4mm}

    \caption{\textbf{Benchmark difficulty distributions.}
    Each panel shows the ECDF of normalized item difficulties for one benchmark, together with the neutral uniform reference. Colored shaded intervals mark the five largest adjacent difficulty gaps, with labels denoting the corresponding gap sizes. Additional benchmark examples are provided in Fig.~\ref{fig:appendix_result}.}
    \label{fig:main_result}
    \vspace{-4mm}
\end{figure*}

\begin{table*}[t]
\centering
\caption{\textbf{Summary of proposed metrics across benchmarks.}
$\mathrm{DDM}$ denotes the 1-Wasserstein distance between the benchmark empirical difficulty distribution and a same-size empirical uniform distribution on $[0,1]$. $\mathrm{Max\ Gap}$ is the largest gap between consecutive normalized difficulties, and $\mathrm{Top\text{-}5\ Sum}$ is the sum of the five largest such gaps. \# denotes the number of benchmark items. Lower is better for all three metrics. Highlighted entries mark comparatively large values for each metric: $\mathrm{DDM}\ge 0.10$, $\mathrm{Max\ Gap}$ $\ge 0.10$, or $\mathrm{Top\text{-}5\ Sum}$ $\ge 0.30$.}
\label{tab:main_result}
\small
\begin{minipage}[t]{0.49\linewidth}
\vspace{0pt}
\centering
\resizebox{\linewidth}{!}{
\begin{tabular}{l r r r r}
\toprule
\textbf{Benchmark} & 
\textbf{$\mathrm{DDM}$} & \textbf{$\mathrm{Max\ Gap}$} & \textbf{$\mathrm{Top\text{-}5\ Sum}$} & \textbf{\#} \\
\midrule
MATH-500 & \highddm{0.164} & 0.058 & 0.175 & 500 \\
Minerva-Math & \highddm{0.122} & 0.031 & 0.149 & 272 \\
OlympiadBench & 0.064 & 0.048 & 0.142 & 675 \\
Gaokao2023en & 0.088 & 0.055 & 0.183 & 385 \\
AMC23 & 0.089 & \highddm{0.217} & \highddm{0.539} & 40 \\
AIME24 & 0.077 & \highddm{0.179} & \highddm{0.502} & 30 \\
AIME25 & 0.079 & \highddm{0.158} & \highddm{0.492} & 30 \\
AIME26 & 0.085 & \highddm{0.119} & \highddm{0.507} & 30 \\
MMLU-Pro-Biology & \highddm{0.125} & 0.045 & 0.151 & 717 \\
MMLU-Pro-Chemistry & \highddm{0.114} & 0.072 & 0.192 & 1132 \\
MMLU-Pro-Physics & \highddm{0.121} & 0.035 & 0.157 & 1299 \\
GPQA-Diamond & 0.080 & 0.066 & 0.263 & 198 \\
ScienceQA & \highddm{0.178} & 0.045 & 0.175 & 4241 \\
\bottomrule
\end{tabular}
}
\end{minipage}
\hfill
\begin{minipage}[t]{0.49\linewidth}
\vspace{0pt}
\centering
\resizebox{\linewidth}{!}{
\begin{tabular}{l r r r r}
\toprule
\textbf{Benchmark} & 
\textbf{$\mathrm{DDM}$} & \textbf{$\mathrm{Max\ Gap}$} & \textbf{$\mathrm{Top\text{-}5\ Sum}$} & \textbf{\#} \\
\midrule
HumanEval & \highddm{0.148} & \highddm{0.165} & \highddm{0.423} & 164 \\
MBPP+ & \highddm{0.163} & 0.049 & 0.214 & 378 \\
GAIA & 0.084 & 0.051 & 0.218 & 165 \\
CORE-Bench Hard & 0.036 & \highddm{0.113} & \highddm{0.363} & 45 \\
SWE-bench-Mini & 0.053 & \highddm{0.149} & \highddm{0.469} & 50 \\
USACO & 0.080 & 0.082 & \highddm{0.354} & 307 \\
IFEval & \highddm{0.159} & 0.046 & 0.189 & 541 \\
BBH-3 objects & 0.090 & 0.075 & 0.294 & 250 \\
BBH-5 objects & 0.099 & \highddm{0.128} & \highddm{0.332} & 250 \\
BBH-7 objects & 0.077 & 0.059 & 0.279 & 250 \\
ZebraLogic & 0.080 & 0.027 & 0.127 & 1000 \\
MedQA & \highddm{0.157} & 0.042 & 0.178 & 1273 \\
\bottomrule
\end{tabular}
}
\end{minipage}
    \vspace{-4mm}

\end{table*}
\section{\textsc{BenchFill}: Difficulty-Targeted Gap Filling}

\begin{figure*}[t]
    \centering
    \includegraphics[width=\textwidth]{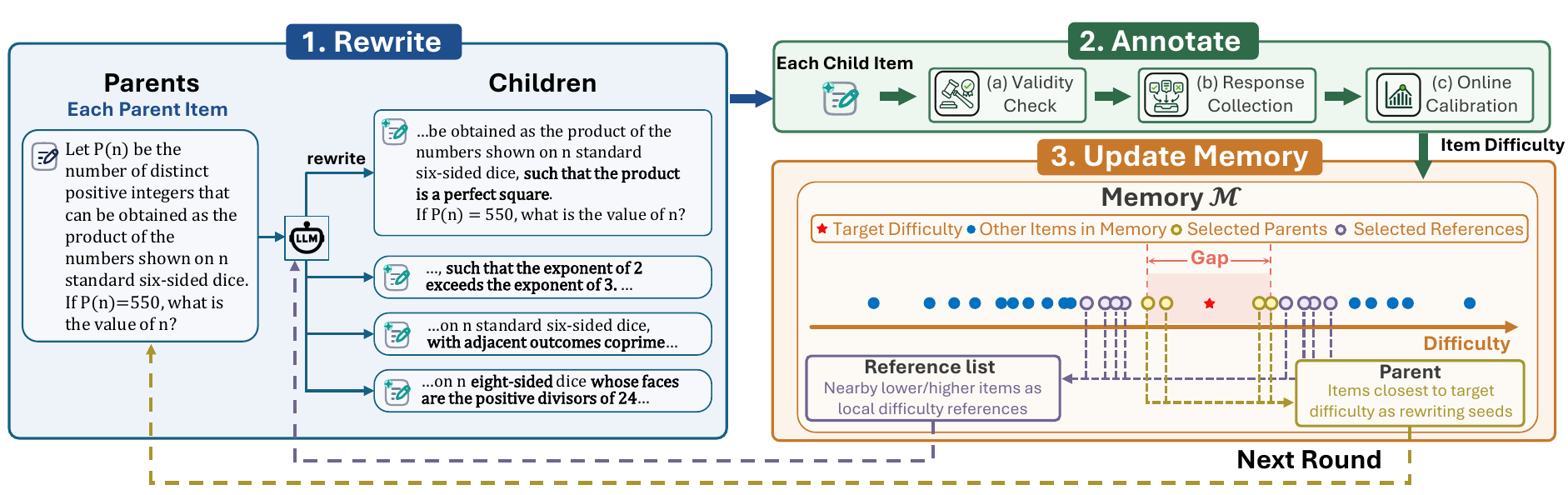}
      \vspace{-4mm}
    \caption{\textbf{Overview of \textsc{BenchFill}.}
    \textsc{BenchFill} iteratively rewrites parent items into child variants, annotates their difficulties, and updates the memory to guide the next round of gap filling.}
    \label{fig:benchfill_overview}
  \vspace{-4mm}
\end{figure*}

\subsection{Problem Setting and Method}


\paragraph{Problem setting.}
Let $B=\{x_i\}_{i=1}^n$ be the current benchmark, where each item $x_i$ has an estimated difficulty $z_i$. We sort the difficulties and consider an adjacent gap $[z_L,z_R]$ between two benchmark items. Reducing such gaps improves a benchmark's local resolution and its ability to capture localized model improvements. Thus, our goal is to obtain a valid new item $x_{\mathrm{new}}$ whose difficulty is as close as possible to the target difficulty $z^\star=(z_L+z_R)/2$, the midpoint that most reduces the gap under a single insertion. The new item is added to the benchmark rather than replacing any existing item. By default, we apply this procedure to the \textit{largest} gap of the benchmark. 

One direct approach is to ask human experts to design new items at the target difficulty, but this process can be costly~\cite{rein2024,phan2025humanity}. Motivated by recent work on LLM-assisted benchmark augmentation~\cite{cheng2026vera,li2025ride,ying2024automating,xia2024top}, we instead use LLMs to \textit{rewrite} existing benchmark items into new variants at targeted difficulty levels. This rewriting-based approach is scalable, uses existing items as local references for controlled difficulty adjustment, and helps preserve the format and style of the original benchmark. To ensure item quality, the final selected items are further validated by human experts. We next introduce \textsc{BenchFill}, a rewrite-and-annotate evolutionary search algorithm~\citep{sharma5openevolve,liu2026evox} for automated difficulty-targeted item rewriting. Alg.~\ref{alg:gap_filling} gives the pseudocode, and Fig.~\ref{fig:benchfill_overview} illustrates the procedure.

\paragraph{\textsc{BenchFill}.}
Given a target difficulty $z^\star$, our algorithm maintains a memory $\mathcal{M}$ of annotated item--difficulty pairs. The memory is initialized with existing benchmark items whose difficulties are closest to the selected gap, and is later updated during the search. Each round follows a parent--child rewriting step. A \emph{parent item} is an existing or previously rewritten item selected from $\mathcal{M}$ as the main reference for rewriting, while a \emph{child item} is a newly proposed variant derived from a parent.

At round $t$, we retrieve a small set of parent items from $\mathcal{M}$ whose annotated difficulties are closest to the target difficulty $z^\star$. For each parent item, the rewriter LLM is conditioned on the parent, the target difficulty, and a reference list of nearby item--difficulty pairs. The reference list provides local examples around $z^\star$, typically including items slightly below and above the target difficulty, e.g., items with difficulties $0.48,0.50,0.60$, and $0.62$ when $z^\star=0.55$. The rewriter then rewrites each parent into a batch of child items while preserving the format and style of the benchmark.

After rewriting, each child item is passed to the difficulty annotator module, detailed below, which returns an estimated difficulty $\hat{z}$. We keep only valid\footnote{We use GPT-5.4 as a validity judge to verify whether the answer associated with each rewritten item is correct.} child items and add their annotated item--difficulty pairs $(x,\hat{z})$ back to $\mathcal{M}$. This update makes the search adaptive: whenever a rewritten child is closer to $z^\star$ than previous items, it becomes more likely to be retrieved as a parent or included in the reference list in later rounds. The rewriter is therefore conditioned on increasingly target-relevant examples as the search progresses, rather than rewriting each batch independently. After $T$ rounds, we select the valid child item minimizing $|\hat{z}-z^\star|$ as the proposed gap-filling item. Before insertion into the benchmark, the final item is subject to careful manual review by human experts.

\paragraph{Difficulty annotation module details.}
For each rewritten item $x$, we annotate its difficulty by online calibration under the IRT model fitted on the original benchmark. Let $\hat{\theta}_j$ be the pre-estimated ability of model $j$. We evaluate each model on $x$ for $K_j$ trials and observe $c_j$ correct responses. The item difficulty is estimated by fitting only the new item parameter:
\[
\hat{b}(x)
=
\arg\max_b
\sum_{j=1}^{m}
\left[
c_j\log \sigma(\hat{\theta}_j-b)
+
\bigl(K_j-c_j\bigr)\log\bigl(1-\sigma(\hat{\theta}_j-b)\bigr)
\right],
\]
where $\sigma(\cdot)$ is the logistic function\footnote{For multiple-choice benchmarks, we additionally incorporate the same chance-level guessing parameter used in the original IRT fitting, which is omitted here for notational simplicity.}. This online calibration places each rewritten item on the same IRT scale as the original benchmark without refitting the full model.

\subsection{Experimental Setup and Results}

\paragraph{Experimental setup.}
We perform gap filling on 8 representative benchmarks from diverse domains. For each benchmark, we target the three most pronounced coverage gaps. We use MiniMax-M2.7~\cite{chen2026minimax} as the rewriter LLM, with temperature set to $1.0$. The search runs for at most 8 rounds; in each round, we retrieve 4 parent items, rewrite each parent into 4 child items, and provide a reference list of at most 16 nearby item--difficulty pairs. As a baseline, we use a budget-matched best-of-\(N\) uniform-parent rewriting variant. Additional implementation details are provided in App.~\ref{sec:app_gap}.


\paragraph{Results.}
As shown in Tab.~\ref{tab:gap_filling_results}, with only three inserted items per benchmark, \textsc{BenchFill} reduces $\mathrm{Top\text{-}3\ Sum}$ by $21.09\%$--$38.88\%$. We also include the oracle result, but only as a reference upper bound, not as an implementable gap-filling method. The achieved reductions are close to this oracle reference: on average, \textsc{BenchFill} achieves a $32.67\%$ reduction, compared with $35.06\%$ for the oracle and $17.52\%$ for the budget-matched uniform-parent baseline. On AIME24, AIME25, and AIME26, \textsc{BenchFill} reaches the oracle reduction. These reductions show that \textsc{BenchFill} improves coverage of the largest under-covered difficulty regions, which can strengthen a benchmark's ability to monitor localized training progress. We further report Target Dev. and \textsc{BenchFill} consistently achieves a Target Dev. below \(0.01\), indicating that our proposed algorithm can produce items close to the intended difficulty levels. Representative generated items are shown in Tab.~\ref{tab:gap_filling_examples}. 

\begin{table*}[!t]
\centering
\caption{\textbf{Gap-filling results.}
Top-3 Gap Sum Reduction reports the relative reduction (\%) from the original benchmark, where larger is better. Target Dev. is the mean absolute deviation between the selected item difficulty and the target difficulty over three filled gaps, where lower is better. Baseline uses a budget-matched best-of-\(N\) uniform-parent rewriting strategy. Oracle is an \textit{idealized upper bound} that sequentially inserts three items, each at the midpoint of the current largest gap, rather than an actual gap-filling method.
}
\label{tab:gap_filling_results}
\small
\resizebox{0.8\textwidth}{!}{
\begin{tabular}{lccccc}
\toprule
\textbf{Benchmark}
& \multicolumn{3}{c}{\textbf{Top-3 Gap Sum Reduction (\%)} $\uparrow$}
& \multicolumn{2}{c}{\textbf{Target Dev.} $\downarrow$} \\
\cmidrule(lr){2-4}
\cmidrule(lr){5-6}
& \textbf{Baseline} & \textbf{\textsc{BenchFill}} & \cellcolor{gray!15}\textbf{Oracle}
& \textbf{Baseline} & \textbf{\textsc{BenchFill}} \\
\midrule
AIME24 & 6.90 & 35.57 & \cellcolor{gray!15}{35.57} & 0.057 & 0.005 \\
AIME25 & 27.44 & 37.56 & \cellcolor{gray!15}{37.56} & 0.016 & 0.005 \\
AIME26 & 17.94 & 33.04 & \cellcolor{gray!15}{33.04} & 0.045 & 0.002 \\
AMC23 & 24.47 & 35.19 & \cellcolor{gray!15}{35.78} & 0.009 & 0.003 \\
Gaokao2023en & 11.45 & 38.88 & \cellcolor{gray!15}{45.47} & 0.152 & 0.004 \\
BBH-3 objects & 14.83 & 29.88 & \cellcolor{gray!15}{32.80} & 0.026 & 0.006 \\
MMLU-Pro-Biology & 22.23 & 30.13 & \cellcolor{gray!15}{36.84} & 0.046 & 0.006 \\
IFEval & 14.94 & 21.09 & \cellcolor{gray!15}{23.44} & 0.036 & 0.007 \\
\midrule
Average & 17.52 & 32.67 & \cellcolor{gray!15}{35.06} & 0.048 & 0.005 \\
\bottomrule
\end{tabular}
}
    \vspace{-5mm}
\end{table*}

\begin{figure*}[!t]
    \centering
    \includegraphics[width=\linewidth]{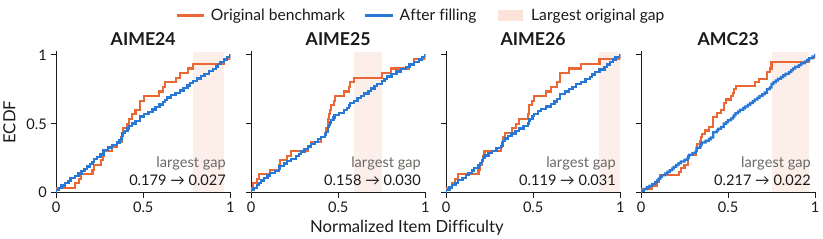}
      \vspace{-4mm}

    \caption{\textbf{Difficulty distributions of four benchmarks before and after gap filling.} Orange and blue curves show the empirical difficulty distributions before and after applying \textsc{BenchFill}, respectively. Shaded regions indicate the largest original coverage gaps. The annotated values show the maximum gaps before and after filling.}
    \label{fig:curated_benchmarks}
        \vspace{-5mm}
\end{figure*}

\subsection{Curated Gap-Filled Benchmarks}
\label{sec:curated_benchmarks}

To provide a community resource for more reliable evaluation of future models and training methods, we further apply \textsc{BenchFill} to construct gap-filled variants of four widely used math benchmarks with substantial coverage gaps: AMC23, AIME24, AIME25, and AIME26. We expand these benchmarks from 40, 30, 30, and 30 items to 80, 50, 52, and 53 items, respectively. All generated items are manually reviewed, and low-quality or incorrect items are discarded. We release the resulting variants as \textbf{AMC23-BenchFill}, \textbf{AIME24-BenchFill}, \textbf{AIME25-BenchFill}, and \textbf{AIME26-BenchFill}\footnote{\url{https://huggingface.co/datasets/Yifan99/BenchFill}}. As shown in Fig.~\ref{fig:curated_benchmarks}, these variants substantially reduce the largest original coverage gaps and bring the empirical difficulty distributions closer to a uniform distribution, providing more comprehensive coverage across the difficulty spectrum.

\section{Related Work}

\paragraph{Science of benchmarking.}
A recent line of work has shifted attention from improving leaderboard scores to examining whether benchmark scores themselves provide reliable scientific evidence. Beyond data contamination, where benchmark test items appear in model training data and potentially inflate evaluation scores~\cite{oren2024proving,sun2025emperor,zhu2024dynamic}, prior studies show that benchmark results can be sensitive to decoding, prompting, and implementation details~\cite{hochlehnert2025sober}, and brittle to surface wording or dataset flaws~\cite{mousavi2026garbage}. Other work studies evaluation reliability through the lens of signal and noise~\cite{heineman2025signal}, or proposes broader benchmark-quality criteria and shows that widely used benchmarks vary substantially in quality~\cite{qian2026benchmark}. Our work complements these efforts by formulating a benchmark as an empirical measure over difficulty, enabling us to rigorously formalize measure mismatch and coverage gaps.

\paragraph{IRT and its applications.} IRT originated in psychometrics as a framework for educational and psychological measurement~\cite{embretson2013item,lord2012applications}. In NLP, \cite{lalor2016building} introduced IRT to system evaluation, and \cite{rodriguez2021evaluation} extended this perspective to leaderboard analysis. IRT has since become a widely used foundation for difficulty-aware evaluation in LLM benchmarking. Easy2Hard-Bench~\cite{ding2024easy2hard} uses IRT to assign standardized continuous difficulty labels to benchmark items; it also validates these labels against human pairwise judgments and reports stronger alignment than GPT-4-based difficulty annotators. \cite{zhou2026lost} uses an IRT-based framework to analyze the measurement quality of existing LLM benchmarks. 

\section{Conclusion}

In this work, we view each benchmark as an empirical measure over difficulty. This perspective enables us to formalize measure mismatch and coverage gaps, leading to theoretical results and practical diagnostics. 
We believe this empirical-measure view has the potential to open new ways to rigorously study and rethink benchmark evaluation.

\section*{Acknowledgments}
We thank Modal for supporting this research through the Modal for Academics compute grant. We are also grateful to Sirui Chen and Junsheng Huang for their insightful discussions and valuable feedback, which helped improve this work.

\bibliographystyle{plain}
\small\bibliography{neurips_2026}

@article{ding2024easy2hard,
  title={Easy2Hard-bench: Standardized difficulty labels for profiling LLM performance and generalization},
  author={Ding, Mucong and Deng, Chenghao and Choo, Jocelyn and Wu, Zichu and Agrawal, Aakriti and Schwarzschild, Avi and Zhou, Tianyi and Goldstein, Tom and Langford, John and Anandkumar, Anima and others},
  journal={Advances in Neural Information Processing Systems},
  volume={37},
  pages={44323--44365},
  year={2024}
}

@inproceedings{rodriguez2021evaluation,
  title={Evaluation examples are not equally informative: How should that change NLP leaderboards?},
  author={Rodriguez, Pedro and Barrow, Joe and Hoyle, Alexander Miserlis and Lalor, John P and Jia, Robin and Boyd-Graber, Jordan},
  booktitle={Proceedings of the 59th Annual Meeting of the Association for Computational Linguistics and the 11th International Joint Conference on Natural Language Processing (Volume 1: Long Papers)},
  pages={4486--4503},
  year={2021}
}

@book{rasch1993probabilistic,
  title={Probabilistic models for some intelligence and attainment tests.},
  author={Rasch, Georg},
  year={1993},
  publisher={ERIC}
}

@article{lord1952theory,
  title={A theory of test scores.},
  author={Lord, Frederic},
  journal={Psychometric monographs},
  year={1952}
}

@article{polo2024tinybenchmarks,
  title={tinyBenchmarks: evaluating LLMs with fewer examples},
  author={Polo, Felipe Maia and Weber, Lucas and Choshen, Leshem and Sun, Yuekai and Xu, Gongjun and Yurochkin, Mikhail},
  journal={arXiv preprint arXiv:2402.14992},
  year={2024}
}

@article{hofmann2025fluid,
  title={Fluid language model benchmarking},
  author={Hofmann, Valentin and Heineman, David and Magnusson, Ian and Lo, Kyle and Dodge, Jesse and Sap, Maarten and Koh, Pang Wei and Wang, Chun and Hajishirzi, Hannaneh and Smith, Noah A},
  journal={arXiv preprint arXiv:2509.11106},
  year={2025}
}

@article{kipnis2024metabench,
  title={metabench--A Sparse Benchmark of Reasoning and Knowledge in Large Language Models},
  author={Kipnis, Alex and Voudouris, Konstantinos and Buschoff, Luca M Schulze and Schulz, Eric},
  journal={arXiv preprint arXiv:2407.12844},
  year={2024}
}

@inproceedings{zhou2026lost,
  title={Lost in benchmarks? rethinking large language model benchmarking with item response theory},
  author={Zhou, Hongli and Huang, Hui and Zhao, Ziqing and Han, Lvyuan and Wang, Huicheng and Chen, Kehai and Yang, Muyun and Bao, Wei and Dong, Jian and Xu, Bing and others},
  booktitle={Proceedings of the AAAI Conference on Artificial Intelligence},
  volume={40},
  number={41},
  pages={35085--35093},
  year={2026}
}

@article{pymc2023,
  title = {{PyMC}: A Modern and Comprehensive Probabilistic Programming Framework in {P}ython},
  author = {Oriol Abril-Pla and Virgile Andreani and Colin Carroll and Larry Dong and Christopher J. Fonnesbeck and Maxim Kochurov and Ravin Kumar and Junpeng Lao and Christian C. Luhmann and Osvaldo A. Martin and Michael Osthege and Ricardo Vieira and Thomas Wiecki and Robert Zinkov },
  journal = {{PeerJ} Computer Science},
  volume = {9},
  number = {e1516},
  doi = {10.7717/peerj-cs.1516},
  year = {2023}
}

@book{baker2001basics,
  title={The basics of item response theory},
  author={Baker, Frank B},
  year={2001},
  publisher={ERIC}
}

@article{sijtsma1998nonparametric,
  title={Nonparametric polytomous IRT models for invariant item ordering, with results for parametric models},
  author={Sijtsma, Klaas and Hemker, Bas T},
  journal={Psychometrika},
  volume={63},
  number={2},
  pages={183--200},
  year={1998},
  publisher={Springer-Verlag}
}

@article{sijtsma1996survey,
  title={A survey of theory and methods of invariant item ordering},
  author={Sijtsma, Klaas and Junker, Brian W},
  journal={British Journal of Mathematical and Statistical Psychology},
  volume={49},
  number={1},
  pages={79--105},
  year={1996},
  publisher={Wiley Online Library}
}

@inproceedings{lalor2016building,
  title={Building an evaluation scale using item response theory},
  author={Lalor, John P and Wu, Hao and Yu, Hong},
  booktitle={Proceedings of the 2016 Conference on Empirical Methods in Natural Language Processing},
  pages={648--657},
  year={2016}
}

@inproceedings{wang2018glue,
  title={GLUE: A multi-task benchmark and analysis platform for natural language understanding},
  author={Wang, Alex and Singh, Amanpreet and Michael, Julian and Hill, Felix and Levy, Omer and Bowman, Samuel},
  booktitle={Proceedings of the 2018 EMNLP workshop BlackboxNLP: Analyzing and interpreting neural networks for NLP},
  pages={353--355},
  year={2018}
}

@article{liang2022holistic,
  title={Holistic evaluation of language models},
  author={Liang, Percy and Bommasani, Rishi and Lee, Tony and Tsipras, Dimitris and Soylu, Dilara and Yasunaga, Michihiro and Zhang, Yian and Narayanan, Deepak and Wu, Yuhuai and Kumar, Ananya and others},
  journal={arXiv preprint arXiv:2211.09110},
  year={2022}
}

@inproceedings{alzahrani2024benchmarks,
  title={When benchmarks are targets: Revealing the sensitivity of large language model leaderboards},
  author={Alzahrani, Norah and Alyahya, Hisham and Alnumay, Yazeed and Alrashed, Sultan and Alsubaie, Shaykhah and Almushayqih, Yousef and Mirza, Faisal and Alotaibi, Nouf and Al-Twairesh, Nora and Alowisheq, Areeb and others},
  booktitle={Proceedings of the 62nd Annual Meeting of the Association for Computational Linguistics (Volume 1: Long Papers)},
  pages={13787--13805},
  year={2024}
}

@article{myrzakhan2024open,
  title={Open-llm-leaderboard: From multi-choice to open-style questions for llms evaluation, benchmark, and arena},
  author={Myrzakhan, Aidar and Bsharat, Sondos Mahmoud and Shen, Zhiqiang},
  journal={arXiv preprint arXiv:2406.07545},
  year={2024}
}

@article{liu2025prorl,
  title={Prorl: Prolonged reinforcement learning expands reasoning boundaries in large language models},
  author={Liu, Mingjie and Diao, Shizhe and Lu, Ximing and Hu, Jian and Dong, Xin and Choi, Yejin and Kautz, Jan and Dong, Yi},
  journal={arXiv preprint arXiv:2505.24864},
  year={2025}
}

@article{hu2025open,
  title={Open-reasoner-zero: An open source approach to scaling up reinforcement learning on the base model},
  author={Hu, Jingcheng and Zhang, Yinmin and Han, Qi and Jiang, Daxin and Zhang, Xiangyu and Shum, Heung-Yeung},
  journal={arXiv preprint arXiv:2503.24290},
  year={2025}
}

@article{he2025skywork,
  title={Skywork open reasoner 1 technical report},
  author={He, Jujie and Liu, Jiacai and Liu, Chris Yuhao and Yan, Rui and Wang, Chaojie and Cheng, Peng and Zhang, Xiaoyu and Zhang, Fuxiang and Xu, Jiacheng and Shen, Wei and others},
  journal={arXiv preprint arXiv:2505.22312},
  year={2025}
}

@article{yu2025dapo,
  title={Dapo: An open-source llm reinforcement learning system at scale},
  author={Yu, Qiying and Zhang, Zheng and Zhu, Ruofei and Yuan, Yufeng and Zuo, Xiaochen and Yue, Yu and Dai, Weinan and Fan, Tiantian and Liu, Gaohong and Liu, Lingjun and others},
  journal={arXiv preprint arXiv:2503.14476},
  year={2025}
}

@article{bercovich2025llama,
  title={Llama-nemotron: Efficient reasoning models},
  author={Bercovich, Akhiad and Levy, Itay and Golan, Izik and Dabbah, Mohammad and El-Yaniv, Ran and Puny, Omri and Galil, Ido and Moshe, Zach and Ronen, Tomer and Nabwani, Najeeb and others},
  journal={arXiv preprint arXiv:2505.00949},
  year={2025}
}

@misc{luo2025deepscaler,
  title={DeepScaleR: Surpassing O1-Preview with a 1.5B Model by Scaling RL},
  author={Michael Luo and Sijun Tan and Justin Wong and Xiaoxiang Shi and William Y. Tang and Manan Roongta and Colin Cai and Jeffrey Luo and Li Erran Li and Raluca Ada Popa and Ion Stoica},
  howpublished={\url{https://pretty-radio-b75.notion.site/DeepScaleR-Surpassing-O1-Preview-with-a-1-5B-Model-by-Scaling-RL-19681902c1468005bed8ca303013a4e2}},
  note={Notion Blog},
  year={2025}
}

@inproceedings{siska2024examining,
  title={Examining the robustness of LLM evaluation to the distributional assumptions of benchmarks},
  author={Siska, Charlotte and Marazopoulou, Katerina and Ailem, Melissa and Bono, James},
  booktitle={Proceedings of the 62nd Annual Meeting of the Association for Computational Linguistics (Volume 1: Long Papers)},
  pages={10406--10421},
  year={2024}
}

@article{zhang2026rankllm,
  title={RankLLM: Weighted Ranking of LLMs by Quantifying Question Difficulty},
  author={Zhang, Ziqian and Hu, Xingjian and Huang, Yue and Zhang, Kai and Chen, Ruoxi and Liu, Yixin and Wen, Qingsong and Xu, Kaidi and Zhang, Xiangliang and Gong, Neil Zhenqiang and others},
  journal={arXiv preprint arXiv:2602.12424},
  year={2026}
}

@book{villani2009optimal,
  title={Optimal transport: old and new},
  author={Villani, C{\'e}dric and others},
  volume={338},
  year={2009},
  publisher={Springer}
}

@article{chen2025item,
  title={Item response theory—A statistical framework for educational and psychological measurement},
  author={Chen, Yunxiao and Li, Xiaoou and Liu, Jingchen and Ying, Zhiliang},
  journal={Statistical Science},
  volume={40},
  number={2},
  pages={167--194},
  year={2025},
  publisher={Institute of Mathematical Statistics}
}

@article{ballon2025estimating,
  title={Estimating problem difficulty without ground truth using Large Language Model comparisons},
  author={Ballon, Marthe and Algaba, Andres and Verbeken, Brecht and Ginis, Vincent},
  journal={arXiv preprint arXiv:2512.14220},
  year={2025}
}

@inproceedings{zhu2025llm,
  title={The llm already knows: Estimating llm-perceived question difficulty via hidden representations},
  author={Zhu, Yubo and Liu, Dongrui and Lin, Zecheng and Tong, Wei and Zhong, Sheng and Shao, Jing},
  booktitle={Proceedings of the 2025 Conference on Empirical Methods in Natural Language Processing},
  pages={1160--1176},
  year={2025}
}

@inproceedings{rein2024gpqa,
  title={Gpqa: A graduate-level google-proof q\&a benchmark},
  author={Rein, David and Hou, Betty Li and Stickland, Asa Cooper and Petty, Jackson and Pang, Richard Yuanzhe and Dirani, Julien and Michael, Julian and Bowman, Samuel R},
  booktitle={First conference on language modeling},
  year={2024}
}

@inproceedings{lightman2023let,
  title={Let's verify step by step},
  author={Lightman, Hunter and Kosaraju, Vineet and Burda, Yuri and Edwards, Harrison and Baker, Bowen and Lee, Teddy and Leike, Jan and Schulman, John and Sutskever, Ilya and Cobbe, Karl},
  booktitle={International Conference on Learning Representations},
  volume={2024},
  pages={39578--39601},
  year={2024}
}

@article{bean2024lingoly,
  title={LINGOLY: A benchmark of olympiad-level linguistic reasoning puzzles in low resource and extinct languages},
  author={Bean, Andrew and Hellsten, Simi and Mayne, Harry and Magomere, Jabez and Chi, Ryan and Hale, Scott A and Kirk, Hannah R and others},
  journal={Advances in Neural Information Processing Systems},
  volume={37},
  pages={26224--26237},
  year={2024}
}

@book{embretson2013item,
  title={Item response theory for psychologists},
  author={Embretson, Susan E and Reise, Steven P},
  year={2013},
  publisher={Psychology Press}
}

@book{lord2012applications,
  title={Applications of item response theory to practical testing problems},
  author={Lord, Frederic M},
  year={2012},
  publisher={Routledge}
}

@article{hochlehnert2025sober,
  title={A sober look at progress in language model reasoning: Pitfalls and paths to reproducibility},
  author={Hochlehnert, Andreas and Bhatnagar, Hardik and Udandarao, Vishaal and Albanie, Samuel and Prabhu, Ameya and Bethge, Matthias},
  journal={arXiv preprint arXiv:2504.07086},
  year={2025}
}

@inproceedings{mousavi2026garbage,
  title={Garbage in, reasoning out? why benchmark scores are unreliable and what to do about it},
  author={Mousavi, Seyed Mahed and Cecchinato, Edoardo and Hornikova, Lucia and Riccardi, Giuseppe},
  booktitle={Findings of the Association for Computational Linguistics: EACL 2026},
  pages={1747--1759},
  year={2026}
}

@article{qian2026benchmark,
  title={Benchmark\^{} 2: Systematic Evaluation of LLM Benchmarks},
  author={Qian, Qi and Huang, Chengsong and Xu, Jingwen and Lv, Changze and Wu, Muling and Liu, Wenhao and Wang, Xiaohua and Wang, Zhenghua and Huang, Zisu and Tian, Muzhao and others},
  journal={arXiv preprint arXiv:2601.03986},
  year={2026}
}

@article{heineman2025signal,
  title={Signal and noise: A framework for reducing uncertainty in language model evaluation},
  author={Heineman, David and Hofmann, Valentin and Magnusson, Ian and Gu, Yuling and Smith, Noah A and Hajishirzi, Hannaneh and Lo, Kyle and Dodge, Jesse},
  journal={arXiv preprint arXiv:2508.13144},
  year={2025}
}

@inproceedings{mialon2023gaia,
  title={Gaia: a benchmark for general ai assistants},
  author={Mialon, Gr{\'e}goire and Fourrier, Cl{\'e}mentine and Wolf, Thomas and LeCun, Yann and Scialom, Thomas},
  booktitle={International Conference on Learning Representations},
  volume={2024},
  pages={9025--9049},
  year={2024}
}

@misc{aime24,
  author       = {{Mathematical Association of America}},
  title        = {{2024 American Invitational Mathematics Examination (AIME)}},
  year         = {2024},
  howpublished = {\url{https://maa.org/maa-invitational-competitions/}},
  note         = {Accessed 2026-04-25}
}

@misc{aime25,
  author       = {{Mathematical Association of America}},
  title        = {{2025 American Invitational Mathematics Examination (AIME)}},
  year         = {2025},
  howpublished = {\url{https://maa.org/maa-invitational-competitions/}},
  note         = {Accessed 2026-04-25}
}

@misc{aime26,
  author       = {{Mathematical Association of America}},
  title        = {{2026 American Invitational Mathematics Examination (AIME)}},
  year         = {2026},
  howpublished = {\url{https://maa.org/maa-invitational-competitions/}},
  note         = {Accessed 2026-04-25}
}

@inproceedings{he2024olympiadbench,
  title={Olympiadbench: A challenging benchmark for promoting agi with olympiad-level bilingual multimodal scientific problems},
  author={He, Chaoqun and Luo, Renjie and Bai, Yuzhuo and Hu, Shengding and Thai, Zhen and Shen, Junhao and Hu, Jinyi and Han, Xu and Huang, Yujie and Zhang, Yuxiang and others},
  booktitle={Proceedings of the 62nd Annual Meeting of the Association for Computational Linguistics (Volume 1: Long Papers)},
  pages={3828--3850},
  year={2024}
}

@article{lewkowycz2022solving,
  title={Solving quantitative reasoning problems with language models},
  author={Lewkowycz, Aitor and Andreassen, Anders and Dohan, David and Dyer, Ethan and Michalewski, Henryk and Ramasesh, Vinay and Slone, Ambrose and Anil, Cem and Schlag, Imanol and Gutman-Solo, Theo and others},
  journal={Advances in neural information processing systems},
  volume={35},
  pages={3843--3857},
  year={2022}
}

@article{hendrycks2021measuring,
  title={Measuring mathematical problem solving with the math dataset},
  author={Hendrycks, Dan and Burns, Collin and Kadavath, Saurav and Arora, Akul and Basart, Steven and Tang, Eric and Song, Dawn and Steinhardt, Jacob},
  journal={arXiv preprint arXiv:2103.03874},
  year={2021}
}

@article{wang2024mmlu,
  title={Mmlu-pro: A more robust and challenging multi-task language understanding benchmark},
  author={Wang, Yubo and Ma, Xueguang and Zhang, Ge and Ni, Yuansheng and Chandra, Abhranil and Guo, Shiguang and Ren, Weiming and Arulraj, Aaran and He, Xuan and Jiang, Ziyan and others},
  journal={Advances in Neural Information Processing Systems},
  volume={37},
  pages={95266--95290},
  year={2024}
}

@article{zhou2023instruction,
  title={Instruction-following evaluation for large language models},
  author={Zhou, Jeffrey and Lu, Tianjian and Mishra, Swaroop and Brahma, Siddhartha and Basu, Sujoy and Luan, Yi and Zhou, Denny and Hou, Le},
  journal={arXiv preprint arXiv:2311.07911},
  year={2023}
}

@inproceedings{lu2022learn,
    title={Learn to Explain: Multimodal Reasoning via Thought Chains for Science Question Answering},
    author={Lu, Pan and Mishra, Swaroop and Xia, Tony and Qiu, Liang and Chang, Kai-Wei and Zhu, Song-Chun and Tafjord, Oyvind and Clark, Peter and Ashwin Kalyan},
    booktitle={The 36th Conference on Neural Information Processing Systems (NeurIPS)},
    year={2022}
}

@article{sharma5openevolve,
  title={Openevolve: an open-source evolutionary coding agent, 2025},
  author={Sharma, Asankhaya},
  journal={URL https://github. com/codelion/openevolve},
}

@article{liu2026evox,
  title={Evox: Meta-evolution for automated discovery},
  author={Liu, Shu and Agarwal, Shubham and Maheswaran, Monishwaran and Cemri, Mert and Li, Zhifei and Mang, Qiuyang and Naren, Ashwin and Boneh, Ethan and Cheng, Audrey and Pan, Melissa Z and others},
  journal={arXiv preprint arXiv:2602.23413},
  year={2026}
}

@article{zhang2023evaluating,
  title={Evaluating the performance of large language models on gaokao benchmark},
  author={Zhang, Xiaotian and Li, Chunyang and Zong, Yi and Ying, Zhengyu and He, Liang and Qiu, Xipeng},
  journal={arXiv preprint arXiv:2305.12474},
  year={2023}
}

@article{chen2021codex,
  title={Evaluating Large Language Models Trained on Code},
  author={Mark Chen and Jerry Tworek and Heewoo Jun and Qiming Yuan and Henrique Ponde de Oliveira Pinto and Jared Kaplan and Harri Edwards and Yuri Burda and Nicholas Joseph and Greg Brockman and Alex Ray and Raul Puri and Gretchen Krueger and Michael Petrov and Heidy Khlaaf and Girish Sastry and Pamela Mishkin and Brooke Chan and Scott Gray and Nick Ryder and Mikhail Pavlov and Alethea Power and Lukasz Kaiser and Mohammad Bavarian and Clemens Winter and Philippe Tillet and Felipe Petroski Such and Dave Cummings and Matthias Plappert and Fotios Chantzis and Elizabeth Barnes and Ariel Herbert-Voss and William Hebgen Guss and Alex Nichol and Alex Paino and Nikolas Tezak and Jie Tang and Igor Babuschkin and Suchir Balaji and Shantanu Jain and William Saunders and Christopher Hesse and Andrew N. Carr and Jan Leike and Josh Achiam and Vedant Misra and Evan Morikawa and Alec Radford and Matthew Knight and Miles Brundage and Mira Murati and Katie Mayer and Peter Welinder and Bob McGrew and Dario Amodei and Sam McCandlish and Ilya Sutskever and Wojciech Zaremba},
  year={2021},
  eprint={2107.03374},
  archivePrefix={arXiv},
  primaryClass={cs.LG}
}

@misc{amc23,
  author       = {{Mathematical Association of America}},
  title        = {{2023 American Mathematics Competitions 12 (AMC 12)}},
  year         = {2023},
  howpublished = {\url{https://maa.org/student-programs/amc/}},
  note         = {Accessed 2026-04-25}
}

@article{siegel2024core,
  title={Core-bench: Fostering the credibility of published research through a computational reproducibility agent benchmark},
  author={Siegel, Zachary S and Kapoor, Sayash and Nadgir, Nitya and Stroebl, Benedikt and Narayanan, Arvind},
  journal={arXiv preprint arXiv:2409.11363},
  year={2024}
}

@article{jimenez2023swe,
  title={Swe-bench: Can language models resolve real-world github issues?},
  author={Jimenez, Carlos E and Yang, John and Wettig, Alexander and Yao, Shunyu and Pei, Kexin and Press, Ofir and Narasimhan, Karthik},
  journal={arXiv preprint arXiv:2310.06770},
  year={2023}
}

@article{shi2024can,
  title={Can language models solve olympiad programming?},
  author={Shi, Quan and Tang, Michael and Narasimhan, Karthik and Yao, Shunyu},
  journal={arXiv preprint arXiv:2404.10952},
  year={2024}
}

@article{suzgun2022challenging,
  title={Challenging BIG-Bench Tasks and Whether Chain-of-Thought Can Solve Them},
  author={Suzgun, Mirac and Scales, Nathan and Sch{\"a}rli, Nathanael and Gehrmann, Sebastian and Tay, Yi and Chung, Hyung Won and Chowdhery, Aakanksha and Le, Quoc V and Chi, Ed H and Zhou, Denny and Wei, Jason},
  journal={arXiv preprint arXiv:2210.09261},
  year={2022}
}

@article{zebralogic2025,
    title={ZebraLogic: On the Scaling Limits of LLMs for Logical Reasoning},
    author={Bill Yuchen Lin and Ronan Le Bras and Kyle Richardson and Ashish Sabharwal and Radha Poovendran and Peter Clark and Yejin Choi},
    year={2025},
    url={https://arxiv.org/abs/2502.01100},
}

@article{jin2021disease,
  title={What disease does this patient have? a large-scale open domain question answering dataset from medical exams},
  author={Jin, Di and Pan, Eileen and Oufattole, Nassim and Weng, Wei-Hung and Fang, Hanyi and Szolovits, Peter},
  journal={Applied Sciences},
  volume={11},
  number={14},
  pages={6421},
  year={2021},
  publisher={MDPI}
}

@article{phan2025humanity,
  title={Humanity's last exam},
  author={Phan, Long and Gatti, Alice and Han, Ziwen and Li, Nathaniel and Hu, Josephina and Zhang, Hugh and Zhang, Chen Bo Calvin and Shaaban, Mohamed and Ling, John and Shi, Sean and others},
  journal={arXiv preprint arXiv:2501.14249},
  year={2025}
}

@misc{rein2024,
  author       = {David Rein},
  title        = {Can Good Benchmarks Contain Mistakes?},
  year         = {2024},
  url          = {https://wp.nyu.edu/arg/can-good-benchmarks-contain-mistakes/},
}

@article{cheng2026vera,
  title={VeRA: Verified Reasoning Data Augmentation at Scale},
  author={Cheng, Zerui and Liu, Jiashuo and Wu, Chunjie and Yao, Jianzhu and Viswanath, Pramod and Zhang, Ge and Huang, Wenhao},
  journal={arXiv preprint arXiv:2602.13217},
  year={2026}
}

@article{xia2024top,
  title={Top leaderboard ranking= top coding proficiency, always? evoeval: Evolving coding benchmarks via llm},
  author={Xia, Chunqiu Steven and Deng, Yinlin and Zhang, Lingming},
  journal={arXiv preprint arXiv:2403.19114},
  year={2024}
}

@article{li2025ride,
  title={RIDE: Difficulty Evolving Perturbation with Item Response Theory for Mathematical Reasoning},
  author={Li, Xinyuan and Xu, Murong and Tao, Wenbiao and Zhu, Hanlun and Zhao, Yike and Zhang, Jipeng and Lan, Yunshi},
  journal={arXiv preprint arXiv:2511.04120},
  year={2025}
}

@inproceedings{sun2025emperor,
  title={The Emperor's New Clothes in Benchmarking? A Rigorous Examination of Mitigation Strategies for LLM Benchmark Data Contamination},
  author={Sun, Yifan and Wang, Han and Li, Dongbai and Wang, Gang and Zhang, Huan},
  booktitle={International Conference on Machine Learning},
  year={2025}
}

@article{ying2024automating,
  title={Automating dataset updates towards reliable and timely evaluation of large language models},
  author={Ying, Jiahao and Cao, Yixin and Bai, Yushi and Sun, Qianru and Wang, Bo and Tang, Wei and Ding, Zhaojun and Yang, Yizhe and Huang, Xuanjing and Yan, Shuicheng},
  journal={Advances in Neural Information Processing Systems},
  volume={37},
  pages={17106--17132},
  year={2024}
}

@article{guo2025deepseek,
  title={Deepseek-r1: Incentivizing reasoning capability in llms via reinforcement learning},
  author={Guo, Daya and Yang, Dejian and Zhang, Haowei and Song, Junxiao and Wang, Peiyi and Zhu, Qihao and Xu, Runxin and Zhang, Ruoyu and Ma, Shirong and Bi, Xiao and others},
  journal={arXiv preprint arXiv:2501.12948},
  year={2025}
}

@article{yue2025vapo,
  title={Vapo: Efficient and reliable reinforcement learning for advanced reasoning tasks},
  author={Yue, Yu and Yuan, Yufeng and Yu, Qiying and Zuo, Xiaochen and Zhu, Ruofei and Xu, Wenyuan and Chen, Jiaze and Wang, Chengyi and Fan, TianTian and Du, Zhengyin and others},
  journal={arXiv preprint arXiv:2504.05118},
  year={2025}
}

@article{liu2023your,
  title={Is your code generated by chatgpt really correct? rigorous evaluation of large language models for code generation},
  author={Liu, Jiawei and Xia, Chunqiu Steven and Wang, Yuyao and Zhang, Lingming},
  journal={Advances in neural information processing systems},
  volume={36},
  pages={21558--21572},
  year={2023}
}

@inproceedings{hal,
title =     {Holistic Agent Leaderboard: The Missing Infrastructure for AI Agent Evaluation},
author =    {Sayash Kapoor and Benedikt Stroebl and Peter Kirgis and Nitya Nadgir and Zachary S Siegel and Boyi Wei and Tianci Xue and Ziru Chen and Felix Chen and Saiteja Utpala and Franck Ndzomga and Dheeraj Oruganty and Sophie Luskin and Kangheng Liu and Botao Yu and Amit Arora and Dongyoon Hahm and Harsh Trivedi and Huan Sun and Juyong Lee and Tengjun Jin and Yifan Mai and Yifei Zhou and Yuxuan Zhu and Rishi Bommasani and Daniel Kang and Dawn Song and Peter Henderson and Yu Su and Percy Liang and Arvind Narayanan},
booktitle = {The Fourteenth International Conference on Learning Representations (ICLR)},
url =       {https://arxiv.org/pdf/2510.11977},
year =      {2026}}

@book{david2004order,
  title={Order statistics},
  author={David, Herbert A and Nagaraja, Haikady N},
  year={2004},
  publisher={John Wiley \& Sons}
}

@article{chen2026minimax,
  title={The minimax-m2 series: Mini activations unleashing max real-world intelligence},
  author={Chen, Aili and Li, Aonian and Zhou, Baichuan and Gong, Bangwei and Jiang, Binyang and Dan, Boji and Yu, Changqing and Wang, Chao and Ma, Cheng and Zhong, Cheng and others},
  journal={arXiv preprint arXiv:2605.26494},
  year={2026}
}

@article{zhu2024dynamic,
  title={Dynamic evaluation of large language models by meta probing agents},
  author={Zhu, Kaijie and Wang, Jindong and Zhao, Qinlin and Xu, Ruochen and Xie, Xing},
  journal={arXiv preprint arXiv:2402.14865},
  year={2024}
}

@inproceedings{oren2024proving,
  title={Proving test set contamination in black-box language models},
  author={Oren, Yonatan and Meister, Nicole and Chatterji, Niladri and Ladhak, Faisal and Hashimoto, Tatsunori},
  booktitle={International Conference on Learning Representations},
  volume={2024},
  pages={16354--16372},
  year={2024}
}

@inproceedings{wen2025light,
  title={Light-r1: Curriculum sft, dpo and rl for long cot from scratch and beyond},
  author={Wen, Liang and Cai, Yunke and Xiao, Fenrui and He, Xin and An, Qi and Duan, Zhenyu and Du, Yimin and Liu, Junchen and Tang, Lifu and Lv, Xiaowei and others},
  booktitle={Proceedings of the 63rd Annual Meeting of the Association for Computational Linguistics (Volume 6: Industry Track)},
  pages={318--327},
  year={2025}
}

@article{sun2026improving,
  title={Improving data efficiency for llm reinforcement fine-tuning through difficulty-targeted online data selection and rollout replay},
  author={Sun, Yifan and Shen, Jingyan and Wang, Yibin and Chen, Tianyu and Wang, Zhendong and Zhou, Mingyuan and Zhang, Huan},
  journal={Advances in Neural Information Processing Systems},
  volume={38},
  pages={173449--173476},
  year={2026}
}

@article{zhang2025reasoning,
  title={When reasoning meets its laws},
  author={Zhang, Junyu and Sun, Yifan and Leng, Tianang and Shen, Jingyan and Ziyin, Liu and Liang, Paul Pu and Zhang, Huan},
  journal={arXiv preprint arXiv:2512.17901},
  year={2025}
}


\appendix
\newpage
\section{Discussion}

\subsection{Broader Impacts}
\label{sec:app_impact}

This work aims to improve the reliability and transparency of benchmark evaluation. By diagnosing distribution mismatch and coverage gaps, our methods can help researchers better understand when benchmark scores provide faithful evidence for model comparison and monitoring training progress. More reliable evaluation may support more transparent model comparison and more informed benchmark design. Since this work focuses on benchmark analysis rather than model deployment or data collection, we do not foresee direct negative societal impacts.

\subsection{Limitations and Future Work}
\label{sec:app_limitation}

This work develops a measure-theoretic view of benchmark evaluation and uses it to characterize two central issues: measure mismatch and coverage gaps. Our goal is not to exhaust all consequences of this viewpoint, but to show that benchmark evaluation can be studied through the geometry of the empirical difficulty measure induced by the benchmark. We believe this perspective opens a broader direction for studying benchmark evaluation, where additional benchmark failures and design objectives can be formulated as properties of the induced difficulty measure.

Moreover, our current framework is primarily post-hoc: given an existing benchmark, we estimate its difficulty distribution, diagnose measure mismatch and coverage gaps, and reduce pronounced gaps through targeted item insertion. A natural next step is to move this perspective into the benchmark construction stage itself. Rather than auditing mismatch and coverage only after a benchmark has been built, benchmark developers can actively design the induced difficulty structure from the beginning, aiming to align the benchmark measure with the intended evaluation objective while maintaining high local resolution along the difficulty axis. We view this as a promising direction for difficulty-aware benchmark design.

\subsection{Extended Related Work}

\paragraph{Difficulty estimation.} Prior work often derives difficulty labels from human judgment or benchmark metadata. For example, MATH~\cite{hendrycks2021measuring} annotates problems with five difficulty levels, GPQA~\cite{rein2024gpqa} collects post-hoc expert difficulty ratings, and LingOly~\cite{bean2024lingoly} includes five levels of human difficulty. Such labels are costly to obtain at scale, and they primarily reflect human-perceived difficulty rather than the difficulty exhibited by current LLMs on the questions. Recent work has explored alternative estimation strategies. \citep{zhu2025llm} estimate LLM-perceived question difficulty from hidden representations. \cite{ballon2025estimating} ask an LLM to make pairwise difficulty judgments and aggregate them with Bradley–Terry scores. EIP~\cite{zhang2026rankllm} instead jointly infers question difficulty and model competency from observed successes and failures on a model–question bipartite graph. Complementarily, LoRe~\cite{zhang2025reasoning} studies intrinsic problem complexity, defined by the minimum number of computational steps required to solve a problem.
Among these approaches, IRT remains a common foundation for difficulty-aware evaluation because it provides a principled probabilistic framework for jointly modeling item difficulty and model ability directly from response data~\cite{chen2025item}. 

\section{Theoretical Results and Proofs}
\label{sec:app_proof}

\subsection{Measure Mismatch}
\label{sec:app_proof_mismatch}

We first restate Theorem~\ref{thm:w1_rank_stability} and provide the corresponding proof.

\begin{theorem}[Distribution mismatch controls score distortion and rank stability]
Suppose the performance difference function $\Delta$ is $L$-Lipschitz, i.e.,
\[
|\Delta(z)-\Delta(y)| \le L |z-y|,
\qquad \forall z,y \in [0,1].
\]
Then
\[
\bigl|D_B(\theta',\theta) - D_\mu(\theta',\theta)\bigr|
\le
L\, W_1(\nu_B,\mu).
\]
Consequently, if
\[
|D_\mu(\theta',\theta)| > L\, W_1(\nu_B,\mu),
\]
then
\[
\operatorname{sign}\bigl(D_B(\theta',\theta)\bigr)
=
\operatorname{sign}\bigl(D_\mu(\theta',\theta)\bigr),
\]
so the benchmark-induced ordering agrees with the target ordering, i.e., no rank flip can occur.
\end{theorem}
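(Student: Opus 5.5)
The plan is to express the difference of the two target quantities as a single integral against a coupling, and then use the Lipschitz bound pointwise. By the definitions in Section 2, $D_B(\theta',\theta)=\int \Delta\,d\nu_B$ and $D_\mu(\theta',\theta)=\int \Delta\,d\mu$. Since $\Delta$ is Lipschitz on the compact interval $[0,1]$, it is continuous and bounded, so both integrals are finite.

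Fix an arbitrary coupling $\pi\in\Pi(\nu_B,\mu)$. Because its marginals are $\nu_B$ and $\mu$, we can write
\[
D_B(\theta',\theta)-D_\mu(\theta',\theta)=\int_{[0,1]^2}\bigl(\Delta(u)-\Delta(v)\bigr)\,d\pi(u,v).
\]
Moving the absolute value inside the integral and applying $|\Delta(u)-\Delta(v)|\le L|u-v|$ gives
\[
\bigl|D_B-D_\mu\bigr|\le L\int |u-v|\,d\pi(u,v).
\]
Taking the infimum over $\pi$ then yields $L\,W_1(\nu_B,\mu)$. This is the easy direction of Kantorovich--Rubinstein duality, so the full duality theorem is not needed.

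The sign statement follows directly. Write $\varepsilon:=L\,W_1(\nu_B,\mu)$ and suppose $D_\mu>\varepsilon\ge 0$. Then
\[
D_B\ge D_\mu-\varepsilon>0.
\]
Symmetrically, if $D_\mu<-\varepsilon$, then $D_B\le D_\mu+\varepsilon<0$. In either case the two quantities have the same strict sign. In particular $D_B\neq 0$, so there is no tie and no rank flip.

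The only point that needs care is the existence of couplings and the measurability or integrability needed to split the integral over the product. This is routine: the product measure $\nu_B\otimes\mu$ is always a valid coupling, and $\Delta$ is bounded and continuous. So I expect no real obstacle. The main thing is to state the marginal identity $\int f(u)\,d\pi=\int f\,d\nu_B$ (and likewise for $\mu$) explicitly.
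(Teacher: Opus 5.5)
Your proposal is correct and matches the paper's proof essentially step for step. Like the paper, you fix a coupling $\pi\in\Pi(\nu_B,\mu)$, write $D_B-D_\mu=\int(\Delta(u)-\Delta(v))\,d\pi(u,v)$, bound it by $L\int|u-v|\,d\pi$, take the infimum over couplings, and get sign agreement from $D_B\ge D_\mu-L\,W_1(\nu_B,\mu)>0$ (and symmetrically when $D_\mu<0$); your remarks on integrability and the product coupling only make explicit what the paper leaves implicit.
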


\begin{proof}
Fix any coupling $\pi \in \Pi(\nu_B,\mu)$. Since $\pi$ has marginals $\nu_B$ and $\mu$, we can write
\[
D_B(\theta',\theta) - D_\mu(\theta',\theta)
=
\int \Delta(z)\, d\nu_B(z) - \int \Delta(y)\, d\mu(y)
=
\int_{[0,1]^2} \bigl(\Delta(z)-\Delta(y)\bigr)\, d\pi(z,y).
\]
Taking absolute values and using the Lipschitz property of $\Delta$ gives
\[
\bigl|D_B(\theta',\theta) - D_\mu(\theta',\theta)\bigr|
\le
\int_{[0,1]^2} |\Delta(z)-\Delta(y)|\, d\pi(z,y)
\le
L \int_{[0,1]^2} |z-y|\, d\pi(z,y).
\]
Since this holds for every coupling $\pi \in \Pi(\nu_B,\mu)$, taking the infimum over $\pi$ yields
\[
\bigl|D_B(\theta',\theta) - D_\mu(\theta',\theta)\bigr|
\le
L\, W_1(\nu_B,\mu).
\]

If $D_\mu(\theta',\theta) > 0$, then
\[
D_B(\theta',\theta)
\ge
D_\mu(\theta',\theta) - \bigl|D_B(\theta',\theta) - D_\mu(\theta',\theta)\bigr|
\ge
D_\mu(\theta',\theta) - L\,W_1(\nu_B,\mu)
>
0.
\]
Hence $D_B(\theta',\theta) > 0$. The case $D_\mu(\theta',\theta) < 0$ is symmetric. Therefore $D_B(\theta',\theta)$ and $D_\mu(\theta',\theta)$ must have the same sign, so no rank flip can occur.
\end{proof}

\paragraph{Finite-sample evaluation under repeated sampling.}
In practice, the conditional mean $s_\theta(z)$ is not directly observed. Instead, benchmark scores are estimated from finitely many repeated samples of model outputs. For the $i$-th benchmark example with difficulty level $z_i$, and for each independent trial $r \in \{1,\dots,k\}$, let
\[
Y_{i,r}^{\theta} \in [0,1]
\]
denote the observed score of model $\theta$ on the $r$-th sample, with
\[
\mathbb{E}\!\left[Y_{i,r}^{\theta}\mid z_i\right] = s_\theta(z_i).
\]
Define the $k$-sample benchmark score as
\[
\widehat{S}_{B,k}(\theta)
:=
\frac{1}{nk}\sum_{i=1}^n \sum_{r=1}^k Y_{i,r}^{\theta},
\]
and the corresponding observed performance difference as
\[
\widehat{D}_{B,k}(\theta',\theta)
:=
\widehat{S}_{B,k}(\theta')-\widehat{S}_{B,k}(\theta).
\]

\begin{theorem}[Finite-sample score distortion and ordering stability under repeated sampling]
\label{thm:noisy_rank_stability}
Suppose the performance difference function $\Delta$ is $L$-Lipschitz. Assume that, conditional on the difficulty levels $z_1,\dots,z_n$, the collection of random variables
\[
\left\{Y_{i,r}^{\theta},\, Y_{i,r}^{\theta'}\right\}_{i=1,\dots,n;\, r=1,\dots,k}
\]
is independent, and that each variable takes values in $[0,1]$. Then for any $\delta \in (0,1)$, with probability at least $1-\delta$,
\[
\bigl|\widehat{D}_{B,k}(\theta',\theta)-D_\mu(\theta',\theta)\bigr|
\le
L\,W_1(\nu_B,\mu)
+
\sqrt{\frac{2\log(2/\delta)}{nk}}.
\]
Consequently, if
\[
|D_\mu(\theta',\theta)|
>
L\,W_1(\nu_B,\mu)
+
\sqrt{\frac{2\log(2/\delta)}{nk}},
\]
then with probability at least $1-\delta$,
\[
\operatorname{sign}\!\bigl(\widehat{D}_{B,k}(\theta',\theta)\bigr)
=
\operatorname{sign}\!\bigl(D_\mu(\theta',\theta)\bigr),
\]
and hence no observed rank flip occurs.
\end{theorem}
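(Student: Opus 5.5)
The plan is to split the error by the triangle inequality into a deterministic bias term and a stochastic fluctuation term:
\[
\bigl|\widehat{D}_{B,k}(\theta',\theta)-D_\mu(\theta',\theta)\bigr|
\le
\bigl|\widehat{D}_{B,k}(\theta',\theta)-D_B(\theta',\theta)\bigr|
+
\bigl|D_B(\theta',\theta)-D_\mu(\theta',\theta)\bigr|.
\]
The second term is bounded by $L\,W_1(\nu_B,\mu)$ directly by Theorem~\ref{thm:w1_rank_stability}. No new argument is needed there, since it does not involve the sampling randomness.

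For the first term, I will work conditionally on $z_1,\dots,z_n$. By the definitions,
\[
\widehat{D}_{B,k}(\theta',\theta)=\frac{1}{nk}\sum_{i=1}^n\sum_{r=1}^k\bigl(Y_{i,r}^{\theta'}-Y_{i,r}^{\theta}\bigr).
\]
Since $\mathbb{E}[Y^{\theta}_{i,r}\mid z_i]=s_\theta(z_i)$, its conditional mean is exactly $\frac1n\sum_i\Delta(z_i)=D_B(\theta',\theta)$.

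The natural tool is Hoeffding's inequality, but the constant needs care. Treat $\widehat{D}_{B,k}$ as an average of the $nk$ independent differences $X_{i,r}:=Y_{i,r}^{\theta'}-Y_{i,r}^{\theta}\in[-1,1]$. These differences are independent across $(i,r)$ because the whole collection of $Y$'s is assumed independent. Each $X_{i,r}$ has range of length $2$, so Hoeffding gives
\[
\Pr\bigl(|\widehat{D}_{B,k}-D_B|\ge t\bigr)\le 2\exp\!\Bigl(-\frac{2(nk)^2t^2}{nk\cdot 4}\Bigr)=2\exp\!\Bigl(-\frac{nk\,t^2}{2}\Bigr).
\]
Setting the right-hand side to $\delta$ yields $t=\sqrt{2\log(2/\delta)/(nk)}$, which is exactly the stated constant. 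An alternative is to view $\widehat{D}_{B,k}$ as a sum of $2nk$ independent terms $\pm Y/(nk)$, each with range $1/(nk)$. That route gives the same constant, since $\sum c_j^2=2nk/(nk)^2$. The bound holds conditionally on the $z_i$, and hence also unconditionally.

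Combining the two bounds gives the high-probability inequality. The sign claim then follows as in Theorem~\ref{thm:w1_rank_stability}: on the event of probability at least $1-\delta$, if $D_\mu>0$ then $\widehat{D}_{B,k}\ge D_\mu-(L W_1+t)>0$, and the case $D_\mu<0$ is symmetric.

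The only delicate step is getting the Hoeffding range bookkeeping right so that the constant matches $\sqrt{2\log(2/\delta)/(nk)}$. This requires using the pairwise differences with range $2$, rather than applying separate concentration bounds to each model's score and using a union bound. The latter would lose a factor in the logarithm.
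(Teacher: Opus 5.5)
Your proof is correct and follows the paper's argument: the same triangle-inequality split, Theorem~\ref{thm:w1_rank_stability} for the bias term, and conditional Hoeffding on the $nk$ independent differences $Y_{i,r}^{\theta'}-Y_{i,r}^{\theta}\in[-1,1]$, which gives $2\exp(-nk\,t^2/2)$ and hence the stated radius. One small slip is in your aside: splitting into $2nk$ independent terms of range $1/(nk)$ gives $\sum c_j^2=2/(nk)$ and thus the sharper tail $2\exp(-nk\,t^2)$, not the same constant. That stronger bound still implies the theorem, and it also means the difference-based route is not the only way to reach the stated constant.
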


\begin{proof}
We first decompose the error as
\[
\widehat{D}_{B,k}(\theta',\theta)-D_\mu(\theta',\theta)
=
\Bigl(\widehat{D}_{B,k}(\theta',\theta)-D_B(\theta',\theta)\Bigr)
+
\Bigl(D_B(\theta',\theta)-D_\mu(\theta',\theta)\Bigr).
\]
Therefore,
\[
\bigl|\widehat{D}_{B,k}(\theta',\theta)-D_\mu(\theta',\theta)\bigr|
\le
\bigl|\widehat{D}_{B,k}(\theta',\theta)-D_B(\theta',\theta)\bigr|
+
\bigl|D_B(\theta',\theta)-D_\mu(\theta',\theta)\bigr|.
\]

The second term is deterministic and, by Theorem~\ref{thm:w1_rank_stability}, satisfies
\[
\bigl|D_B(\theta',\theta)-D_\mu(\theta',\theta)\bigr|
\le
L\,W_1(\nu_B,\mu).
\]

It remains to bound the first term. Define
\[
Z_{i,r}
:=
Y_{i,r}^{\theta'}-Y_{i,r}^{\theta}.
\]
Since $Y_{i,r}^{\theta'}, Y_{i,r}^{\theta}\in[0,1]$, we have
\[
Z_{i,r}\in[-1,1].
\]
Moreover, conditional on $z_i$,
\[
\mathbb{E}[Z_{i,r}\mid z_i]
=
\mathbb{E}[Y_{i,r}^{\theta'}\mid z_i]
-
\mathbb{E}[Y_{i,r}^{\theta}\mid z_i]
=
s_{\theta'}(z_i)-s_\theta(z_i)
=
\Delta(z_i).
\]
Hence
\[
\widehat{D}_{B,k}(\theta',\theta)-D_B(\theta',\theta)
=
\frac{1}{nk}\sum_{i=1}^n \sum_{r=1}^k \Bigl(Z_{i,r}-\Delta(z_i)\Bigr).
\]
Conditional on $z_1,\dots,z_n$, the variables $\{Z_{i,r}\}$ are independent and each lies in $[-1,1]$. Applying Hoeffding's inequality conditionally yields, for any $t>0$,
\[
\mathbb{P}\!\left(
\left.
\bigl|\widehat{D}_{B,k}(\theta',\theta)-D_B(\theta',\theta)\bigr|
\ge t
\ \right|\ z_1,\dots,z_n
\right)
\le
2\exp\!\left(-\frac{nk\,t^2}{2}\right).
\]
Setting
\[
t=\sqrt{\frac{2\log(2/\delta)}{nk}}
\]
gives, with probability at least $1-\delta$,
\[
\bigl|\widehat{D}_{B,k}(\theta',\theta)-D_B(\theta',\theta)\bigr|
\le
\sqrt{\frac{2\log(2/\delta)}{nk}}.
\]
Combining this bound with the deterministic bound above proves
\[
\bigl|\widehat{D}_{B,k}(\theta',\theta)-D_\mu(\theta',\theta)\bigr|
\le
L\,W_1(\nu_B,\mu)
+
\sqrt{\frac{2\log(2/\delta)}{nk}}
\]
with probability at least $1-\delta$.

For the second claim, if
\[
|D_\mu(\theta',\theta)|
>
L\,W_1(\nu_B,\mu)
+
\sqrt{\frac{2\log(2/\delta)}{nk}},
\]
then the above inequality implies that $\widehat{D}_{B,k}(\theta',\theta)$ cannot cross zero, so it must have the same sign as $D_\mu(\theta',\theta)$. Therefore no observed rank flip occurs.
\end{proof}

\paragraph{Uniform distribution as a principled target distribution without prior information.}

The following result formalizes the uniform distribution as a principled default target when no prior knowledge is available about where performance differences arise along the difficulty axis. We consider an agnostic setting in which the signed performance difference $\Delta$ has a fixed average level $\rho$ over $[0,1]$, but its location is otherwise unrestricted. For a candidate target distribution $\mu$, the quantity $V_\rho(\mu)$ measures the worst-case amount of difference retained under $\mu$. The theorem below shows that the uniform distribution is the unique choice that preserves the full average difference $\rho$ in the worst case.

\begin{theorem}[Uniform target as the unique maximin choice under complete uncertainty]
Let
\[
\mathcal F_\rho
:=
\left\{
\Delta:[0,1]\to[-1,1]\ \text{measurable} :
\int_0^1 \Delta(z)\,dz = \rho
\right\},
\qquad
V_\rho(\mu)
:=
\inf_{\Delta\in\mathcal F_\rho}\int \Delta(z)\,d\mu(z),
\]
where $\rho\in(-1,1)$ and $\mu\in\mathcal P([0,1])$. Then
\[
\sup_{\mu\in\mathcal P([0,1])} V_\rho(\mu)=\rho\]
and the unique maximizer is the uniform distribution $\lambda=\mathrm{Unif}[0,1]$.
\end{theorem}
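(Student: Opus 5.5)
The plan has two halves: show that the uniform distribution attains the value $\rho$, and show that every non-uniform $\mu$ has $V_\rho(\mu)<\rho$. Together these give both the value of the supremum and uniqueness of the maximizer.

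\textbf{Step 1 (uniform attains $\rho$).} For $\mu=\lambda$, every $\Delta\in\mathcal F_\rho$ satisfies $\int\Delta\,d\lambda=\rho$ by the definition of $\mathcal F_\rho$. Hence $V_\rho(\lambda)=\rho$, and therefore $\sup_\mu V_\rho(\mu)\ge\rho$.

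\textbf{Step 2 (non-uniform targets lose value).} Fix $\mu\ne\lambda$. I will construct $\Delta\in\mathcal F_\rho$ with $\int\Delta\,d\mu<\rho$. Since $\mu\ne\lambda$ as Borel measures on $[0,1]$, there is a Borel set $A$ with $\mu(A)>\lambda(A)$; otherwise $\mu(A)\le\lambda(A)$ for all $A$, and applying this to complements forces equality.

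\begin{itemize}
\item \emph{Case $\lambda(A)=0$.} Then $\mu(A)>0$. Take $\Delta=\rho$ off $A$ and $\Delta=-1$ on $A$. This changes $\Delta$ only on a Lebesgue-null set, so $\Delta\in\mathcal F_\rho$. It gives
\[
\int\Delta\,d\mu=\rho-(1+\rho)\mu(A)<\rho,
\]
using $\rho>-1$.
\item \emph{Case $\lambda(A)=a\in(0,1)$.} Perturb the constant function $\rho$: set $\Delta=\rho-\varepsilon(1-a)$ on $A$ and $\Delta=\rho+\varepsilon a$ on $A^c$. The Lebesgue integral stays $\rho$, and for small $\varepsilon>0$ the values stay in $[-1,1]$ because $\rho\in(-1,1)$ is interior. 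Then
\[
\int\Delta\,d\mu-\rho=\varepsilon\bigl(a(1-\mu(A))-(1-a)\mu(A)\bigr)=\varepsilon\bigl(a-\mu(A)\bigr)<0.
\]
\item \emph{Case $\lambda(A)=1$.} This cannot happen, since $\mu(A)\le 1$ would contradict $\mu(A)>\lambda(A)$.
\end{itemize}

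In every admissible case $V_\rho(\mu)<\rho$. Combined with Step 1, this gives $\sup_\mu V_\rho(\mu)=\rho$, attained only at $\lambda$.

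\textbf{Where the care is needed.} The main subtlety is uniqueness, specifically the need for strict inequality $V_\rho(\mu)<\rho$ for every $\mu\ne\lambda$. This is exactly where the hypothesis that $\rho$ lies in the open interval $(-1,1)$ enters: at $\rho=\pm1$ the class $\mathcal F_\rho$ is a single function a.e. and admits no perturbation room. A second point is that $\mu$ may have atoms or a singular part. This is handled by the null-set modification in the first case, which is legitimate because membership in $\mathcal F_\rho$ depends on $\Delta$ only through its Lebesgue integral and pointwise bounds.
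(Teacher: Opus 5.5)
Your proof is correct, and it takes a genuinely different and more elementary route than the paper's. The paper rescales to $G=(\Delta+1)/2$ with mean $\beta=(1+\rho)/2$ and takes the Lebesgue decomposition $\mu=f\,dz+\mu_s$. It neutralizes $\mu_s$ by setting $G=0$ on a Lebesgue-null set carrying it, and then invokes the bathtub principle to get $V_\rho(\mu)\le 2\int_0^\beta f^\uparrow(u)\,du-1\le 2\beta m-1\le\rho$, where $f^\uparrow$ is the nondecreasing rearrangement and $m=\int_0^1 f$. Uniqueness then needs a separate equality-case analysis: $m=1$ forces $\mu_s=0$, and equal averages of a nondecreasing function over $[0,\beta]$ and $[0,1]$ force $f^\uparrow=1$ a.e. You instead prove the strict inequality $V_\rho(\mu)<\rho$ directly for every $\mu\ne\lambda$. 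You use only the fact that distinct probability measures satisfy $\mu(A)>\lambda(A)$ on some Borel set $A$, together with an explicit two-level perturbation of the constant $\rho$. Atoms and singular parts need no special treatment, and uniqueness comes for free, with no rearrangement or equality-case argument. Your case split is also unnecessary: the perturbation $\rho-\varepsilon(1-a)$ on $A$ and $\rho+\varepsilon a$ on $A^c$ works equally well when $a=0$. What the paper's route buys in exchange is quantitative information. Its bound is actually attained, so it gives the worst-case retained difference of a target as $2\int_0^\beta f^\uparrow(u)\,du-1$. This shows that singular mass is lost entirely and that the loss is governed by the low-density region of $\mu$, whereas your argument only certifies that some loss occurs.
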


\begin{proof}
Set
\[
\beta := \frac{1+\rho}{2}\in(0,1).
\]
For any measurable function $\Delta:[0,1]\to[-1,1]$, define
\[
G(z):=\frac{\Delta(z)+1}{2}.
\]
Then $G:[0,1]\to[0,1]$ is measurable, and
\[
\int_0^1 G(z)\,dz
=
\frac{1}{2}\int_0^1 \Delta(z)\,dz+\frac{1}{2}.
\]
Therefore, $\Delta \in \mathcal F_\rho$ if and only if
\[
G:[0,1]\to[0,1]
\qquad\text{and}\qquad
\int_0^1 G(z)\,dz=\beta.
\]
Thus, for any $\mu\in\mathcal P([0,1])$,
\[
V_\rho(\mu)
=
\inf_{\Delta\in\mathcal F_\rho}\int \Delta(z)\, d\mu(z)
=
\inf_{G:\,[0,1]\to[0,1],\, \int_0^1 G(z)\,dz=\beta}
\int (2G(z)-1)\, d\mu(z).
\]
Since $\mu$ is a probability measure,
\[
V_\rho(\mu)
=
2 \inf_{G:\,[0,1]\to[0,1],\, \int_0^1 G(z)\,dz=\beta}
\int G(z)\, d\mu(z)-1.
\]
Define
\[
U_\beta(\mu)
:=
\inf_{G:\,[0,1]\to[0,1],\, \int_0^1 G(z)\,dz=\beta}
\int G(z)\, d\mu(z).
\]
Then
\[
V_\rho(\mu)=2U_\beta(\mu)-1.
\]

We now characterize $\sup_{\mu\in\mathcal P([0,1])} U_\beta(\mu)$.

First, if $\mu=\lambda=\mathrm{Unif}[0,1]$, then for every admissible $G$,
\[
\int G(z)\, d\mu(z)
=
\int_0^1 G(z)\,dz
=
\beta,
\]
so
\[
U_\beta(\lambda)=\beta.
\]
Hence
\[
\sup_{\mu\in\mathcal P([0,1])} U_\beta(\mu)\ge \beta.
\]

Next fix any $\mu\in\mathcal P([0,1])$. By the Lebesgue decomposition theorem, we may write
\[
\mu = f(z)\,dz + \mu_s,
\]
where \(f\ge 0\) is integrable and \(\mu_s \perp dz\). Let
\[
m := \int_0^1 f(z)\,dz = \mu_{\mathrm{ac}}([0,1]) \le 1.
\]
Since \(\mu_s\) is singular with respect to Lebesgue measure, there exists a measurable set \(N\subseteq[0,1]\) such that
\[
|N|=0
\qquad\text{and}\qquad
\mu_s([0,1]\setminus N)=0.
\]
Thus, for any admissible \(G\) that vanishes on \(N\), we have
\[
\int G(z)\, d\mu_s(z)=0.
\]
Therefore,
\[
U_\beta(\mu)
\le
\inf_{\substack{0\le G\le 1,\ \int_0^1 G(z)\,dz=\beta,\\ G=0\ \text{on }N}}
\int_0^1 G(z) f(z)\,dz.
\]
Since \(N\) has Lebesgue measure zero, the additional constraint \(G=0\) on \(N\) does not affect the infimum, so
\[
U_\beta(\mu)
\le
\inf_{0\le G\le 1,\ \int_0^1 G(z)\,dz=\beta}
\int_0^1 G(z) f(z)\,dz.
\]
By the bathtub principle, the right-hand side equals
\[
\int_0^\beta f^\uparrow(u)\,du,
\]
where \(f^\uparrow\) is the nondecreasing rearrangement of \(f\). Since \(f^\uparrow\) is nondecreasing and $\beta \in (0,1)$,
\[
\frac{1}{\beta}\int_0^\beta f^\uparrow(u)\,du
\le
\int_0^1 f^\uparrow(u)\,du
=
\int_0^1 f(z)\,dz \le 1.
\]
Hence
\[
U_\beta(\mu)\le \beta.
\]
Since this holds for every \(\mu\in\mathcal P([0,1])\), we conclude that
\[
\sup_{\mu\in\mathcal P([0,1])} U_\beta(\mu)=\beta.
\]
Consequently,
\[
\sup_{\mu\in\mathcal P([0,1])} V_\rho(\mu)
=
2\sup_{\mu\in\mathcal P([0,1])} U_\beta(\mu)-1
=
2\beta-1
=
\rho.
\]

It remains to prove uniqueness. Suppose \(V_\rho(\mu)=\rho\). Then necessarily
\[
U_\beta(\mu)=\beta.
\]
From the argument above, this implies
\[
\int_0^\beta f^\uparrow(u)\,du=\beta
\qquad\text{and}\qquad
\int_0^1 f(z)\,dz=1.
\]
The second condition implies \(\mu_s=0\), so \(\mu\) is absolutely continuous with density \(f\). Moreover,
\[
\frac{1}{\beta}\int_0^\beta f^\uparrow(u)\,du = 1
\]
equals the average of \(f^\uparrow\) over the whole interval \([0,1]\), which is also \(1\). Since \(f^\uparrow\) is nondecreasing, this is possible only if
\[
f^\uparrow(u)=1
\qquad\text{for a.e. }u\in[0,1].
\]
Hence \(f(z)=1\) for a.e. \(z\in[0,1]\), and therefore \(\mu=\lambda\). This proves uniqueness.
\end{proof}

\subsection{Coverage Gaps}
\label{sec:app_proof_missed}

To analyze whether local improvements can be detected from benchmark evaluations, we formulate the problem as a hypothesis test comparing two performance curves before and after improvement. We consider a generic pair of pre- and post-improvement performance curves
\[
s^{-},s^{+}:[0,1]\to[0,1].
\]
Here, $s^{-}(z)$ and $s^{+}(z)$ represent the expected model performance (e.g., pass@1) at difficulty level $z$ before and after improvement, respectively. Their difference is
\[
\Delta(z):=s^{+}(z)-s^{-}(z).
\]

For the \(i\)-th benchmark example and each trial \(r\in\{1,\dots,k\}\), let
\[
Y_{i,r}^{-}\sim \mathrm{Bernoulli}(s^{-}(z_i)),
\qquad
Y_{i,r}^{+}\sim \mathrm{Bernoulli}(s^{+}(z_i)).
\]
We assume that, conditional on the difficulty levels and the performance curves, all variables
\[
\{Y_{i,r}^{-},Y_{i,r}^{+}\}_{i=1,\dots,n;\ r=1,\dots,k}
\]
are mutually independent.
Define
\[
\widehat{S}_{B,k}(s^-)
:=
\frac{1}{nk}\sum_{i=1}^n\sum_{r=1}^k Y_{i,r}^{-},
\qquad
\widehat{S}_{B,k}(s^+)
:=
\frac{1}{nk}\sum_{i=1}^n\sum_{r=1}^k Y_{i,r}^{+},
\]
and
\[
\widehat{D}_{B,k}(s^+,s^-)
:=
\widehat{S}_{B,k}(s^+)-\widehat{S}_{B,k}(s^-).
\]

Fix $w\in(0,1]$ and $a\in(0,1]$. We define the localized alternative class
\[
\mathcal H_1(w,a)
:=
\Bigl\{
(s^-,s^+):
\exists \text{ interval } I\subset[0,1],\ |I|=w,
\ \text{such that } s^{+}(z)-s^{-}(z)\ge a \ \ \forall z\in I
\Bigr\}.
\]

Here, the parameters $(w,a)$ characterize the improvement pattern: 
$w$ represents the width of the difficulty region where the improvement occurs, and $a$ represents the minimum magnitude of improvement within that region.

The null class is
\[
\mathcal H_0
:=
\{(s^-,s^+): s^-=s^+\}.
\]
Thus we consider the testing problem
\[
H_0:\ (s^-,s^+)\in\mathcal H_0
\qquad\text{versus}\qquad
H_1:\ (s^-,s^+)\in\mathcal H_1(w,a).
\]

This formulation captures the question of whether a benchmark can reliably detect localized meaningful improvements of width $w$ and magnitude at least $a$.

A (possibly randomized) test $\phi$ has level at most $\alpha\in(0,1)$ if
\[
\sup_{(s^-,s^+)\in\mathcal H_0}\mathbb E_{s^-,s^+}[\phi]\le \alpha.
\]
Its worst-case power over $\mathcal H_1(w,a)$ is
\[
\inf_{(s^-,s^+)\in\mathcal H_1(w,a)} \mathbb E_{s^-,s^+}[\phi].
\]
We define the minimax worst-case power by
\[
\Pi_\alpha^*(B;w,a,k)
:=
\sup_{\phi:\ \sup_{(s^-,s^+)\in\mathcal H_0}\mathbb E_{s^-,s^+}[\phi]\le \alpha}
\ \inf_{(s^-,s^+)\in\mathcal H_1(w,a)} \mathbb E_{s^-,s^+}[\phi].
\]

\begin{theorem}[Maximum gap as an exact asymptotic minimax detection boundary]
Fix $\alpha\in(0,1)$ and $a\in(0,1]$.

\begin{enumerate}
\item For every $k\in\mathbb N$, if $w\le G(B)$, then
\[
\Pi_\alpha^*(B;w,a,k)=\alpha.
\]

\item If $w>G(B)$, then
\[
\lim_{k\to\infty}\Pi_\alpha^*(B;w,a,k)=1.
\]
\end{enumerate}

Therefore, $G(B)$ is the exact asymptotic minimax detection boundary for interval-localized differences under benchmark $B$.
\end{theorem}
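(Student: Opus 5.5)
The plan is to prove the two parts separately. Part~1 is an indistinguishability argument: we place the whole improvement inside an empty gap. Part~2 uses an explicit per-item test analyzed with Hoeffding's inequality and a union bound, in the spirit of Theorem~\ref{thm:noisy_rank_stability}. Throughout, we use that the data distribution depends on $(s^-,s^+)$ only through the $2n$ values $s^\pm(z_i)$, $i=1,\dots,n$.

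\emph{Part 1 ($w\le G(B)$).} First, the trivial test $\phi\equiv\alpha$ has level $\alpha$ and power $\alpha$ at every alternative, so $\Pi_\alpha^*\ge\alpha$. For the reverse inequality, pick an index $j$ with $z_{(j+1)}-z_{(j)}=G(B)$. Let $I$ be an interval of length $w$ contained in the open gap $(z_{(j)},z_{(j+1)})$; when $w=G(B)$, take $I$ to be the open gap itself. Such an $I$ contains no benchmark point. Now set $s^-\equiv 0$ and $s^+:=a\,\mathbf 1_I$. Then $(s^-,s^+)\in\mathcal H_1(w,a)$, yet $s^+(z_i)=s^-(z_i)=0$ for every $i$. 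Hence the joint law of all $Y^\pm_{i,r}$ coincides with the law under the null pair $s^-=s^+\equiv0$. Any level-$\alpha$ test therefore has power at most $\alpha$ at this alternative, so the infimum over $\mathcal H_1$ is at most $\alpha$, for every $k$. The one delicate point is the boundary case $w=G(B)$. It relies on reading ``interval'' as allowing open intervals, which we would state explicitly; a closed interval would touch benchmark points there.

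\emph{Part 2 ($w>G(B)$).} The first step is a covering fact. Every interval $I\subset[0,1]$ with $|I|=w$ contains some benchmark point $z_i$ with $1\le i\le n$. Otherwise $I$ would lie inside a single gap $[z_{(j)},z_{(j+1)}]$ of the extended list that includes $z_{(0)}=0$ and $z_{(n+1)}=1$. That gap has length at most $G(B)<w$, a contradiction. (This also works at the endpoints, since $0$ and $1$ are not real items but the gaps adjacent to them are at most $G(B)$ as well.) Consequently, every alternative has $\Delta(z_i)\ge a$ for at least one item $i$.

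We cannot use the aggregate difference $\widehat D_{B,k}$ as the test statistic. The alternative leaves $\Delta$ unconstrained outside $I$, so negative differences elsewhere could cancel the gain. Instead we use the max statistic
\[
T_k:=\max_{1\le i\le n}\frac1k\sum_{r=1}^k\bigl(Y^+_{i,r}-Y^-_{i,r}\bigr),
\]
and reject when $T_k>t_k$, where $t_k:=\sqrt{2\log(n/\alpha)/k}$.

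Under $H_0$, each per-item average is a mean of $k$ independent mean-zero variables in $[-1,1]$. Hoeffding gives a tail bound of $\exp(-kt_k^2/2)=\alpha/n$ for each item, and a union bound over the $n$ items gives level $\le\alpha$ uniformly over $\mathcal H_0$.

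Under any alternative, pick the item $i$ with $\Delta(z_i)\ge a$. For $k$ large enough that $t_k<a$, Hoeffding again gives
\[
\mathbb P(T_k\le t_k)\le \exp\!\bigl(-k(a-t_k)^2/2\bigr).
\]
This bound does not depend on the alternative and tends to $0$ because $t_k\to0$. Hence the worst-case power tends to $1$, and since $\Pi_\alpha^*\le1$, we get $\lim_k\Pi_\alpha^*=1$.

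The main conceptual obstacle is recognizing this cancellation issue, which is what forces the per-item max statistic instead of the benchmark score. It is also why the uniformity in the alternative must come from the single guaranteed item rather than from the aggregate difference.
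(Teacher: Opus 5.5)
Your proposal is correct and follows the paper's proof essentially step for step. In Part~1 you use the trivial randomized test together with an improvement $a\,\mathbf 1_I$ hidden in an empty gap; the paper allows a general baseline $c\in[0,1-a]$, while you take $c=0$. In Part~2 you use the per-item max test with threshold $\sqrt{2\log(n/\alpha)/k}$, Hoeffding, and a union bound. Your explicit remarks on the open-interval reading at $w=G(B)$ and your justification of the covering fact make precise two points the paper's proof leaves implicit.
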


\begin{proof}
We prove the two parts separately.

\smallskip
\noindent\textbf{Part 1: if $w\le G(B)$, then $\Pi_\alpha^*(B;w,a,k)=\alpha$.}

By definition of the maximum gap, there exists an index $j\in\{0,\dots,n\}$ such that
\[
z_{(j+1)}-z_{(j)}\ge w.
\]
Let
\[
J := (z_{(j)},z_{(j+1)}).
\]
Since
\[
|J| = z_{(j+1)} - z_{(j)} \ge w,
\]
we may choose an interval
\[
I \subseteq J
\qquad\text{such that}\qquad
|I| = w .
\]
Choose any \(c\in[0,1-a]\), and define
\[
s^{-,\star}(z)\equiv c,
\qquad
s^{+,\star}(z):=c+a\,\mathbf 1\{z\in I\}.
\]
Since \(c\in[0,1-a]\) and \(a\in(0,1]\), both \(s^{-,\star}\) and \(s^{+,\star}\) take values in \([0,1]\). Moreover,
\[
s^{+,\star}(z)-s^{-,\star}(z)=a
\qquad\forall z\in I,
\]
so
\[
(s^{-,\star},s^{+,\star})\in\mathcal H_1(w,a).
\]

Because the interval \(I\) contains no benchmark point, for every \(i=1,\dots,n\) we have
\[
s^{-,\star}(z_i)=c,
\qquad
s^{+,\star}(z_i)=c.
\]
Therefore, under \((s^{-,\star},s^{+,\star})\), the benchmark observations have exactly the same joint distribution as under the null pair
\[
(\bar s,\bar s)\in\mathcal H_0
\qquad\text{with}\qquad
\bar s(z)\equiv c.
\]
Indeed, at every observed benchmark point \(z_i\), both the pre- and post-improvement Bernoulli success probabilities are equal to \(c\).

Now let $\phi$ be any test of level at most $\alpha$. Then
\[
\mathbb E_{s^{-,\star},s^{+,\star}}[\phi]
=
\mathbb E_{\bar s,\bar s}[\phi]
\le
\sup_{(s^-,s^+)\in\mathcal H_0}\mathbb E_{s^-,s^+}[\phi]
\le \alpha.
\]
Hence
\[
\inf_{(s^-,s^+)\in\mathcal H_1(w,a)}\mathbb E_{s^-,s^+}[\phi]
\le
\mathbb E_{s^{-,\star},s^{+,\star}}[\phi]
\le \alpha.
\]
Taking the supremum over all level-$\alpha$ tests yields
\[
\Pi_\alpha^*(B;w,a,k)\le \alpha.
\]

For the reverse inequality, consider the trivial randomized test that rejects with probability $\alpha$ independently of the data. It has level exactly $\alpha$, and for every alternative its power is also exactly $\alpha$. Therefore
\[
\Pi_\alpha^*(B;w,a,k)\ge \alpha.
\]
Combining the two inequalities, we conclude
\[
\Pi_\alpha^*(B;w,a,k)=\alpha.
\]

\smallskip
\noindent\textbf{Part 2: if $w>G(B)$, then $\lim_{k\to\infty}\Pi_\alpha^*(B;w,a,k)=1$.}

We construct an explicit level-$\alpha$ test whose worst-case power tends to $1$ as $k\to\infty$.

For each benchmark point $z_i$, define the empirical difference
\[
\widehat{\Delta}_{i,k}
:=
\frac{1}{k}\sum_{r=1}^k\bigl(Y_{i,r}^{+}-Y_{i,r}^{-}\bigr).
\]
Consider the max test
\[
\phi_{\max}
:=
\mathbf 1\left\{
\max_{1\le i\le n}\widehat{\Delta}_{i,k}
>
t_{k,\alpha}
\right\},
\qquad
t_{k,\alpha}
:=
\sqrt{\frac{2\log(n/\alpha)}{k}}.
\]

We first verify that $\phi_{\max}$ has level at most $\alpha$. Under any null pair $(s,s)\in\mathcal H_0$, for each fixed $i$ and $r$,
\[
Y_{i,r}^{+}-Y_{i,r}^{-}\in[-1,1],
\qquad
\mathbb E_{s,s}[Y_{i,r}^{+}-Y_{i,r}^{-}]=0,
\]
and these random variables are independent across $r$. Hence, by Hoeffding's inequality,
\[
\mathbb P_{s,s}\!\left(\widehat{\Delta}_{i,k}>t\right)
\le
\exp\!\left(-\frac{k t^2}{2}\right).
\]
Applying the union bound over $i=1,\dots,n$ gives
\[
\mathbb P_{s,s}(\phi_{\max}=1)
=
\mathbb P_{s,s}\!\left(\max_{1\le i\le n}\widehat{\Delta}_{i,k}>t_{k,\alpha}\right)
\le
\sum_{i=1}^n
\mathbb P_{s,s}\!\left(\widehat{\Delta}_{i,k}>t_{k,\alpha}\right)
\le
n\exp\!\left(-\frac{k t_{k,\alpha}^2}{2}\right)
=
\alpha.
\]
Since this bound is uniform over $(s,s)\in\mathcal H_0$, the test $\phi_{\max}$ has level at most $\alpha$.

Next fix any alternative pair $(s^-,s^+)\in\mathcal H_1(w,a)$. By definition, there exists an interval $I\subset[0,1]$ with $|I|=w$ such that
\[
s^+(z)-s^-(z)\ge a,
\qquad \forall z\in I.
\]
Because $w>G(B)$, no interval of length $w$ can be disjoint from all benchmark points. Therefore, there exists at least one benchmark index $i^\star$ such that
\[
z_{i^\star}\in I.
\]
For this index we have
\[
\mathbb E_{s^-,s^+}[\widehat{\Delta}_{i^\star,k}]
=
s^+(z_{i^\star})-s^-(z_{i^\star})
\ge a.
\]

Again by Hoeffding's inequality, for any $t\in\mathbb R$,
\[
\mathbb P_{s^-,s^+}\!\left(
\widehat{\Delta}_{i^\star,k}\le t
\right)
\le
\exp\!\left(
-\frac{k}{2}\bigl(a-t\bigr)_+^2
\right),
\]
where $(u)_+:=\max\{u,0\}$. Setting $t=t_{k,\alpha}$ yields
\[
\mathbb P_{s^-,s^+}\!\left(
\widehat{\Delta}_{i^\star,k}\le t_{k,\alpha}
\right)
\le
\exp\!\left(
-\frac{k}{2}\bigl(a-t_{k,\alpha}\bigr)_+^2
\right).
\]
Since the event
\[
\left\{\widehat{\Delta}_{i^\star,k}>t_{k,\alpha}\right\}
\]
implies
\[
\left\{\max_{1\le i\le n}\widehat{\Delta}_{i,k}>t_{k,\alpha}\right\},
\]
we obtain
\[
\mathbb P_{s^-,s^+}(\phi_{\max}=1)
\ge
\mathbb P_{s^-,s^+}\!\left(
\widehat{\Delta}_{i^\star,k}>t_{k,\alpha}
\right)
\ge
1-
\exp\!\left(
-\frac{k}{2}\bigl(a-t_{k,\alpha}\bigr)_+^2
\right).
\]
This lower bound is uniform over all $(s^-,s^+)\in\mathcal H_1(w,a)$, so
\[
\inf_{(s^-,s^+)\in\mathcal H_1(w,a)}
\mathbb E_{s^-,s^+}[\phi_{\max}]
\ge
1-
\exp\!\left(
-\frac{k}{2}\bigl(a-t_{k,\alpha}\bigr)_+^2
\right).
\]
Since the minimax power is at least the worst-case power of any specific level-$\alpha$ test,
\[
\Pi_\alpha^*(B;w,a,k)
\ge
1-
\exp\!\left(
-\frac{k}{2}\bigl(a-t_{k,\alpha}\bigr)_+^2
\right).
\]
Finally, because
\[
t_{k,\alpha}
=
\sqrt{\frac{2\log(n/\alpha)}{k}}
\longrightarrow 0
\qquad\text{as }k\to\infty,
\]
the right-hand side converges to $1$. Since $\Pi_\alpha^*(B;w,a,k)\le 1$ by definition of power, the squeeze theorem implies
\[
\lim_{k\to\infty}\Pi_\alpha^*(B;w,a,k)=1.
\]

This completes the proof.
\end{proof}

\section{Difficulty Estimation Details and Additional Results}
\label{sec:app_diff_est_details}
\subsection{Benchmark Details}
\label{sec:app_benchmark_details}
Tab.~\ref{tab:benchmark_summary} summarizes the benchmarks used in our analysis, including their domains, the number of examples, and the number of rollouts per item during evaluation.

\begin{table}[h]
\centering
\caption{\textbf{Summary of the benchmarks used in our analysis.} For compactness, we abbreviate SWE-bench Verified (Mini) as SWE-bench-Mini, and BIG-Bench Hard-Logical Deduction-\{3,5,7\} objects as BBH-\{3,5,7\} objects throughout the paper.}
\label{tab:benchmark_summary}
\resizebox{1.0\linewidth}{!}{
\begin{tabular}{l l r r}
\toprule
\textbf{Benchmark} & \textbf{Domain} & \textbf{\# Examples} & \textbf{\# Rollouts / Item} \\
\midrule
MATH-500~\cite{lightman2023let} & Math & 500 & 8 \\
Minerva-Math~\cite{lewkowycz2022solving}  & Math & 272 & 8 \\
OlympiadBench~\cite{he2024olympiadbench} & Math & 675 & 8 \\
Gaokao2023en~\cite{zhang2023evaluating} & Math & 385 & 8 \\
AMC23~\cite{amc23} & Math & 40 & 32 \\
AIME24~\cite{AIME24} & Math & 30 & 32 \\
AIME25~\cite{AIME25} & Math & 30 & 32 \\
AIME26~\cite{aime26} & Math & 30 & 32 \\
\midrule
HumanEval~\cite{chen2021codex} & Coding & 164 & 4 \\
MBPP+~\cite{liu2023your} & Coding & 378 & 4 \\
\midrule
GAIA~\cite{mialon2023gaia} & Agentic Tasks & 165 & 1 \\
CORE-Bench Hard
~\cite{siegel2024core} & Agentic Tasks & 45 & 1 \\
SWE-bench Verified (Mini)~\cite{jimenez2023swe} & Agentic Tasks & 50 & 1 \\
USACO
~\cite{shi2024can} & Agentic Tasks & 307 & 1 \\
\midrule
MMLU-Pro-Biology~\cite{wang2024mmlu} & Science & 717 & 4 \\
MMLU-Pro-Chemistry~\cite{wang2024mmlu} & Science & 1132 & 4 \\
MMLU-Pro-Physics~\cite{wang2024mmlu} & Science & 1299 & 4 \\
GPQA-Diamond~\cite{rein2024gpqa} & Science & 198 & 4 \\
ScienceQA~\cite{lu2022learn} & Science & 4241 & 4 \\
\midrule
IFEval~\cite{zhou2023instruction} & Instruction Following & 541 & 4 \\
\midrule
BIG-Bench Hard-Logical Deduction-3 objects~\cite{suzgun2022challenging} & Logic & 250  & 4 \\
BIG-Bench Hard-Logical Deduction-5 objects~\cite{suzgun2022challenging} & Logic & 250  & 4 \\
BIG-Bench Hard-Logical Deduction-7 objects~\cite{suzgun2022challenging} & Logic & 250 & 4 \\
ZebraLogic~\cite{zebralogic2025} & Logic & 1000 & 4 \\
\midrule
MedQA~\cite{jin2021disease} & Medicine & 1273 & 4 \\
\bottomrule
\end{tabular}
}
\end{table}

\subsection{Model Selection Details}
\label{sec:app_model_details}

For non-agentic tasks, we select models based on the overall leaderboard of Text Arena, whose evaluation suite covers a diverse set of domains including \texttt{Expert}, \texttt{Hard Prompts}, \texttt{Coding}, \texttt{Math}, \texttt{Creative Writing}, \texttt{Instruction Following}, and \texttt{Longer Query}. The overall score therefore provides a reasonable proxy for model capability across a broad range of tasks. To obtain models with diverse capability levels, we partition the leaderboard into 10 equal-width tiers according to the overall score. We then randomly select one model from each tier and additionally include one model from both the top and bottom of the leaderboard. This results in a total of 12 models spanning a wide spectrum of capabilities. The detailed model information is summarized in Tab.~\ref{tab:model_details}.

\begin{table}[h]
\centering
\caption{\textbf{Summary of selected models for non-agentic tasks.}}
\label{tab:model_details}
\resizebox{0.85\linewidth}{!}{
\begin{tabular}{l l l c c}
\toprule
\textbf{Model} & \textbf{Size} & \textbf{Provider} & \textbf{Temperature} & \textbf{Top-$p$} \\
\midrule
Qwen3.5-397B-A17B & 397B & Alibaba Qwen & 0.6 & 0.95 \\
GLM-4.6  & 357B & Zhipu AI & 1.0 & 0.95 \\
GLM-4.5-Air & 110B & Zhipu AI & 0.6 & 0.95 \\
Qwen3-Next-80B-A3B-Instruct  & 80B & Alibaba Qwen & 0.7 & 0.8 \\
QwQ-32B & 32B & Alibaba Qwen & 0.6 & 0.95 \\
Olmo-3.1-32B-Think & 32B & Allen AI & 0.6 & 0.95 \\
gpt-oss-20b & 20B & OpenAI & 1.0 & 1.0 \\
phi-4 & 14B & Microsoft & 0.8 & 0.95 \\
Llama-3.1-8B-Instruct & 8B & Meta-Llama & 0.6 & 0.9 \\
vicuna-7b-v1.5 & 7B & LMSYS & 0.9 & 0.6 \\
Llama-3.2-3B-Instruct & 3B & Meta-Llama & 0.6 & 0.9 \\
SmolLM2-1.7B-Instruct & 1.7B & HuggingFaceTB & 0.7 & 0.8 \\
\bottomrule
\end{tabular}
}
\end{table}

For agentic tasks, we use models included in the HAL evaluation platform. These include several frontier proprietary models from major providers, such as Claude Sonnet~4.5, Claude Opus~4.1, GPT-5, and Gemini 2.0 Flash.

\subsection{IRT Fitting and Diagnostics}
\label{sec:app_irt}

To obtain reliable difficulty estimates, we conduct several diagnostic checks on both the response matrices and the IRT fitting procedure.

\paragraph{Response matrix diagnostics.}
We examine each response matrix using four diagnostics. The first two, \emph{model information} and \emph{item information}, measure whether the matrix contains enough discriminative signal along the model and item dimensions~\cite{baker2001basics}. 
The latter two, \emph{model inversion rate} and \emph{item inversion rate}, measure how well the empirical response matrix agrees with the monotone structure implied by 1PL-IRT~\cite{sijtsma1998nonparametric, sijtsma1996survey}. 

Formally, for each item--model pair $(i,j)$, we aggregate the observed responses into the empirical success rate of model $j$ on item $i$, defined as $p_{ij}=n_{ij}^{\mathrm{correct}}/n_{ij}^{\mathrm{trials}}$. We then aggregate these success rates along the model and item dimensions:
\[
p_j=\frac{1}{|\mathcal I|}\sum_{i\in\mathcal I} p_{ij}, \qquad
p_i=\frac{1}{|\mathcal J|}\sum_{j\in\mathcal J} p_{ij},
\]
where $\mathcal I$ and $\mathcal J$ denote the sets of items and models, respectively. We define model information as the average of $4p_j(1-p_j)$ over models, and item information as the average of $4p_i(1-p_i)$ over items. The intuition is that if many models solve nearly all items or almost none of them, then their aggregate success rates are saturated and contribute limited information for distinguishing model abilities. The same holds when many items are solved by nearly all models or by almost none. In contrast, aggregate success rates near $0.5$ are more informative because they correspond to models or items that are neither saturated nor degenerate. The factor $4$ rescales this quantity to $[0,1]$.

To evaluate whether the response matrix is consistent with the ordering implied by 1PL-IRT, we first rank models by overall success rate and items by overall solve rate. We then compute two inversion rates. The model inversion rate is the fraction of ordered model pairs, evaluated item by item, for which a stronger model performs worse than a weaker one. The item inversion rate is the fraction of ordered item pairs, evaluated model by model, for which an easier item is solved less often than a harder one. Lower inversion rates indicate better agreement with the latent ordering assumed by IRT.

In our experiments, we use model information $\ge 0.3$, item information $\ge 0.3$, model inversion rate $\le 0.3$, and item inversion rate $\le 0.3$ as basic thresholds. Passing these checks provides evidence that the matrix contains sufficient signal and is reasonably consistent with the ordering assumptions of IRT.

\paragraph{IRT fitting.} We fit the 1PL-IRT model using Bayesian inference. For multiple-choice benchmarks, we additionally include a guessing parameter:
\[
\Pr(Y_{ij}=1)=g_i+(1-g_i)\sigma(\theta_j-b_i),
\]
where \(g_i\) denotes the chance-level probability of answering item \(i\) correctly by guessing.

Model abilities $\theta_j$ and item difficulties $b_i$ are assigned independent Gaussian priors,
\[
\theta_j \sim \mathcal{N}(0,1), \qquad
b_i \sim \mathcal{N}(0,1).
\]
Because each model–item pair is evaluated multiple times, we use a binomial likelihood to model the number of correct responses. Posterior inference is performed using MCMC with the No-U-Turn Sampler (NUTS) implemented in PyMC~\cite{pymc2023}. We run four chains with 1{,}000 warmup iterations and 1{,}000 posterior samples per chain. We set the target acceptance rate to 0.95.

We also explored alternative prior structures to assess robustness. In particular, we considered a hierarchical prior for model abilities,
\[
\theta_j \sim \mathcal{N}(\mu_\theta,\sigma_\theta^2), \qquad
\mu_\theta \sim \mathcal{N}(0,1), \quad
\sigma_\theta \sim \text{HalfNormal}(1),
\]
while keeping item difficulties under the same Gaussian prior as above. We additionally tested a hierarchical prior for item difficulties,
\[
b_i \sim \mathcal{N}(\mu_b,\sigma_b^2), \qquad
\mu_b \sim \mathcal{N}(0,1), \quad
\sigma_b \sim \text{HalfNormal}(1),
\]
with model abilities using the standard Gaussian prior. Across these variants, the estimated item difficulties and model abilities were highly consistent. Therefore, we report results using the simplest specification with independent $\mathcal{N}(0,1)$ priors.

\paragraph{IRT diagnostics.}

To assess convergence and sampling quality, we monitor $3$ standard MCMC diagnostics: the maximum Gelman–Rubin statistic $\hat{R}$ across parameters, the minimum bulk effective sample size (ESS), and the number of divergent transitions. The $\hat{R}$ statistic measures agreement across chains, with values close to 1 indicating good convergence. Bulk ESS quantifies the effective number of independent posterior samples in the main mass of the distribution, with larger values indicating more reliable estimation. Divergent transitions indicate potential numerical or geometric pathologies during NUTS sampling, and zero divergent transitions are preferred.

\paragraph{Diagnostic results.}
Tabs.~\ref{tab:response_matrix_diagnostics} and~\ref{tab:mcmc_diagnostics} summarize the diagnostic results for all benchmarks. All benchmarks pass the diagnostic thresholds.

\begin{table}[h]
\centering
\caption{\textbf{Response matrix diagnostic results.} $\uparrow$ indicates that larger values are preferred and $\downarrow$ indicates that smaller values are preferred. We use model information $\geq 0.3$, item information $\geq 0.3$, model inversion rate $\leq 0.3$, and item inversion rate $\leq 0.3$ as basic thresholds for response matrix diagnostics. All benchmarks pass these checks.}
\label{tab:response_matrix_diagnostics}
\resizebox{1.0\linewidth}{!}{
\begin{tabular}{lrrrr}
\toprule
\textbf{Benchmark} & \textbf{Model Information $\uparrow$} & \textbf{Item Information $\uparrow$} & \textbf{Model Inversion Rate $\downarrow$} & \textbf{Item Inversion Rate $\downarrow$} \\
\midrule
AMC23 & 0.458 & 0.867 & 0.062 & 0.177 \\
AIME24 & 0.514 & 0.824 & 0.040 & 0.101 \\
AIME25 & 0.515 & 0.806 & 0.057 & 0.118 \\
AIME26 & 0.465 & 0.831 & 0.046 & 0.089 \\
Gaokao2023en & 0.661 & 0.681 & 0.101 & 0.112 \\
HumanEval & 0.546 & 0.675 & 0.165 & 0.205 \\
MATH-500 & 0.463 & 0.730 & 0.063 & 0.125 \\
MBPP+ & 0.665 & 0.636 & 0.226 & 0.235 \\
Minerva-Math & 0.724 & 0.554 & 0.174 & 0.107 \\
OlympiadBench & 0.715 & 0.663 & 0.106 & 0.093 \\
BBH-3 objects & 0.344 & 0.541 & 0.069 & 0.231 \\
BBH-5 objects & 0.421 & 0.735 & 0.076 & 0.292 \\
BBH-7 objects & 0.451 & 0.790 & 0.080 & 0.285 \\
GPQA-Diamond & 0.843 & 0.852 & 0.249 & 0.251 \\
IFEval & 0.663 & 0.573 & 0.217 & 0.162 \\
MedQA & 0.795 & 0.777 & 0.264 & 0.262 \\
MMLU-Pro-Biology & 0.617 & 0.632 & 0.172 & 0.160 \\
MMLU-Pro-Chemistry & 0.676 & 0.787 & 0.162 & 0.186 \\
MMLU-Pro-Physics & 0.687 & 0.787 & 0.176 & 0.195 \\
ScienceQA & 0.659 & 0.434 & 0.184 & 0.102 \\
ZebraLogic & 0.581 & 0.796 & 0.048 & 0.060 \\
GAIA & 0.879 & 0.739 & 0.251 & 0.178\\
CORE-Bench Hard & 0.651 & 0.578 & 0.192 & 0.135 \\
SWE-bench-Mini & 0.709 & 0.674 & 0.112 & 0.103\\
USACO & 0.908 & 0.472 & 0.205 & 0.059\\
\bottomrule
\end{tabular}
}
\end{table}

\begin{table}[h]
\centering
\caption{\textbf{MCMC diagnostic results for IRT fitting.} $\uparrow$ indicates that larger values are preferred and $\downarrow$ indicates that smaller values are preferred.}
\label{tab:mcmc_diagnostics}
\resizebox{0.7\linewidth}{!}{
\begin{tabular}{lrrr}
\toprule
\textbf{Benchmark} & \textbf{Max $\hat{R}$ $\downarrow$} & \textbf{Min Bulk ESS $\uparrow$} & \textbf{\# Divergent Transitions $\downarrow$} \\
\midrule
AMC23 & 1.01 & 334 & 0 \\
AIME24 & 1.02 & 272 & 0 \\
AIME25 & 1.01 & 246 & 0 \\
AIME26 & 1.00 & 385 & 0 \\
Gaokao2023en & 1.01 & 569 & 0 \\
HumanEval & 1.01 & 826 & 0 \\
MATH-500 & 1.01 & 1296 & 0\\
MBPP+ & 1.01 & 1167 & 0 \\
Minerva-Math & 1.01 & 487 & 0\\
OlympiadBench & 1.01 & 1246 & 0\\
BBH-3 objects & 1.00 & 1996 & 0 \\
BBH-5 objects & 1.01 & 1450 & 0 \\
BBH-7 objects & 1.00 & 1218 & 0 \\
GPQA-Diamond & 1.00 & 1437 & 0 \\
IFEval & 1.01 & 2120 & 0 \\
MedQA & 1.01 & 1733 & 0 \\
MMLU-Pro-Biology & 1.01 & 2511 & 0 \\
MMLU-Pro-Chemistry & 1.01 & 2014 & 0 \\
MMLU-Pro-Physics & 1.01 & 2002 & 0 \\
ScienceQA & 1.01 & 1757 & 0 \\
ZebraLogic & 1.01 & 1960 & 0 \\
GAIA & 1.00 & 1861 & 0\\
CORE-Bench Hard & 1.01 & 3272 & 0\\
SWE-bench-Mini & 1.00 & 4419 & 0\\
USACO & 1.01 & 2102 & 0\\
\bottomrule
\end{tabular}
}
\end{table}

\paragraph{Leave-one-out robustness analysis.}
For each non-agentic benchmark, we assess the robustness of IRT-based difficulty estimation by performing a leave-one-out analysis over the selected model set. Specifically, given the $12$ selected models, we refit IRT $12$ times, each time removing one model and re-estimating item difficulties from the remaining $11$ models. We then compare the resulting difficulty estimates with the original estimates obtained from all $12$ models using Pearson and Spearman correlations. For each benchmark, we report both the average and the minimum correlation across the $12$ leave-one-out refits. As shown in Tab.~\ref{tab:loo_irt_robustness}, the average Pearson and Spearman correlations are at least $0.9723$ and $0.9704$, respectively, across all benchmarks. Even the worst-case leave-one-out correlation remains high, with the minimum Pearson and Spearman correlations being $0.9137$ and $0.9089$, respectively. These results indicate that the estimated item difficulties are stable to the exclusion of any single model, supporting the adequacy of our $12$-model selection for stable difficulty estimation.

\begin{table*}[t]
\centering
\small
\caption{\textbf{Leave-one-out robustness of IRT difficulty estimation.}
For each benchmark, we fit IRT using the full set of 12 models, then refit IRT after removing one model at a time and compare the resulting item difficulties with those from the full fit. We report the mean and minimum Pearson and Spearman correlations across the 12 leave-one-out refits. The consistently high correlations indicate that our selected 12 models are sufficient for stable item difficulty estimation.}
\label{tab:loo_irt_robustness}
\resizebox{1.0\textwidth}{!}{
\begin{tabular}{lrrrr}
\toprule
\textbf{Benchmark} & \textbf{Pearson Mean} & \textbf{Pearson Min} & \textbf{Spearman Mean} & \textbf{Spearman Min} \\
\midrule
AMC23 & 0.9962 & 0.9820 & 0.9921 & 0.9484 \\
AIME24 & 0.9985 & 0.9940 & 0.9977 & 0.9920 \\
AIME25 & 0.9978 & 0.9904 & 0.9941 & 0.9809 \\
AIME26 & 0.9979 & 0.9937 & 0.9972 & 0.9888 \\
Gaokao2023en & 0.9974 & 0.9953 & 0.9957 & 0.9905 \\
HumanEval & 0.9902 & 0.9771 & 0.9871 & 0.9735 \\
MATH-500 & 0.9952 & 0.9896 & 0.9935 & 0.9844 \\
MBPP+ & 0.9902 & 0.9751 & 0.9840 & 0.9599 \\
Minerva-Math  & 0.9974 & 0.9951 & 0.9896 & 0.9871 \\
OlympiadBench & 0.9980 & 0.9953 & 0.9973 & 0.9946 \\
BBH-3 objects & 0.9766 & 0.9137 & 0.9764 & 0.9089 \\
BBH-5 objects & 0.9723 & 0.9207 & 0.9704 & 0.9162 \\
BBH-7 objects & 0.9764 & 0.9360 & 0.9745 & 0.9248 \\
GPQA-Diamond & 0.9941 & 0.9893 & 0.9937 & 0.9852 \\
IFEval & 0.9945 & 0.9853 & 0.9910 & 0.9806 \\
MedQA & 0.9916 & 0.9875 & 0.9861 & 0.9722 \\
MMLU-Pro-Biology & 0.9951 & 0.9894 & 0.9923 & 0.9792 \\
MMLU-Pro-Chemistry & 0.9955 & 0.9914 & 0.9936 & 0.9860 \\
MMLU-Pro-Physics & 0.9952 & 0.9907 & 0.9934 & 0.9839 \\
ScienceQA & 0.9965 & 0.9893 & 0.9869 & 0.9330 \\
ZebraLogic  & 0.9974 & 0.9948 & 0.9966 & 0.9909 \\
\midrule
\textbf{Average} & \textbf{0.9926} & \textbf{0.9798} & \textbf{0.9897} & \textbf{0.9696} \\
\bottomrule
\end{tabular}
}
\end{table*}

\subsection{Metric Calculation Details}

For a benchmark with $n$ items, we compare its normalized difficulty points (min-max normalization) to a same-size empirical uniform reference on $[0,1]$. Let
$z_{(1)}\le \cdots \le z_{(n)}$ denote the sorted normalized difficulties. We define the uniform reference points as the midpoints of $n$ equal-width bins:
\[
u_i=\frac{i-\frac{1}{2}}{n}, \qquad i=1,\ldots,n.
\]
The $\mathrm{DDM}$ is then computed as
\[
\mathrm{DDM}(B)
=
\frac{1}{n}\sum_{i=1}^n |z_{(i)}-u_i|.
\]
This is the one-dimensional 1-Wasserstein distance between the benchmark empirical measure and the same-size empirical uniform reference
\[
\lambda_n=\frac{1}{n}\sum_{i=1}^n \delta_{u_i}.
\]
For the gap-based metric, let
$z_{(0)}=0$ and $z_{(n+1)}=1$. The maximum gap is
\[
G(B)=\max_{0\le i\le n}\bigl(z_{(i+1)}-z_{(i)}\bigr).
\]

\subsection{Model Comparison for Different Difficulty Estimation Models}
\label{app:irt_model_comparison}

Our empirical analysis relies on a scalar 1PL IRT difficulty estimate. To assess whether this modeling choice is supported by the response data, we compare 1PL against more flexible IRT specifications, including 2PL and multidimensional IRT models.

We first compare 1PL and 2PL using Bayesian Information Criterion (BIC), where lower BIC is preferred. As shown in Tab.~\ref{tab:irt_bic_1pl_2pl}, BIC selects 1PL for 21 out of the 25 benchmarks, often by large margins. This indicates that for most benchmarks, adding item-specific discrimination parameters is not supported after accounting for model complexity.

For the remaining four benchmarks where BIC favors 2PL, the 2PL fits exhibit problematic sampling diagnostics, while the corresponding 1PL fits remain well-behaved. Tab.~\ref{tab:irt_2pl_diagnostics} reports the convergence diagnostics for these cases. For example, on Gaokao2023en, the 2PL fit produces max $\hat{R}=1.06$, min bulk ESS $=63$, and 983 divergent transitions, whereas the 1PL fit has max $\hat{R}=1.01$, min bulk ESS $=569$, and zero divergent transitions. Similar issues appear for AIME24, AIME25, and AMC23. Therefore, although 2PL is occasionally favored by BIC, these 2PL fits do not provide reliable alternative difficulty estimates.

\begin{table}[t]
\centering
\small
\caption{\textbf{BIC comparison between 1PL and 2PL across all benchmarks.}
Lower BIC is preferred. We define $\Delta\mathrm{BIC}=\mathrm{BIC}_{\mathrm{2PL}}-\mathrm{BIC}_{\mathrm{1PL}}$, so positive values favor 1PL. BIC selects 1PL for 21 out of 25 benchmarks.}
\label{tab:irt_bic_1pl_2pl}
\begin{tabular}{lrrrr}
\toprule
\textbf{Benchmark} & \textbf{$\mathrm{BIC}_{\mathrm{1PL}}$} & \textbf{$\mathrm{BIC}_{\mathrm{2PL}}$} & \textbf{$\Delta\mathrm{BIC}$} & \textbf{Selected} \\
\midrule
ScienceQA & 126817 & 151726 & +24909 & 1PL \\
MMLU-Pro-Physics & 44290 & 52569 & +8279 & 1PL \\
MedQA & 52414 & 60340 & +7926 & 1PL \\
ZebraLogic & 22000 & 29319 & +7319 & 1PL \\
MMLU-Pro-Chemistry & 37446 & 44645 & +7199 & 1PL \\
MMLU-Pro-Biology & 21712 & 25399 & +3687 & 1PL \\
IFEval & 17614 & 20286 & +2672 & 1PL \\
USACO & 5349 & 7500 & +2151 & 1PL \\
BBH-3 objects & 5362 & 6958 & +1596 & 1PL \\
MBPP+ & 14222 & 15778 & +1556 & 1PL \\
BBH-7 objects & 6838 & 8260 & +1422 & 1PL \\
BBH-5 objects & 6515 & 7910 & +1395 & 1PL \\
GAIA & 7130 & 8166 & +1036 & 1PL \\
Minerva-Math & 11865 & 12798 & +933 & 1PL \\
GPQA-Diamond & 8080 & 8739 & +659 & 1PL \\
MATH-500 & 17194 & 17739 & +545 & 1PL \\
OlympiadBench & 25166 & 25466 & +300 & 1PL \\
SWE-bench-Mini & 1624 & 1876 & +252 & 1PL \\
CORE-Bench Hard & 2881 & 3106 & +225 & 1PL \\
HumanEval & 5188 & 5376 & +188 & 1PL \\
AIME26 & 1358 & 1362 & +4 & 1PL \\
\midrule
Gaokao2023en & 15054 & 14977 & -77 & 2PL \\
AIME25 & 1593 & 1512 & -81 & 2PL \\
AIME24 & 1496 & 1409 & -87 & 2PL \\
AMC23 & 2607 & 2052 & -555 & 2PL \\
\bottomrule
\end{tabular}
\end{table}

\begin{table}[t]
\centering
\small
\caption{\textbf{Posterior diagnostics for benchmarks where BIC favors 2PL.}
We report the maximum $\hat{R}$, minimum bulk ESS, and number of divergent transitions across fitted parameters. Although BIC favors 2PL on these four benchmarks, the 2PL fits exhibit substantially worse sampling diagnostics, while the corresponding 1PL fits remain well-behaved.}
\label{tab:irt_2pl_diagnostics}
\resizebox{\textwidth}{!}{
\begin{tabular}{lrrrrrr}
\toprule
\multirow{2}{*}{\textbf{Benchmark}} 
& \multicolumn{3}{c}{\textbf{1PL}} 
& \multicolumn{3}{c}{\textbf{2PL}} \\
\cmidrule(lr){2-4} \cmidrule(lr){5-7}
& \textbf{Max $\hat{R}$ $\downarrow$} 
& \textbf{Min Bulk ESS $\uparrow$} 
& \textbf{\# Div. $\downarrow$} 
& \textbf{Max $\hat{R}$ $\downarrow$} 
& \textbf{Min Bulk ESS $\uparrow$} 
& \textbf{\# Div. $\downarrow$} \\
\midrule
Gaokao2023en & 1.01 & 569 & 0 & 1.06 & 63  & 983 \\
AIME25       & 1.01 & 246  & 0 & 1.02 & 188 & 18  \\
AIME24       & 1.02 & 272  & 0 & 1.04 & 118 & 2   \\
AMC23        & 1.01 & 334  & 0 & 1.03 & 146 & 8   \\
\bottomrule
\end{tabular}
}
\end{table}

We also evaluate multidimensional IRT models with two and three latent dimensions, denoted MIRT-2 and MIRT-3. Tab.~\ref{tab:mirt_diagnostics} reports convergence diagnostics on the 15 smaller benchmarks. Overall, these multidimensional fits do not provide stable posterior estimates: most runs have maximum $\hat{R}$ well above the standard convergence threshold of 1.05 and extremely small minimum bulk ESS, indicating poor mixing. On larger benchmarks, the multidimensional models are substantially more expensive and did not complete reliably within our fitting budget under the same Bayesian fitting procedure. Thus, the current response matrices do not support stable multidimensional difficulty estimates for our empirical analysis.

\begin{table}[t]
\centering
\small
\caption{\textbf{Convergence diagnostics for multidimensional IRT fits on smaller benchmarks.}
We report the maximum $\hat{R}$, minimum bulk ESS, and number of divergent transitions across fitted parameters. The multidimensional fits generally exhibit poor mixing, with high maximum $\hat{R}$ and very small minimum bulk ESS, indicating that they do not provide stable alternative difficulty estimates.}
\label{tab:mirt_diagnostics}
\resizebox{\textwidth}{!}{
\begin{tabular}{lrrrrrr}
\toprule
\multirow{2}{*}{\textbf{Benchmark}}
& \multicolumn{3}{c}{\textbf{MIRT-2}}
& \multicolumn{3}{c}{\textbf{MIRT-3}} \\
\cmidrule(lr){2-4} \cmidrule(lr){5-7}
& \textbf{Max $\hat{R}$ $\downarrow$}
& \textbf{Min Bulk ESS $\uparrow$}
& \textbf{\# Div. $\downarrow$}
& \textbf{Max $\hat{R}$ $\downarrow$}
& \textbf{Min Bulk ESS $\uparrow$}
& \textbf{\# Div. $\downarrow$} \\
\midrule
AIME24 & 1.02 & 202 & 0 & 1.68 & 6  & 0 \\
AIME25 & 1.75 & 6   & 0 & 1.02 & 172 & 0 \\
AIME26 & 1.54 & 7   & 0 & 1.44 & 8  & 0 \\
AMC23 & 1.75 & 6   & 0 & 1.35 & 10 & 0 \\
Minerva-Math & 1.73 & 6 & 0 & 1.37 & 10 & 0 \\
GPQA-Diamond & 1.81 & 6 & 0 & 1.67 & 6 & 0 \\
HumanEval & 1.58 & 7 & 0 & 1.90 & 6 & 0 \\
MBPP+ & 2.24 & 5 & 0 & 1.88 & 6 & 0 \\
BBH-3 objects & 1.78 & 6 & 0 & 1.26 & 12 & 0 \\
BBH-5 objects & 1.92 & 6 & 0 & 1.23 & 13 & 0 \\
BBH-7 objects & 1.84 & 6 & 0 & 1.16 & 20 & 0 \\
CORE-Bench Hard & 1.77 & 6 & 0 & 1.09 & 31 & 0 \\
GAIA & 1.62 & 7 & 0 & 1.27 & 12 & 0 \\
SWE-bench-Mini & 1.48 & 8 & 0 & 1.13 & 20 & 0 \\
USACO & 1.52 & 7 & 0 & 1.35 & 10 & 0 \\
\bottomrule
\end{tabular}
}
\end{table}

Overall, these model-comparison results support the use of 1PL as a reliable difficulty estimation model for our study. While more complex IRT specifications may be useful in other settings, our diagnostics indicate that 1PL provides the most stable and well-supported difficulty estimates for the response matrices analyzed here.

\subsection{Posterior Uncertainty Analysis}
\label{app:metric_uncertainty}

We further quantify how posterior uncertainty in the fitted IRT difficulty estimates affects our calculated metrics. For each posterior draw of the fitted 1PL item difficulties, we recompute $\mathrm{DDM}$ and $\mathrm{Max\ Gap}$, and summarize the posterior variability of these metrics across draws.

\paragraph{$\mathrm{DDM}$ is stable under IRT posterior uncertainty.} Across the 25 benchmarks, the posterior standard deviation of $\mathrm{DDM}$ is typically about 0.011, with range 0.006--0.020. This is small relative to the $\mathrm{DDM}$ point estimates, corresponding to approximately 8\%--12\% in typical cases. Thus, the $\mathrm{DDM}$ values reported in the main paper are not artifacts of difficulty-estimation noise.

\paragraph{$\mathrm{Max\ Gap}$ uncertainty is larger but the coverage-gap conclusion remains robust.}
$\mathrm{Max\ Gap}$ has larger posterior uncertainty, as expected, because it is an extreme-order statistic~\citep{david2004order}: it is defined by the largest adjacent spacing between item difficulties and is therefore more sensitive to local posterior perturbations. Its posterior standard deviation ranges from 0.017 to 0.041. However, the qualitative coverage-gap conclusion remains robust. To verify this, we compare the observed $\mathrm{Max\ Gap}$ against a same-size uniform-sampling baseline under posterior difficulty draws. Across benchmarks, the posterior probability that the observed $\mathrm{Max\ Gap}$ exceeds the same-size uniform baseline has median 0.999. The observed-to-baseline ratio has median $3.85\times$, and is much larger for several large benchmarks, e.g., $18.8\times$ on ScienceQA, $11.3\times$ on MedQA, and $5.9\times$ on ZebraLogic. Therefore, although the exact $\mathrm{Max\ Gap}$ magnitude carries posterior uncertainty, the conclusion that many benchmarks contain coverage gaps far beyond random uniform sampling is robust.

\subsection{Additional Experimental Details and Results}
\label{sec:app_additional_main}
Our code repository is available at:
\begin{center}
\small
\url{https://github.com/ASTRAL-Group/benchmark-difficulty-measure}.
\end{center}
We conducted all experiments on NVIDIA L40S GPUs with 48GB memory and NVIDIA RTX PRO 6000 Blackwell Server Edition GPUs with 96GB memory. Fig.~\ref{fig:appendix_result} provides the difficulty-distribution plots for the remaining benchmarks not shown in Fig.~\ref{fig:main_result}, following the same visualization format.

\begin{figure*}[t]
    \centering
    \includegraphics[width=0.95\textwidth,height=0.8\textheight]{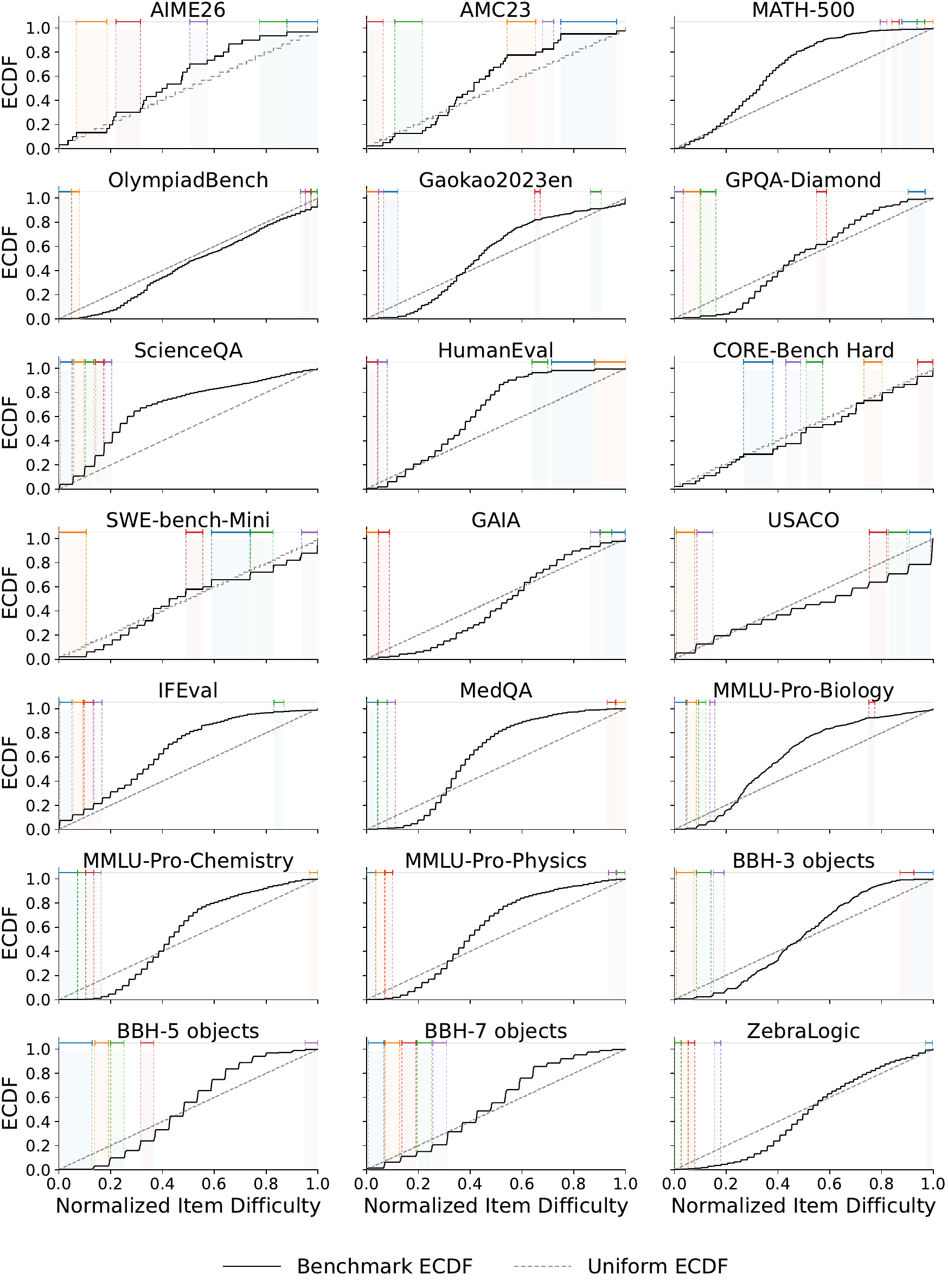}
    \caption{\textbf{Additional benchmark difficulty distributions.}
    Each panel shows the ECDF of normalized item difficulties for one benchmark. The black curve denotes the benchmark ECDF, and the gray dashed line denotes the uniform ECDF, which we use as the target difficulty distribution. Deviations from the uniform ECDF indicate mismatch relative to this neutral target distribution. Colored shaded intervals mark the five largest adjacent gaps between sorted item difficulties.}
    \label{fig:appendix_result}
\end{figure*}

\section{Benchmark Calibration}
\label{sec:app_benchmark_cal}
\paragraph{Problem setting.}
The measure-mismatch perspective suggests a natural way to redesign a benchmark: adjust the benchmark empirical measure so that it better matches a target evaluation distribution. Consider a benchmark
\[
B=\{x_1,\ldots,x_n\},
\]
where each item \(x_i\) has an estimated difficulty \(z_i\in[0,1]\). The benchmark induces the empirical measure
\[
\nu_B=\frac{1}{n}\sum_{i=1}^n \delta_{z_i}
\]
over the difficulty axis. Given a target distribution \(\mu\in\mathcal P([0,1])\), benchmark calibration aims to construct a new benchmark \(B'\) whose empirical measure \(\nu_{B'}\) is closer to \(\mu\) than \(\nu_B\) is.

In this work, we instantiate benchmark calibration through pruning. Specifically, we retain \(k\) items from the original benchmark by selecting a subset \(A\subseteq[n]\) with \(|A|=k\). The selected items form a calibrated benchmark
\[
B_A=\{x_i:i\in A\},
\]
with empirical difficulty measure
\[
\nu_A=\frac{1}{k}\sum_{i\in A}\delta_{z_i}.
\]
The goal is to choose \(A\) so that \(\nu_A\) better approximates the target distribution \(\mu\), for example as measured by \(\mathrm{DDM}(B_A;\mu)=W_1(\nu_A,\mu)\). 

\paragraph{A simple calibration algorithm.}
We use a simple greedy quantile-matching pruning procedure, summarized in Alg.~\ref{alg:benchmark-calibration}. Let \(Q_\mu\) denote the quantile function of the target distribution \(\mu\). Given a subset size \(k\), we first construct \(k\) target difficulty locations by taking the midpoints of \(k\) equal-probability intervals under \(\mu\):
\[
\tau_t=\frac{t-\frac12}{k},\qquad
q_t=Q_\mu(\tau_t),\qquad t=1,\ldots,k .
\]
We then match these target locations to existing benchmark items. Specifically, after normalizing item difficulties to \([0,1]\), we iterate over \(q_1,\ldots,q_k\) and select the unused item whose difficulty is closest to the current target location:
\[
i_t\in
\arg\min_{i\in [n]\setminus \{i_1,\ldots,i_{t-1}\}}
|z_i-q_t|.
\]
The calibrated subset is \(A=\{i_1,\ldots,i_k\}\). 

\begin{algorithm}[t]
\caption{Quantile-Matching Benchmark Calibration}
\label{alg:benchmark-calibration}
\begin{algorithmic}[1]
\Require Normalized item difficulties \(z_1,\ldots,z_n\in[0,1]\), subset size \(k\), target quantile function \(Q_\mu\)
\State Initialize selected set \(A\leftarrow\emptyset\) and available set \(U\leftarrow[n]\)
\For{\(t=1,\ldots,k\)}
    \State Set target quantile level
    \[
    \tau_t \leftarrow \frac{t-\frac12}{k}
    \]
    \State Set target difficulty location
    \[
    q_t \leftarrow Q_\mu(\tau_t)
    \]
    \State Select the closest unused item:
    \[
    i_t \in \arg\min_{i\in U}|z_i-q_t|
    \]
    \State Update \(A\leftarrow A\cup\{i_t\}\)
    \State Update \(U\leftarrow U\setminus\{i_t\}\)
\EndFor
\State \Return \(A\)
\end{algorithmic}
\end{algorithm}

\subsection{Experimental Setup}

In practice, given any target distribution over the difficulty axis, our benchmark calibration idea and Alg.~\ref{alg:benchmark-calibration} can be applied directly. In this experiment, we study whether such calibration can make a single benchmark produce model rankings that are more consistent with a broader in-domain benchmark family.

We consider \(N=8\) math benchmarks, as
 shown in Tab.~\ref{tab:benchmark_summary}. For each benchmark \(B_i\), we treat it as the source benchmark to be calibrated, and use the remaining benchmarks \(\{B_j:j\neq i\}\) only to construct an external reference ranking over models. This leave-one-out design evaluates whether the ranking induced by a calibrated version of \(B_i\) better agrees with the ranking obtained from other benchmarks in the same domain.

Concretely, let
\[
B_i=\{x_{i1},\ldots,x_{in_i}\}
\]
be the \(i\)-th benchmark, where item \(x_{i\ell}\) has normalized difficulty \(z_{i\ell}\in[0,1]\). Since the leave-one-out reference ranking is aggregated from multiple benchmarks in the same domain, it is less tied to the idiosyncratic difficulty concentration of any single source benchmark. We therefore use calibration to prune items from difficulty regions where the source benchmark is overly concentrated. In the absence of an application-specific target distribution, we instantiate the calibration target as the uniform distribution
\[
\lambda=\mathrm{Unif}[0,1].
\]
This choice should not be interpreted as assuming that the leave-one-out reference distribution is exactly uniform. Rather, uniform calibration provides a neutral debiasing strategy: it does not privilege any difficulty region a priori, and it counteracts benchmarks whose empirical mass is concentrated in a narrow portion of the difficulty axis.

We apply Alg.~\ref{alg:benchmark-calibration} to \(B_i\) with target distribution \(\lambda\), producing a calibrated subset \(A_i\subseteq[n_i]\) of size \(k_i\). The calibrated benchmark is
\[
B_{i,A_i}=\{x_{i\ell}:\ell\in A_i\}.
\]

Suppose all benchmarks evaluate the same \(M\) models. Let \(S_i(m)\) denote the average score of model \(m\) on the full source benchmark \(B_i\), and let \(S_i(A_i,m)\) denote its average score on the calibrated subset. The uncalibrated ranking is induced by
\[
S_i(m),\qquad m=1,\ldots,M,
\]
while the calibrated ranking is induced by
\[
S_i(A_i,m),\qquad m=1,\ldots,M.
\]

For the leave-one-out reference ranking, we aggregate model scores over the remaining benchmarks:
\[
S^{\mathrm{ref}}_{-i}(m)
=
\frac{1}{N-1}\sum_{j\neq i} S_j(m).
\]
The ranking induced by \(S^{\mathrm{ref}}_{-i}(m)\) serves as the external reference ranking for source benchmark \(B_i\).

For each source benchmark, we compute the Spearman and Kendall correlations between the source-induced ranking and the leave-one-out reference ranking, both before and after calibration. Improved correlation after calibration suggests that pruning the source benchmark toward a more balanced difficulty distribution makes its model comparison more consistent with the broader in-domain benchmark family.

\begin{table*}[t]
\centering
\small
\caption{
\textbf{Benchmark calibration results on eight math benchmarks.}
For each source benchmark, \textbf{Full} denotes the original benchmark,
\textbf{Random} denotes the average over 30 random pruned subsets of the same size,
and \textbf{Cal.} denotes the calibrated subset produced by our quantile-matching benchmark calibration algorithm.
Spearman and Kendall measure agreement with the leave-one-out reference ranking.
We report \(W_1(\nu,\mathrm{Unif})\) as a mechanism metric to verify that calibration moves the selected benchmark measure closer to the neutral uniform distribution. Bold values indicate the best result among \textbf{Full}, \textbf{Random}, and \textbf{Cal.} for each benchmark and metric.
}
\label{tab:benchmark_calibration}
\begin{adjustbox}{width=\textwidth}
\begin{tabular}{lccccccccc}
\toprule
& \multicolumn{3}{c}{\textbf{Spearman $\uparrow$}}
& \multicolumn{3}{c}{\textbf{Kendall $\uparrow$}}
& \multicolumn{3}{c}{\(\boldsymbol{W_1(\nu,\mathrm{Unif})}\) \(\boldsymbol{\downarrow}\)} \\
\cmidrule(lr){2-4}
\cmidrule(lr){5-7}
\cmidrule(lr){8-10}
\textbf{Benchmark}
& \textbf{Full} & \textbf{Random} & \textbf{Cal.}
& \textbf{Full} & \textbf{Random} & \textbf{Cal.}
& \textbf{Full} & \textbf{Random} & \textbf{Cal.} \\
\midrule
Minerva-Math  & 0.601 & 0.649 & \textbf{0.790} & 0.455 & 0.514 & \textbf{0.667} & 0.122 & 0.115 & \textbf{0.005} \\
MATH-500      & 0.930 & 0.942 & \textbf{0.958} & 0.879 & 0.878 & \textbf{0.909} & 0.164 & 0.165 & \textbf{0.099} \\
Gaokao2023en  & 0.888 & 0.895 & \textbf{0.951} & 0.788 & 0.790 & \textbf{0.879} & 0.088 & 0.086 & \textbf{0.029} \\
AIME24        & \textbf{0.944} & 0.942 & \textbf{0.944} & \textbf{0.848} & 0.847 & \textbf{0.848} & 0.077 & 0.108 & \textbf{0.019} \\
AIME25        & 0.958 & 0.936 & \textbf{0.972} & 0.909 & 0.854 & \textbf{0.915} & 0.079 & 0.126 & \textbf{0.020} \\
AIME26        & 0.949 & 0.950 & \textbf{0.970} & 0.870 & 0.867 & \textbf{0.901} & 0.085 & 0.108 & \textbf{0.017} \\
AMC23         & \textbf{0.937} & 0.934 & \textbf{0.937} & \textbf{0.848} & \textbf{0.848} & \textbf{0.848} & 0.089 & 0.093 & \textbf{0.025} \\
OlympiadBench & \textbf{0.916} & 0.910 & \textbf{0.916} & \textbf{0.818} & 0.806 & \textbf{0.818} & 0.064 & 0.067 & \textbf{0.008} \\
\midrule
Average       & 0.890 & 0.895 & \textbf{0.930} & 0.802 & 0.801 & \textbf{0.848} & 0.096 & 0.109 & \textbf{0.028} \\
\bottomrule
\end{tabular}
\end{adjustbox}
\end{table*}
\subsection{Results}

Experimental results are reported in Tab.~\ref{tab:benchmark_calibration}. The results show that our calibration algorithm improves the agreement between the source benchmark ranking and the leave-one-out reference ranking. On average, calibration improves Spearman correlation from \(0.890\) to \(0.930\), and Kendall correlation from \(0.802\) to \(0.848\). The improvement is especially pronounced on benchmarks with weaker initial agreement. Notably, on Minerva-Math, the full benchmark achieves only \(0.601\) Spearman and \(0.455\) Kendall correlation with the reference ranking, while calibration improves them to \(0.790\) and \(0.667\), respectively. Similarly, on Gaokao2023en, calibration improves Spearman from \(0.888\) to \(0.951\) and Kendall from \(0.788\) to \(0.879\). For the remaining benchmarks, the initial agreement is already high, leaving less room for improvement; nevertheless, calibration consistently preserves this high level of agreement while using the pruned benchmark.

We include a random pruning baseline that uses the same pruning size as our calibrated subset and average its results over 30 random trials. Random pruning is less effective than calibration: it achieves \(0.895\) average Spearman and \(0.801\) average Kendall correlation, both below the calibrated subset. We also report \(W_1(\nu,\mathrm{Unif})\) as a mechanism metric. Calibration consistently reduces this distance across all benchmarks, decreasing the average value from \(0.096\) to \(0.028\). In contrast, random pruning does not reduce this distance on average, yielding an average value of \(0.109\). This suggests that the improvement is not simply due to using a smaller benchmark, but comes from selecting items whose empirical difficulty measure better matches the neutral calibration target.

\section{Additional Details and Experimental Results of \textsc{BenchFill}}

\newpage
\begin{algorithm}[t]
\caption{\textsc{BenchFill}}
\label{alg:gap_filling}
\begin{algorithmic}[1]
\Require Benchmark items with normalized difficulties $\{(x_i,z_i)\}_{i=1}^n$; target gap $[z_L,z_R]$; rewriter LLM $\mathcal{G}$; validity judge $\mathcal{J}$; IRT evaluation LLMs $\{M_j\}_{j=1}^m$ with pre-estimated IRT abilities $\{\hat{\theta}_j\}_{j=1}^m$; trials per LLM $\{K_j\}_{j=1}^m$; number of parents $R$; children per parent $C$; reference list budget $L$; maximum rounds $T$; tolerance $\tau$ on the normalized $[0,1]$ scale.
\Ensure A proposed gap-filling item $x^\star_{\mathrm{item}}$.

\State Set target difficulty $z^\star \gets (z_L+z_R)/2$.
\State Initialize memory $\mathcal{M}$ with existing annotated item--difficulty pairs near the selected gap.
\State Initialize valid candidate set $\mathcal{C}_{\mathrm{valid}} \gets \emptyset$.

\For{$t=1,\dots,T$}
    \State Retrieve a parent set $\mathcal{S}_t \subseteq \mathcal{M}$ with at most $R$ items whose annotated difficulties are closest to $z^\star$.
    \State Construct a reference list $\mathcal{R}_t \subseteq \mathcal{M}$ with $|\mathcal{R}_t|\le L$ using nearby item--difficulty pairs around $z^\star$, with up to half selected from below $z^\star$ and the rest from above $z^\star$ when available.
    
    \For{each parent item $(x_p,z_p) \in \mathcal{S}_t$}
        \State Rescale the displayed difficulties in $z^\star$ and $\mathcal{R}_t$ from $[0,1]$ to $[0,10]$ for prompting.
        \State Rewrite the parent into $C$ child items conditioned on the parent item, target difficulty, and reference list:
        \[
        \{x_{p,1},\dots,x_{p,C}\}
        \gets
        \mathcal{G}(x_p, z^\star, \mathcal{R}_t).
        \]
        
        \For{each child item $x_{p,c}$}
            \If{$\mathcal{J}$ verifies that the reference answer of $x_{p,c}$ is correct}
                \For{$j=1,\dots,m$}
                    \State Evaluate LLM $M_j$ on $x_{p,c}$ for $K_j$ trials and record $c_j$ correct responses.
                \EndFor
                \State Estimate the item difficulty by online calibration:
                \[
                \hat{b}_{p,c}
                \gets
                \arg\max_b
                \sum_{j=1}^{m}
                \left[
                c_j\log \sigma(\hat{\theta}_j-b)
                +
                (K_j-c_j)\log\bigl(1-\sigma(\hat{\theta}_j-b)\bigr)
                \right].
                \]
                \State Normalize $\hat{b}_{p,c}$ using the same transformation as the original benchmark and obtain $\hat{z}_{p,c}$.
                \State Add $(x_{p,c},\hat{z}_{p,c})$ to $\mathcal{M}$.
                \State Add $(x_{p,c},\hat{z}_{p,c})$ to $\mathcal{C}_{\mathrm{valid}}$.
            \EndIf
        \EndFor
    \EndFor
    
    \If{$\mathcal{C}_{\mathrm{valid}} \neq \emptyset$ and
    $\min_{(x,\hat{z})\in \mathcal{C}_{\mathrm{valid}}} |\hat{z}-z^\star| \le \tau$}
        \State \textbf{break}
    \EndIf
\EndFor

\If{$\mathcal{C}_{\mathrm{valid}} = \emptyset$}
    \State \Return failure
\EndIf

\State Select the closest valid child item:
\[
(x^\star_{\mathrm{item}},\hat{z}^\star)
\gets
\arg\min_{(x,\hat{z})\in \mathcal{C}_{\mathrm{valid}}}
|\hat{z}-z^\star|.
\]
\State \Return $x^\star_{\mathrm{item}}$ as the proposed gap-filling item, subject to human expert review.
\end{algorithmic}
\end{algorithm}

\subsection{Implementation Details}
\label{sec:app_gap}

For each benchmark, we focus on the three most pronounced gaps in its estimated difficulty distribution and rewrite one item for each gap. Since an LLM may not reliably distinguish small numerical differences on the normalized $[0,1]$ difficulty scale when the target difficulty is provided in the prompt, we rescale difficulty values to $[0,10]$ for prompting while keeping all reported metrics on the original normalized scale.

For the reference list, we include nearby item--difficulty pairs on both sides of the target difficulty. Specifically, half of the reference items are selected from those with difficulty below the target and closest to it, and the other half are selected from those with difficulty above the target and closest to it. This provides the rewriter with local bidirectional context around the desired difficulty level. The full rewriter prompt template is provided below.

For difficulty annotation, we use the same set of LLMs and the same generation configuration as in the original difficulty estimation stage. The search stops early if the best valid item found so far is within tolerance $\tau=0.01$ of the target difficulty on the normalized $[0,1]$ scale, or otherwise stops after the maximum number of rounds. We then select the valid item whose annotated difficulty is closest to the target. Before being inserted into the benchmark, each selected item undergoes a manual validity check by at least three human experts, who verify its correctness.

We additionally implement a budget-matched best-of-\(N\) uniform-parent rewriting baseline. For each target gap \(g\), let \(T_g\) denote the number of rounds actually run by \textsc{BenchFill} before stopping. Since \textsc{BenchFill} rewrites \(4\) parents per round and produces \(4\) child items per parent, its proposal budget for this gap is \(T_g \times 4 \times 4 = 16T_g\) child items. To match this budget, the baseline uniformly samples \(4\) existing benchmark items as parents and asks the same rewriter LLM to rewrite \(4T_g\) child items from each parent, conditioned on the parent item and the target difficulty. After difficulty annotation, it selects the valid child item whose difficulty is closest to the target among all generated candidates. Thus, the baseline uses the same total rewrite budget and the same final selection rule as \textsc{BenchFill}, but does not use difficulty-aware parent retrieval, nearby reference lists, or iterative memory updates.

\begin{center}
\begin{tcolorbox}[
    title={Rewriter Prompt Template},
    label={box:proposer_prompt},
    width=\linewidth,
    colback=white,
    colframe=black,
    boxrule=0.5pt,
    left=6pt,
    right=6pt,
    top=6pt,
    bottom=6pt,
    fonttitle=\bfseries,
    fontupper=\ttfamily\small,
]

Rewrite the parent item below into \textbf{exactly 1} new benchmark item variant.
Target difficulty (0--10 scale, higher = harder):
\textbf{\{target\_difficulty\}}.

\medskip
\textbf{Parent item} (rewrite this item into a new variant -- do not copy its specific wording, numbers, or answer):

\{parent.question\}

Answer: \{parent.reference\_answer\}

Difficulty (0--10): \{parent.difficulty\} $|$ delta \{target - parent.difficulty\} $|$ generate \texttt{\{harder|easier\}}

\medskip
\textbf{Memory items} (use as style/difficulty reference only, do not copy):

\medskip
--- Memory 1 ---

\{memory[0].question\}

Answer: \{memory[0].reference\_answer\}

Difficulty (0--10): \{memory[0].difficulty\} $|$ delta \{target - memory[0].difficulty\} $|$ generate \texttt{\{harder|easier\}}

\medskip
--- Memory 2 ---

\{memory[1].question\}

Answer: \{memory[1].reference\_answer\}

Difficulty (0--10): \{memory[1].difficulty\} $|$ delta \{target - memory[1].difficulty\} $|$ generate \texttt{\{harder|easier\}}

\medskip
... (one block per memory item) ...

\medskip
Return \textbf{only} one markdown code block: start the first line with \texttt{\textasciigrave\textasciigrave\textasciigrave{}json}, then a newline, then a single JSON array with \textbf{exactly 1} element, then close with \texttt{\textasciigrave\textasciigrave\textasciigrave} on its own line. Do not put anything after the closing fence.

\medskip
Do not stop until the JSON is \textbf{syntactically complete}: every string value is closed with \texttt{"}, every object is closed with \texttt{\}}, the array ends with \texttt{]}, and then the closing \texttt{\textasciigrave\textasciigrave\textasciigrave} line.

\medskip
The array must have \textbf{exactly 1} element -- an object with exactly these keys:
\texttt{"id"} (string), \texttt{"question"} (string), \texttt{"answer"} (string).

\end{tcolorbox}
\end{center}

\subsection{Additional Experimental Results}



\paragraph{Human validation.}
We ensure the reliability of the generated gap-filling items using a two-stage validation process. GPT-5.4 first serves as an automatic validity judge, checking whether each candidate is well-formed and whether its reference answer is correct; candidates that fail this check are discarded and regenerated. Human experts then independently audit the final selected items. Across the 24 target gaps, one initially proposed candidate failed the GPT-5.4 check because of an incorrect reference answer and was regenerated. All 24 final selected items passed the GPT-5.4 validity check and subsequently passed human review without modification or replacement.

\paragraph{Does the rewriter perceive difficulty?} A natural question is whether our rewriter indeed perceives difficulty, instead of hitting the target difficulty blindly. We examine this question from two complementary angles: whether the rewriter already shows difficulty sensitivity before iterative feedback, and whether the subsequent search-and-annotate loop further improves alignment with the target difficulty.

First, to isolate the rewriter's own difficulty perception from later IRT feedback and iterative search, we analyze only the first generation round of each gap-filling case. At this stage, the rewriter is conditioned on the specified target difficulty, but has not yet received feedback on the annotated difficulties of its generated children. Fig.~\ref{fig:first_round_calibration} shows the specified target difficulty and the annotated difficulty of valid first-round children across all 8 benchmarks and 3 target gaps per benchmark. Within most benchmarks, higher specified targets produce higher mean annotated difficulties, and well-separated targets usually induce distinct child-difficulty distributions. This indicates that the rewriter already has non-trivial sensitivity to the specified difficulty scale before iterative feedback is introduced.

Second, we examine whether the search-and-annotate loop further improves target alignment. Fig.~\ref{fig:multi_round_deviation} shows one representative gap-filling case from each benchmark. Across these cases, the mean absolute target deviation generally decreases from the first round to the final round, and in most cases the generated children also become more concentrated around the target. This suggests that the iterative loop further refines the rewriter's initial difficulty signal using IRT annotation and updated memory.

\begin{figure*}[t]
    \centering
    \includegraphics[width=\textwidth]{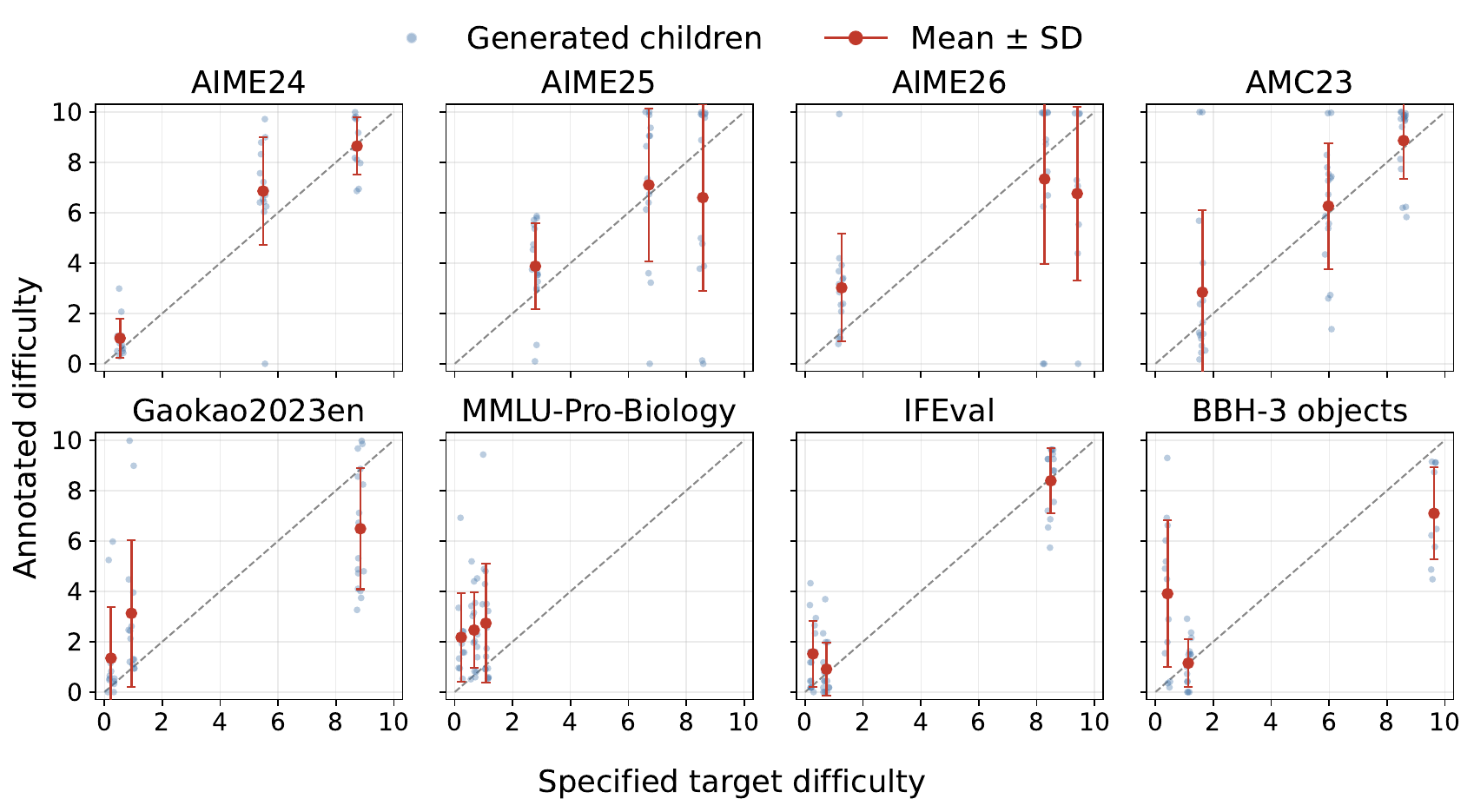}
    \caption{\textbf{First-round alignment of specified target difficulty and annotated difficulty.}
    Each panel corresponds to one benchmark. The horizontal axis shows the specified target difficulty, and the vertical axis shows the annotated difficulty of valid first-round children. Blue points denote individual generated children, and red markers denote the mean and standard deviation for each target difficulty. The dashed diagonal indicates perfect alignment. For readability, both axes are shown on a 0--10 scale, corresponding to normalized difficulties in $[0,1]$ multiplied by 10. Within most benchmarks, higher specified targets produce higher annotated difficulties already in the first round, before any iterative feedback is incorporated.}
    \label{fig:first_round_calibration}
\end{figure*}

\begin{figure*}[t]
    \centering
    \includegraphics[width=\textwidth]{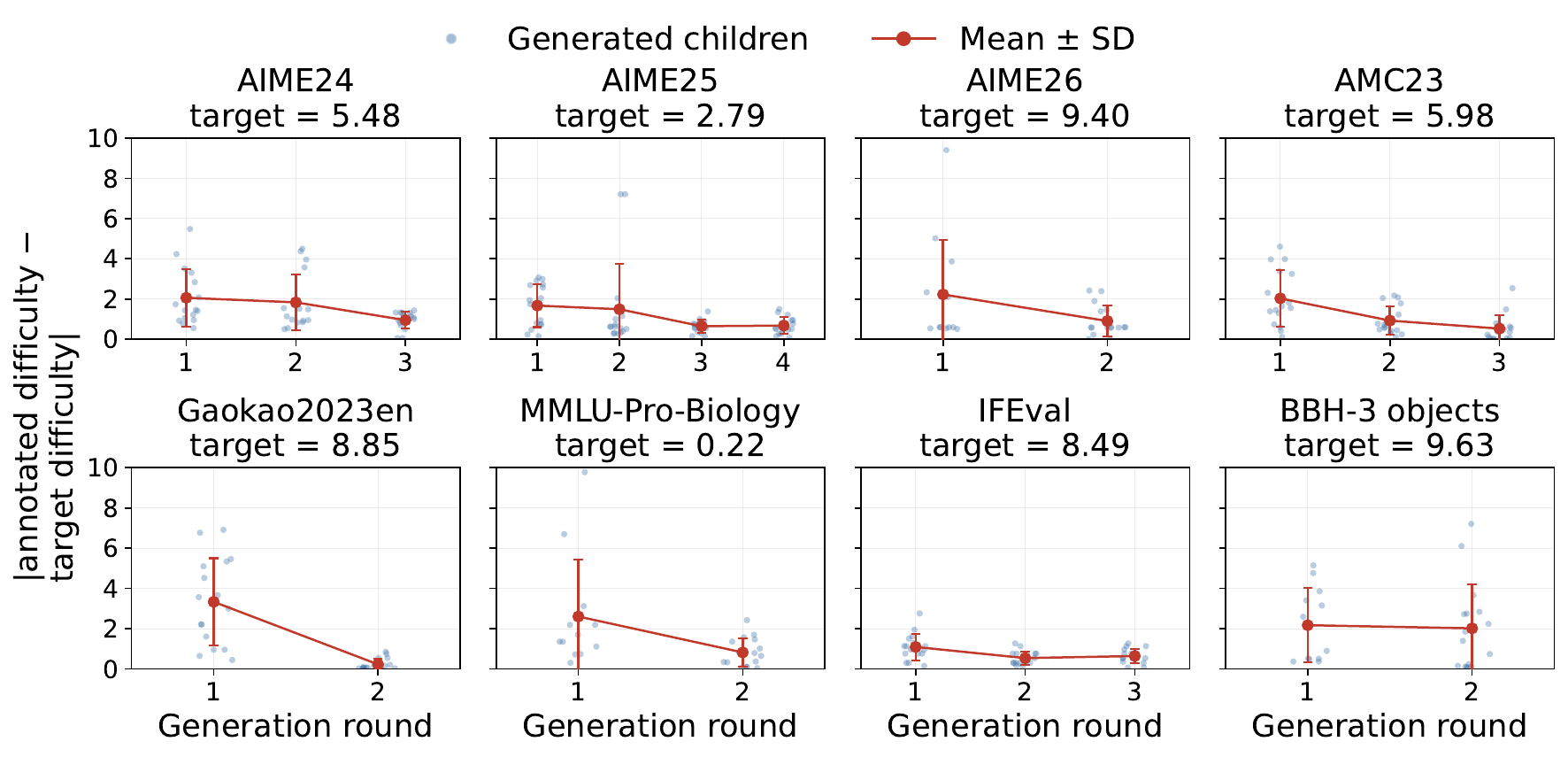}
    \caption{\textbf{Multi-round target alignment during \textsc{BenchFill} search.}
    Each panel shows one representative gap-filling case from a different benchmark. The horizontal axis denotes the generation round, and the vertical axis denotes the absolute target deviation, $|\hat{z} - z^\star|$, for generated children. For readability, target difficulties and absolute deviations are multiplied by 10. Blue points denote individual generated children, and red markers denote the round-level mean and standard deviation. 
    Across most cases, the mean target deviation decreases over rounds, and the generated children become more concentrated around the target, showing that the search-and-annotate loop further improves target alignment beyond the rewriter's initial difficulty sensitivity.}
    \label{fig:multi_round_deviation}
\end{figure*}

\paragraph{Detecting localized model improvements.}
We further examine whether \textsc{BenchFill} improves a benchmark's ability to detect localized model improvements beyond reducing difficulty gaps. We consider one target gap in each of AIME24, AIME25, and AIME26, and compare three benchmark variants: \texttt{Original}, \texttt{Original + BenchFill}, and \texttt{Original + Baseline}. Both augmentation methods insert exactly one item into the original benchmark. We perform mild SFT so that the model improves around the target gap region. Each item is evaluated with 32 rollouts before and after SFT, and we test whether the gain is statistically detectable using a one-sided stratified randomization test ($p<0.05$).

As shown in Tab.~\ref{tab:localized_detection}, \texttt{Original + BenchFill} detects a positive post-SFT gain in all three cases (3/3), while \texttt{Original} and \texttt{Original + Baseline} detect none (0/3 each). This shows that \textsc{BenchFill} is not merely improving the metric it optimizes; by filling under-covered difficulty regions, it makes the benchmark more sensitive to localized model improvements that would otherwise be missed.

\begin{table*}[t]
\centering
\caption{\textbf{Localized-improvement detection results.}
Each augmentation method adds one item to the original 30-item benchmark.
Gain denotes the change in average accuracy.
The one-sided stratified randomization test randomly reassigns rollout outcomes
between the pre- and post-SFT conditions within each item using 100,000 permutations.
Boldface indicates variants for which a positive gain is detected at $p<0.05$.}
\label{tab:localized_detection}
\small
\setlength{\tabcolsep}{5pt}
\begin{tabular}{llccc}
\toprule
\textbf{Case}
& \textbf{Benchmark variant}
& \textbf{Gain}
& \textbf{One-sided $p$}
& \textbf{Detected} \\
\midrule
\multirow{3}{*}{\shortstack[l]{AIME24\\Olmo-3.1-32B-Think}}
& \texttt{Original}
& +0.104 & 0.500100 & $\times$ \\
& \textbf{\texttt{Original + BenchFill}}
& \textbf{+2.419} & \textbf{0.005940} & $\checkmark$ \\
& \texttt{Original + Baseline}
& +0.202 & 0.458000 & $\times$ \\
\midrule
\multirow{3}{*}{\shortstack[l]{AIME25\\Phi-4}}
& \texttt{Original}
& +1.042 & 0.184100 & $\times$ \\
& \textbf{\texttt{Original + BenchFill}}
& \textbf{+3.730} & \textbf{0.000450} & $\checkmark$ \\
& \texttt{Original + Baseline}
& +0.907 & 0.214200 & $\times$ \\
\midrule
\multirow{3}{*}{\shortstack[l]{AIME26\\Llama-3.1-8B-Instruct}}
& \texttt{Original}
& +0.208 & 0.418400 & $\times$ \\
& \textbf{\texttt{Original + BenchFill}}
& \textbf{+3.427} & \textbf{0.000010} & $\checkmark$ \\
& \texttt{Original + Baseline}
& +0.101 & 0.498000 & $\times$ \\
\bottomrule
\end{tabular}
\end{table*}

\paragraph{Representative examples.}
To provide a qualitative view of the generated gap-filling items, we show representative rewriting examples in Tab.~\ref{tab:gap_filling_examples}. Each example presents a final selected item together with its original ancestor from the benchmark, as well as the ancestor difficulty, target difficulty, and annotated difficulty of the selected item. The examples illustrate how \textsc{BenchFill} modifies existing benchmark items to reach previously under-covered difficulty levels.

\begin{table*}[t]
\centering
\footnotesize
\setlength{\tabcolsep}{2.5pt}
\renewcommand{\arraystretch}{1.15}
\caption{
\textbf{Representative pairs of original benchmark ancestors and final selected items produced by \textsc{BenchFill}.}
All difficulty values are normalized to $[0,1]$.
The main change from the ancestor item to the selected item is highlighted. For selected items, values in parentheses denote the target difficulty.
}
\label{tab:gap_filling_examples}
\begin{tabularx}{\textwidth}{
>{\centering\arraybackslash}p{0.125\textwidth}
>{\hspace{3pt}\raggedright\arraybackslash}p{0.275\textwidth}
>{\centering\arraybackslash}p{0.070\textwidth}
>{\raggedright\arraybackslash}p{0.395\textwidth}
>{\centering\arraybackslash}p{0.090\textwidth}
}
\toprule
\textbf{Benchmark}
& \multicolumn{2}{c}{\textbf{Original benchmark ancestor}}
& \multicolumn{2}{c}{\textbf{Selected gap-filling item}} \\
\cmidrule(lr){2-3}
\cmidrule(lr){4-5}
& \multicolumn{1}{c}{\textbf{Item}}
& \textbf{Diff.}
& \multicolumn{1}{c}{\textbf{Item}}
& \shortstack{\textbf{Diff.}} \\
\midrule

\textbf{AMC23}
&
For how many integers $n$ does the expression $\sqrt{\frac{\log (n^2) - (\log n)^2}{\log n - \colorbox{red!20}{3}}}$ represent a real number, where $\log$ denotes the base-$10$ logarithm?
&
$0.535$
&
Find the number of integer values of $n$ such that $\sqrt{\frac{\log (n^2) - (\log n)^2}{\log n - \colorbox{red!20}{1}}}$ represents a real number, where $\log$ denotes the base-$10$ logarithm.
&
$0.597$ $(0.598)$
\\

\midrule

\textbf{Gaokao2023en}
& On an \colorbox{red!20}{$8 \times 8$ chessboard,} \colorbox{red!20}{6 black rooks} and $k$ white rooks are placed on different cells so that each rook only attacks rooks of the opposite color. Compute the maximum possible value of $k$. (Two rooks attack each other if they are in the same row or column and no rooks are between them.)
&
$0.864$
& On a \colorbox{red!20}{$9 \times 9$ chessboard,} \colorbox{red!20}{7 black rooks} and $k$ white rooks are placed on distinct squares so that each rook attacks only rooks of the opposite color. What is the maximum possible value of $k$? (Two rooks attack each other if they are in the same row or column and there is no rook between them.)
&
$0.886$ $(0.885)$
\\

\midrule

\textbf{AIME25}
&
Alex divides a disk into \colorbox{red!20}{four quadrants} with two perpendicular diameters intersecting at the center of the disk. He draws \colorbox{red!20}{$25$ more line segments} through the disk, each connecting two points at random on the perimeter in \colorbox{red!20}{different quadrants}. Find the expected number of regions into which these \colorbox{red!20}{$27$ line segments} divide the disk.
&
$0.750$
&
A circle has a diameter drawn, splitting it into \colorbox{red!20}{two semicircular regions}. Then \colorbox{red!20}{8 chords} are drawn, each connecting a random point on the boundary in \colorbox{red!20}{one semicircle} to a random point on the boundary in \colorbox{red!20}{the other semicircle}. Assuming all points are chosen uniformly and independently, what is the expected number of regions into which the interior of the circle is divided?
&
$0.674$ $(0.671)$
\\

\bottomrule
\end{tabularx}
\end{table*}


\newpage

\end{document}